\newcommand{\cblu}{\color{blue}}
\def\crp{\rm \text{CR-Prolog}}
\def\crpt{{\rm \text{CR-Prolog}}_2}
\newtheorem{prop}{Proposition}
\newtheorem{lemma}{Lemma}
\newtheorem{thm}{Theorem}
\newtheorem{example}{Example}
\long\def\BOC#1\EOC{\message{(Commented text )}}
\long\def\BOCC#1\EOCC{\message{(Commented text )}}
\long\def\BOCCC#1\EOCCC{\message{(Commented text )}}
\long\def\optional#1{\empty}
\long\def\NBB#1{}
\def\bi{\begin{itemize}}
\def\ii{\item}
\def\ei{\end{itemize}}
\def\beq{\begin{equation}}
\def\eeq#1{\label{#1}\end{equation}}
\def\ba{\begin{array}}
\def\ea{\end{array}}
\def\i#1{\hbox{\it #1\/}}
\def\no{\i{not}}
\def\ar{\leftarrow}
\def\no{\i{not}}
\def\i#1{\hbox{\itshape #1\/}}
\begin{document}

\title{Translating LPOD and $\crpt$ into Standard Answer Set Programs} 

\author[Lee \& Yang]{Joohyung Lee and Zhun Yang \\
School of Computing, Informatics and Decision Systems Engineering \\
Arizona State University, Tempe, USA \\
\email{\{joolee, zyang90\}@asu.edu}}

\maketitle

\begin{abstract}
Logic Programs with Ordered Disjunction (LPOD) is an extension of standard answer set programs to handle preference using the construct of ordered disjunction,  and $\crpt$ is an extension of standard answer set programs with consistency restoring rules and LPOD-like ordered disjunction. We present reductions of each of these languages into the standard ASP language, which gives us an alternative way to understand the extensions in terms of the standard ASP language. 

\noindent
(The paper is under consideration for acceptance in TPLP.)
\end{abstract}

\vspace{-0.5em}

\section{Introduction} \label{sec:intro}

In answer set programming, each answer set encodes a solution to the  problem that is being modeled. There is often a need to express that one solution is preferable to another, so several extensions of answer set programs were made to express a qualitative preference over answer sets.
In Logic Programs with Ordered Disjunction (LPOD) \cite{brewka02logic}, this is done by introducing the construct of ordered disjunction in the head of a rule: $A\times B\ar\i{Body}$ intuitively means, when $\i{Body}$ is true, if possible then $A$, but if $A$ is not possible, then at least $B$.
Proposition~2 from \cite{brewka02logic} states that there is no reduction of LPOD to disjunctive logic programs \cite{gel91b} based on the fact that the answer sets of disjunctive logic programs are subset-minimal whereas LPOD answer sets are not necessarily so. However, this justification is limited to translations that preserve the underlying signature, and it remained an open question if it is possible to turn LPOD into the language of standard ASP such as ASP-Core 2 \cite{calimeri12aspcore2} by using auxiliary atoms. In this paper, we provide a positive answer to this question. 

We present a reduction of LPOD to standard answer set programs by compiling away ordered disjunctions. The translation gives us an alternative way to understand the semantics of LPOD in terms of the standard ASP language, and more generally, a method to express preference relations among answer sets. 
Instead of iterating the generator and the tester programs as in \cite{brewka02implementing}, our reduction is one-pass: the preferred answer sets can be computed by calling an answer set solver one time. 
 
It turns out that the translation idea is not restricted to LPOD but also applies to $\crpt$ \cite{balduccini04a-prolog}, which not only has a construct similar to ordered disjunction in LPOD but also inherits the construct of consistency-restoring rules---rules that can be added to make inconsistent programs to be consistent---from $\crp$ \cite{balduccini03logic}.
With some modifications to the LPOD translation, we show that $\crpt$ programs can also be turned into standard answer set programs by compiling away both ordered disjunctions and consistency-restoring rules.

\BOCC
In~\cite{brewka02implementing}, LPOD is implemented using {\sc smodels}. The implementation interleaves the execution of two programs--a generator which produces candidate answer sets and a tester which checks whether a given candidate answer set is maximally preferred or produces a more preferred candidate if it is not. An implementation of $\crp$ reported in~\cite{balduccini07cr-models} uses a similar algorithm. On the other hand, the reductions shown in this paper is one-pass and can be computed by calling an answer set solver one time without the need for iterating the generator and the tester.
\EOCC

The paper is organized as follows. Section~\ref{sec:lpod2asp} reviews  LPOD and presents a translation that turns LPOD into standard answer set programs. Section~\ref{sec:crp2asp} reviews $\crpt$ and presents a translation that turns $\crpt$ into standard answer set programs. The complete proofs are in the appendix.

\section{LPOD to ASP with Weak Constraints} \label{sec:lpod2asp}
\subsection{Review: LPOD} \label{subsec:review:lpod}

We review the definition of LPOD by~\citeN{brewka02logic}. As in that paper, for simplicity, we assume the underlying signature is propositional. 

\medskip
\noindent{\bf Syntax: }\ \ 
A (propositional) LPOD $\Pi$ is $\Pi_{reg}\cup \Pi_{od}$, where 
its {\em regular part} $\Pi_{reg}$ consists of usual ASP rules
$
\i{Head} \ar \i{Body}
$, 
and its {\em ordered disjunction part} $\Pi_{od}$ consists of {\em LPOD rules} of the form
\beq
    C^1\times\dots \times C^n \leftarrow \i{Body}  
\eeq{equ:lpodrule}
in which $C^i$ are atoms, $n$ is at least $2$, and $\i{Body}$ is a conjunction of atoms possibly preceded by $\no$.\footnote{In \cite{brewka02logic}, a usual ASP rule is viewed as a special case of a rule with ordered disjunction when $n=1$ but in this paper, we distinguish them. This simplifies the presentation of the translation and also allows us to consider LPOD that are more general than the original definition by allowing modern ASP constructs such as aggregates. }
Rule (\ref{equ:lpodrule}) intuitively says ``when $\i{Body}$ is true, if possible then $C^1$; if $C^1$ is not possible then $C^2$;  \dots; if all of $C^1,\dots, C^{n-1}$ are not possible then $C^n$.'' 

\medskip
\noindent{\bf Semantics: }\ \ 
For an LPOD rule \eqref{equ:lpodrule}, its {\em $i$-th option} ($i=1,\dots, n$) is defined as 
$$
    C^i\leftarrow \i{Body}, \no\ C^1, \dots, \no\ C^{i-1}.
$$

A {\em split program} of an LPOD $\Pi$ is obtained from $\Pi$ by replacing each rule in $\Pi_{od}$ by one of its options. A set $S$ of atoms is a {\em candidate answer set} of $\Pi$ if it is an answer set of a split program of~$\Pi$.

\begin{example} \label{ex:abcd}
(From~\cite{brewka02logic}) The following LPOD $\Pi_1$,
\[
\ba {rcl}
a \times b & \leftarrow & \no\ c \\
b \times c & \leftarrow & \no\ d,
\ea
\]
has four split programs:
\beq
\ba {lcl}
a \leftarrow \no\ c & \hspace{2cm} &  a \leftarrow \no\ c \\
b \leftarrow \no\ d &  & c \leftarrow \no\ d, \no\ b \\
&\\[-0.5em]
b \leftarrow \no\ c, \no\ a &  & b \leftarrow \no\ c, \no\ a \\
b \leftarrow \no\ d &   & c \leftarrow \no\ d, \no\ b .
\ea
\eeq{ex1-split}
Each of them has the following answer sets respectively, which are the candidate answer sets of~$\Pi_1$. 
\[
\ba {lcl}
\{ a,b \} & \hspace{2.5cm} & \{ c \} \\
&\\[-0.5em]
\{ b \} &  & \{ b \}, \{ c \}  .
\ea
\]
\end{example}


A candidate answer set $S$ of $\Pi$ is said to {\em satisfy} rule \eqref{equ:lpodrule} 
\bi
\ii to degree $1$ if $S$ does not satisfy $\i{Body}$, and
\ii to degree $j$ ($1 \leq j \leq n$) if $S$ satisfies $\i{Body}$ and $j = min\{ k \mid C^k \in S \}$. 
\ei


When $\Pi_{od}$ contains $m$ LPOD rules, the {\em satisfaction degree list} of a candidate answer set $S$ of $\Pi$ is $(d_1,\dots,d_m)$ where $d_i$ is the degree to which $S$ satisfies rule $i$ in $\Pi_{od}$.
For a candidate answer set $S$, let $S^i(\Pi)$ denote the set of rules in $\Pi_{od}$ satisfied by $S$ to degree $i$. For candidate answer sets $S_1$ and $S_2$ of $\Pi$, \citeN{brewka05preferences} introduces the following four preference criteria. 
\begin{enumerate}
\item  {\bf Cardinality-Preferred: } 
$S_1$ is {\em cardinality-preferred} to $S_2$ ($S_1 >^c S_2$) if there is a positive integer $i$ such that $|S^i_1(\Pi)| > |S^i_2(\Pi)|$, and $|S^j_1(\Pi)| = |S^j_2(\Pi)|$ for all $j<i$.

\item
{\bf Inclusion-Preferred: }
$S_1$ is {\em inclusion-preferred} to $S_2$ ($S_1 >^i S_2$) if there is a positive integer $i$ such that $S^i_2(\Pi) \subset S^i_1(\Pi)$, and $S_1^j(\Pi) = S_2^j(\Pi)$ for all $j < i$.

\item
{\bf Pareto-Preferred: }
$S_1$ is {\em Pareto-preferred} to $S_2$ ($S_1 >^p S_2$) if there is a rule that is satisfied to a lower degree in $S_1$ than in $S_2$, and there is no rule that is satisfied to a lower degree in $S_2$ than in $S_1$.

\item
{\bf Penalty-Sum-Preferred: }
$S_1$ is {\em penalty-sum-preferred} to $S_2$ ($S_1 >^{ps} S_2$) if the sum of the satisfaction degrees of all rules is smaller in $S_1$ than in $S_2$.

\end{enumerate}

A candidate answer set $S$ of $\Pi$ is a {\em k-preferred} ($k\in \{ c,i,p,ps \}$) {\em answer set} if there is no candidate answer set $S'$ of $\Pi$ such that $S' >^k S$.

\medskip
\noindent{\sl Example \ref{ex:abcd} (Continued) }\\ 
Recall that $\Pi_1$ has three candidate answer sets: $\{ a,b\}$, $\{ b \}$, and $\{ c \}$. Their satisfaction degree lists are  (1,1), (2,1), and (1,2), respectively.
One can check that $\{ a,b\}$ is the only preferred answer set according to any of the four preference criteria.

\begin{example} \label{ex:hotel}
To illustrate the difference among the four preference criteria, consider the following LPOD $\Pi_2$ about picking a hotel near the Grand Canyon. $hotel(1)$ is a 2 star hotel but is close to the Grand Canyon, $hotel(2)$ is a 3 star hotel and the distance is medium, and $hotel(3)$ is a 4 star hotel but is too far.

\begin{minipage}[c]{0.5\textwidth}
\[
\ba {rl}
& close \times med \times far \times tooFar \\
& star4 \times star3 \times star2 \\
& 1\{hotel(X): X=1..3\}1 \\
& \bot \leftarrow hotel(1),\ \no\ close \\
& \bot \leftarrow hotel(1),\ \no\ star2 \\
\ea
\]
\end{minipage}
\begin{minipage}[c]{0.5\textwidth}
\[
\ba {rl}
& \bot \leftarrow hotel(2),\ \no\ med \\
& \bot \leftarrow hotel(2),\ \no\ star3 \\
& \bot \leftarrow hotel(3),\ \no\ tooFar \\
& \bot \leftarrow hotel(3),\ \no\ star4   \\ \\
\ea
\]
\end{minipage}

\smallskip\noindent
$\Pi_2$ has $4 \times 3$ split programs but only the following three programs are consistent (The regular part of $\Pi_2$ is not listed).
\[
\ba {lcl}
close &  \hspace{1cm} & med \leftarrow \no\ close \\
star2 \leftarrow \no\ star4, \no\ star3 &   & star3 \leftarrow \no\ star4 \\ 
&\\[-0.5em]
tooFar \leftarrow \no\ close, \no\ med, \no\ far &\\
star4&
\ea
\]
The candidate answer sets of $\Pi_2$ and their satisfaction degree lists are
\[
\ba {ll}
S_1=\{ hotel(1), close, star2, \dots \}, (1, 3) & S_2=\{ hotel(2), med, star3, \dots \}, (2, 2) \\
S_3=\{ hotel(3), tooFar, star4, \dots \}, (4, 1) &
\ea
\]
By definition, the cardinality-preferred answer set is $S_1$, the inclusion-preferred answer sets are $S_1$ and $S_3$, the Pareto-preferred answer sets are $S_1$, $S_2$ and $S_3$, while the penalty-sum-preferred answer sets are $S_1$ and $S_2$.
\end{example}


\subsection{An Alternative Way to Generate Candidate Answer Sets: Assumption Programs} \label{subsec:assumptionP}

Before we describe the translation of LPOD into standard answer set programs, we consider an alternative way to generate candidate answer sets together with their ``assumption degrees," which serves as a basis of our translation.

Let $\Pi$ be an LPOD with $m$ LPOD rules. For an LPOD rule $i$ ($i \in \{ 1,\dots,m \}$)
\beq
C^1_i \times \dots \times C^{n_i}_i \ar \i{Body}_i \ ,
\eeq{eq:lpod}
its {\em $x$-th assumption} ($x\in\{0, \dots, n_i\}$), 
denoted by $O_i(x)$, is defined as the set of ASP rules
\begin{align}
body_i & ~~\leftarrow~~ \i{Body}_i \label{hard:lpod:body} \\
\bot & ~~\leftarrow~~ x = 0,\ body_i \label{hard:lpod:1} \\
\bot & ~~\leftarrow~~ x > 0,\ \no\ body_i \label{hard:lpod:2}\\
C^{j}_i & ~~\leftarrow~~ body_i,\ x=j & \text{(for $1\leq j \leq n_i$)} \label{hard:lpod:3} \\
\bot & ~~\leftarrow~~ body_i,\ x\ne j, \no\ C^1_i, \dots, \no\ C^{j-1}_i, \ C^j_i  & \text{(for $1\leq j \leq n_i$)} \label{hard:lpod:4}
\end{align}
where $\i{body}_i$ is a new, distinct  atom for each LPOD rule $i$.
Rules~\eqref{hard:lpod:body}---\eqref{hard:lpod:2} ensure that the body of \eqref{eq:lpod} is false iff $x=0$. 
Rule (\ref{hard:lpod:3}) represents that $C_i^x$ is true under the $x$-th assumption, and rule (\ref{hard:lpod:4}) ensures that all atoms $C^1_i, \dots, C_i^{x-1}$ are false. The last two rules together tells us that the first atom in $C_i^1, \dots, C_i^{n_i}$ that is true is $C_i^x$. 
%
The reason we call rules~\eqref{hard:lpod:body}---\eqref{hard:lpod:4} the $x$-th assumption is because they encode a certain assumption imposed on rule~\eqref{eq:lpod} in deriving each candidate answer set: 
$x=0$ assumes $\i{Body}_i$ is false, whereas $x>0$ assumes $\i{Body}_i$ is true and the $x$-th atom in the head is to be derived.

An {\em assumption program} of an LPOD $\Pi$ is obtained from $\Pi$ by replacing each rule in $\Pi_{od}$ by one of its assumptions.
If each LPOD rule $i$ is replaced by its $x_i$-th assumption, we call $(x_1,\dots,x_m)$ the {\em assumption degree list} of the assumption program. 

The following proposition asserts that the candidate answer sets can be obtained from assumption programs instead of split programs. 

\begin{prop} \label{prop:lpod2asp:hardsplit}
For any LPOD $\Pi$ of $\sigma$ and any set $S$ of atoms of $\sigma$,
$S$ is a candidate answer set of $\Pi$ iff $S\cup \{ body_i \mid$ $S$ satisfies the body of rule $i$ in $\Pi_{od} \}$ is an answer set of some assumption program of $\Pi$.
\end{prop}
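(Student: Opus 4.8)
The plan is to establish the biconditional by relating each split program to a corresponding assumption program, rule by rule. Fix an LPOD $\Pi$ with $m$ LPOD rules, and fix a set $S$ of atoms of $\sigma$. Write $\widehat{S} = S \cup \{ body_i \mid S \text{ satisfies } \i{Body}_i \}$. The key observation is that $\widehat{S}$ is the \emph{only} candidate extension of $S$ to the enlarged signature that can possibly be an answer set of an assumption program, since rules \eqref{hard:lpod:body}--\eqref{hard:lpod:2} force $body_i$ to hold exactly when $\i{Body}_i$ does, regardless of which assumption degree $x_i$ is chosen. So the content of the proposition is: $S$ is an answer set of some split program of $\Pi$ iff $\widehat{S}$ is an answer set of some assumption program of $\Pi$.

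First I would prove the forward direction. Suppose $S$ is an answer set of a split program $P$ in which LPOD rule $i$ was replaced by its $j_i$-th option $C^{j_i}_i \ar \i{Body}_i, \no\ C^1_i, \dots, \no\ C^{j_i-1}_i$. Define the assumption degree $x_i$ by: $x_i = 0$ if $S$ does not satisfy $\i{Body}_i$, and $x_i = j_i$ if $S$ does satisfy $\i{Body}_i$. Let $P'$ be the assumption program obtained with degree list $(x_1,\dots,x_m)$. I would then argue $\widehat{S}$ is an answer set of $P'$ by comparing reducts: since the truth values of all $body_i$ and of all constraint bodies involving the fixed integers $x_i$ are determined, the reduct of $P'$ with respect to $\widehat{S}$ simplifies — the constraints \eqref{hard:lpod:1}, \eqref{hard:lpod:2}, \eqref{hard:lpod:4} are either trivially satisfied or vacuous, rule \eqref{hard:lpod:3} contributes $C^{j_i}_i \ar body_i$ exactly when $x_i = j_i > 0$, and \eqref{hard:lpod:body} contributes $body_i \ar \i{Body}_i$ — and one checks this reduct has $\widehat{S}$ as its minimal model iff the reduct of $P$ has $S$ as its minimal model, using that $\widehat{S}$ satisfies $\no\ C^1_i, \dots, \no\ C^{j_i-1}_i$ precisely because \eqref{hard:lpod:4} with $x_i = j_i$ rules out any smaller-index atom being true.

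For the converse, suppose $\widehat{S}$ is an answer set of an assumption program $P'$ with degree list $(x_1,\dots,x_m)$. From rules \eqref{hard:lpod:body}--\eqref{hard:lpod:2} I first note $x_i = 0$ forces $\i{Body}_i$ false in $\widehat{S}$ (hence in $S$), and $x_i > 0$ forces $\i{Body}_i$ true; moreover when $x_i > 0$, rule \eqref{hard:lpod:3} puts $C^{x_i}_i \in \widehat{S}$ and rule \eqref{hard:lpod:4} (instantiated at each $j = x_i$ — wait, at $j \ne x_i$; in particular nothing forces the lower atoms false directly, but) — here I must be careful: \eqref{hard:lpod:4} only fires for $j \ne x_i$, so it forbids $C^j_i \in \widehat{S}$ when all of $C^1_i,\dots,C^{j-1}_i$ are absent and $j \ne x_i$; combined with $C^{x_i}_i \in \widehat{S}$ this indeed yields that $x_i$ is the least index with $C^{x_i}_i \in S$. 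I would then pick the split program $P$ that replaces rule $i$ by its $j_i$-th option where $j_i = x_i$ if $x_i > 0$ and $j_i$ is arbitrary (say $1$) if $x_i = 0$ — noting that when $x_i = 0$, $\i{Body}_i$ is false in $S$ so the option's body is false and the choice of $j_i$ is immaterial. A reduct comparison symmetric to the one above then shows $S$ is an answer set of $P$.

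The main obstacle is the bookkeeping in the reduct comparison, and in particular handling rule \eqref{hard:lpod:4} correctly: one has to verify that even though \eqref{hard:lpod:4} is only instantiated for $j \ne x_i$, the combination of \eqref{hard:lpod:3} (which supplies $C^{x_i}_i$) and all the $j \ne x_i$ instances of \eqref{hard:lpod:4} pins down $x_i = \min\{k \mid C^k_i \in S\}$, so that the assumption degree list coincides with the satisfaction degree list and the option-body $\no\ C^1_i,\dots,\no\ C^{j_i-1}_i$ is satisfied by $S$ exactly in the split program corresponding to the chosen assumption program. A secondary subtlety is confirming that no atom of the extended signature other than the intended $body_i$'s can appear in an answer set of an assumption program, which follows since $body_i$ occurs in no rule head except \eqref{hard:lpod:body}; this is what licenses restricting attention to $\widehat{S}$ and makes the correspondence a genuine bijection between (consistent) split programs' answer sets and (consistent) assumption programs' answer sets.
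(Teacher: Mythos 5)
There is a genuine gap in your forward direction: the assumption degrees you choose are wrong. You set $x_i = j_i$, the index of the option used in the split program, whenever $S$ satisfies $\i{Body}_i$. But the same candidate answer set can arise from a split program whose option body is \emph{not} satisfied by $S$: then some $C^k_i$ with $k < j_i$ is in $S$, the option rule is inert, and $C^{j_i}_i$ need not be in $S$ at all. With your choice of $x_i$, rule \eqref{hard:lpod:3} of the assumption program (which forces $C^{x_i}_i$ whenever $body_i$ holds) and the instance of \eqref{hard:lpod:4} at the true lower-index atom are then violated, so $\widehat{S}$ is not an answer set of the assumption program you constructed. Concretely, in Example~\ref{ex:abcd} the set $\{b\}$ is an answer set of the split program with options $(2,2)$ (rules $b \ar \no\ c, \no\ a$ and $c \ar \no\ d, \no\ b$); your recipe gives $x_1=x_2=2$, but $O_1(2)\cup O_2(2)$ is inconsistent (it forces $c$ and violates the constraint \eqref{hard:lpod:4} for $j=1$ of rule~2), whereas the correct assumption program for $\{b\}$ is $O_1(2)\cup O_2(1)$. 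Your justification that ``$\widehat{S}$ satisfies $\no\ C^1_i,\dots,\no\ C^{j_i-1}_i$ because \eqref{hard:lpod:4} rules out smaller-index atoms'' is circular: \eqref{hard:lpod:4} is a constraint of the target program and says nothing about $S$, which is an answer set of a split program and genuinely may contain a smaller-index head atom.

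The fix is what the paper does: when $S \vDash \i{Body}_i$, set $x_i = \min\{k \mid C^k_i \in S\}$, i.e., the satisfaction degree, which is independent of which split program witnessed $S$ (this independence is exactly why assumption programs are introduced). One must then reconcile $x_i$ with the split program's option index $k_i$: the paper shows $k_i < x_i$ is impossible and treats $k_i = x_i$ and $k_i > x_i$ separately (in the latter case the option rule has a false body and the rule $C^{x_i}_i \ar body_i$ can be added because $C^{x_i}_i \in S$), using its Lemmas~\ref{lem:1}--\ref{lem:2} for the rule-by-rule reduct bookkeeping. Your converse direction is essentially correct and matches the paper (take $k_i = x_i$ for $x_i>0$ and $k_i=1$ arbitrarily for $x_i=0$, with \eqref{hard:lpod:3} and the $j \ne x_i$ instances of \eqref{hard:lpod:4} pinning down $x_i$ as the least true head index), as is your observation that $body_i$ is the only auxiliary atom that can appear; but as written the forward direction does not go through.
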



\noindent{\sl Example \ref{ex:abcd} (Continued) }\ \ 
The assumptions for rule
$
  a \times b  \leftarrow  \no\ c ,
$
denoted by $O_1(X_1)$, and the assumptions for rule
$
  b \times c  \leftarrow  \no\ d ,
$
denoted by $O_2(X_2)$ are as follows, where $X_1$ and $X_2$ range over $\{0,1,2 \}$.
\[
{\small
\ba {lrcllrcl}
O_1(X_1): & body_1 &\!\!\leftarrow\!\!& \no\ c & O_2(X_2): & body_2 &\!\leftarrow\!& \no\ d \\
& \bot &\!\leftarrow\!& X_1 = 0, body_1 &  & \bot &\!\leftarrow\!& X_2 = 0, body_2 \\
& \bot &\!\leftarrow\!& X_1 > 0, \no\ body_1 &  & \bot &\!\leftarrow\!& X_2 > 0, \no\ body_2 \\
& a &\!\leftarrow\!& body_1, X_1\!=\!1 & & b &\!\leftarrow\!& body_2, X_2\!=\!1 \\
& b &\!\leftarrow\!& body_1, X_1\!=\!2 & & c &\!\leftarrow\!& body_2, X_2\!=\!2 \\
& \bot &\!\leftarrow\!& body_1, X_1\ne 1, a & & \bot &\!\leftarrow\!& body_2, X_2\ne 1, b \\
& \bot &\!\leftarrow\!& body_1, X_1\ne 2, \no\ a, b & & \bot &\!\leftarrow\!& body_2, X_2\ne 2, \no\ b, c
\ea
}
\]

$\Pi_1$ has 9 assumption programs, 
\[
{\small
\ba {llll}
O_1(0) \cup O_2(0) && ~O_1(0) \cup O_2(1) & \fbox{$O_1(0) \cup O_2(2)$}, \{ c \} \\
O_1(1) \cup O_2(0) && \fbox{$O_1(1) \cup O_2(1)$}, \{ a,b \} & ~O_1(1) \cup O_2(2) \\
O_1(2) \cup O_2(0) && \fbox{$O_1(2) \cup O_2(1)$}, \{ b \} & ~O_1(2) \cup O_2(2), 
\ea
}
\]
among which the three assumption programs in the boxes are consistent. Their answer sets are shown together. 

An advantage of considering assumption programs over split programs is that the satisfaction degrees---a basis of comparing the candidate answer sets---can be obtained from the assumption degrees with a minor modification (Section~\ref{sssec:cand}).
This is in part because each candidate answer set is obtained from only one assumption program whereas the same candidate answer set can be obtained from multiple split programs (e.g., $\{b\}$ in Example~\ref{ex:abcd}). 

%


\subsection{Turning LPOD into Standard Answer Set Programs} \label{subsec:lpod2asp}

We define a translation ${\sf lpod2asp}(\Pi)$ that turns an LPOD $\Pi$ into a standard answer set program. 

Let $\Pi$ be an LPOD of signature $\sigma$ where $\Pi_{od}$ contains $m$ propositional rules with ordered disjunction:
\begin{align}
&1: & C^{1}_1 \times \dots \times C^{n_1}_1 &\ \ \leftarrow\ \  \i{Body}_1 &&\nonumber \\
&&& \dots &&\label{program:lpod}\\
&m: & C^{1}_m \times \dots \times C^{n_m}_m &\ \ \leftarrow\ \  \i{Body}_m &&\nonumber
\end{align}
where 
$1,\dots, m$ are rule indices, and 
$n_i \geq 2$ for $1 \leq i \leq m$.


The first-order signature $\sigma'$ of ${\sf lpod2asp}(\Pi)$ 
contains $m$-ary predicate constant $a/m$ for each propositional constant $a$ of $\sigma$.
Besides, $\sigma'$ contains the following predicate constants not in $\sigma$: 
$ap/m$
(``assumption program''), $\i{degree}/(m+1)$, $body_i/m$  ($i \in \{1, \dots, m\}$), $\i{prf}/2$ (``preferred''), and $pAS/m$ (``preferred answer set''). Furthermore, $\sigma'$ contains the following predicate constants according to each preference criterion: 

\begin{itemize}
\item  for cardinality-preferred:  $card/3$, $equ2degree/3$, $prf2degree/3$
 
\item  for inclusion-preferred: $even/1$, $equ2degree/3$, $prf2degree/3$

\item  for Pareto-preferred:  $equ/2$
 
\item  for penalty-sum-preferred: $sum/2$.
\end{itemize}
 
\subsubsection{Generate Candidate Answer Sets}\label{sssec:cand}

The first part of the translation ${\sf lpod2asp}(\Pi)$ is to generate all candidate answer sets of $\Pi$ based on the notion of assumption programs.
We use the assumption degree list as a ``name space'' for each candidate answer set, so that we can compare them in a single answer set program.



\medskip
\noindent
{\bf 1.}\ \  
We use atom $ap(x_1,\dots,x_m)$ to denote the assumption program whose assumption degree list is $(x_1,\dots, x_m)$. We consider all consistent assumption programs by generating a maximal set of $ap(\cdot)$ atoms: 
$ap(x_1,\dots,x_m)$ is included in an optimal answer set \footnote{For programs containing weak constraints, an optimal answer set is defined by the penalty that comes from the weak constraints that are violated. \cite{calimeri12aspcore2}} iff the assumption program denoted by $ap(x_1,\dots,x_m)$ is consistent. 
\begin{flalign}
& \{ ap(X_1,\dots,X_m): \ X_1=0..n_1,\ \dots \ , X_m=0..n_m \}. && \label{equ:lpod:choice} \\
& :\sim ap(X_1,\dots,X_m).\ \ \left[ -1, X_1, \dots, X_m \right] && \label{equ:lpod:wc} 
\end{flalign}
Rule (\ref{equ:lpod:choice}) generates an arbitrary subset of $ap(\cdot)$ atoms, each of which records an assumption degree list.
Rule (\ref{equ:lpod:wc}) is a weak constraint that maximizes the number of $ap(\cdot)$ atoms by adding the penalty $-1$ for each true instance of $ap(X_1,\dots,X_m)$.
Together with the rules below, these rules ensure that we consider all assumption programs that are consistent and that no candidate answer sets are missed in computing preference relationship in the second part of the translation.

\medskip
\noindent
{\bf 2.}\ \  We extend each atom to include the assumption degrees $X_1,\dots,X_m$, and append atom $ap(X_1,\dots,X_m)$ in the bodies of rules. 
 
\begin{itemize}
\item
For each rule
$
\i{Head} \ar \i{Body}
$
in $\Pi_{reg}$, ${\sf lpod2asp}(\Pi)$ contains
\begin{align}
 \i{Head}(X_1, \dots, X_m) \leftarrow~~& ap(X_1,\dots,X_m), 
                     \i{Body}(X_1, \dots, X_m) 
\label{equ:lpod:regular}
\end{align}
where $\i{Head}(X_1,\dots,X_m)$ and $\i{Body}(X_1,\dots,X_m)$ are obtained from $\i{Head}$ and $\i{Body}$ by replacing each atom $A$ in them with $A(X_1,\dots,X_m)$. Each schematic variable $X_i$ ranges over $\{ 0,\dots,n_i \}$.

\item
For each rule
\[
C^1_i \times \dots \times C^{n_i}_i \ar \i{Body}_i 
\]
in $\Pi_{od}$, where $n\geq 2$, ${\sf lpod2asp}(\Pi)$ contains
\begin{align}
 body_i(X_1, \dots, X_m) \leftarrow~~ & ap(X_1,\dots,X_m),  
                                                              \i{Body}_i(X_1, \dots, X_m)
&&\label{equ:lpod:body} \\
\bot \leftarrow~~ & ap(X_1,\dots,X_m),\ X_i = 0,\ body_i(X_1, \dots, X_m) && \label{equ:lpod:xi=0} \\
\bot \leftarrow~~ & ap(X_1,\dots,X_m),\ X_i > 0,\ \no\ body_i(X_1, \dots, X_m). && \label{equ:lpod:xi!=0} 
\end{align}
And for $1\leq j\leq n_i$, ${\sf lpod2asp}(\Pi)$ contains
\beq
\ba l
C^j_i(X_1, \dots, X_m) \leftarrow~~ body_i(X_1, \dots, X_m), X_i = j. 
\ea 
\eeq{equ:lpod:ck} 
\beq 
\ba l
\bot\leftarrow~~  body_i(X_1, \dots, X_m), X_i \ne j,\    \\
~~~~~~~~~~~~~~\no\ C^1_i(X_1, \dots, X_m), \dots, \no\ C^{j-1}_i(X_1, \dots, X_m), C^j_i(X_1, \dots, X_m).
\ea
\eeq{equ:lpod:ckconstraint}
\end{itemize}


\medskip
\noindent
{\bf 3.}\ \ The satisfaction degree list can be obtained from the assumption degree list encoded in $ap(x_1,\dots,x_m)$ by changing $x_i$ to $1$ if it was $0$. For this,  ${\sf lpod2asp}(\Pi)$ contains
%
\begin{align}
& 1\{ \i{degree}(ap(X_1,\dots,X_m), D_1, \dots, D_m):\   D_1=1..n_1, \dots, D_m=1..n_m \}1 && \nonumber \\
&\hspace{8cm} \leftarrow ap(X_1,\dots,X_m). && \label{equ:lpod:onedegree}
\end{align}
and for $1\leq i \leq m$, ${\sf lpod2asp}(\Pi)$ contains
\begin{align}
& \bot ~\leftarrow~ \i{degree}(ap(X_1,\dots,X_m), D_1, \dots, D_m),\ X_i = 0,\ D_i\ne 1. && \label{equ:lpod:degreenotbody} \\
& \bot ~\leftarrow~ \i{degree}(ap(X_1,\dots,X_m), D_1, \dots, D_m),\ X_i >0,\ D_i\ne X_i.  && \label{equ:lpod:degreebody}
\end{align}
Since all answer sets of the same assumption program are associated with the same satisfaction degree list, we say an assumption program {\em satisfies} LPOD rule $i$ to degree $d$ if its answer sets satisfy the rule to degree $d$. 
Rule \eqref{equ:lpod:onedegree} reads ``for any assumption program $ap(x_1,\dots,x_m)$, it has exactly one assignment of satisfaction degrees $D_1,\dots, D_m$.'' 
Rules \eqref{equ:lpod:degreenotbody} and \eqref{equ:lpod:degreebody} say that the assumption program $ap(x_1,\dots,x_m)$ satisfies LPOD rule $i$ to degree 1 if $x_i = 0$ (in which case $\i{Body}_i$ is false) and to degree $x_i$ if $x_i > 0$ (in which case $\i{Body}_i$ is true).


\smallskip

Let us denote the set of rules \eqref{equ:lpod:choice}---\eqref{equ:lpod:degreebody}
by ${\sf lpod2asp}(\Pi)_{base}$. 
Observe that the atoms $a({\bf v})$ in the original signature $\sigma$ are in the form of $a({\bf v}, x_1,\dots,x_m)$ in the answer sets of ${\sf lpod2asp}(\Pi)_{base}$.
We define a way to retrieve the candidate answer set of $\Pi$ by removing $x_1,\dots,x_m$ as follows.
Let $S$ be an optimal answer set of ${\sf lpod2asp}(\Pi)_{base}$, and let 
\[
   shrink(S, x_1, \dots, x_m) \text{ be } 
   \{ a({\bf v}) \mid 
     a({\bf v}, x_1, \dots, x_m) \in S \text{ and } a({\bf v})\in \sigma  \}.
\]
If $S\models ap(x_1,\dots,x_m)$, we define the set $shrink(S, x_1, \dots, x_m)$ as a {\em candidate answer set on $\sigma$} of ${\sf lpod2asp}(\Pi)_{base}$.\footnote{We also apply this notation to the full translation ${\sf lpod2asp}(\Pi)$ and ${\sf crp2asp}(\Pi)$ below.}  
%


\BOCC
Let $\Pi$ be a translation (e.g., (${\sf lpod2asp}(\Pi)_{base}$) of $\Pi$; let $S$ be an optimal answer set of $\Pi'$; let $x_1, \dots, x_m$ be a list of integers such that $x_i \in \{ 0,\dots, n_i\}$; and let
\[
shrink(S, x_1, \dots, x_m) \text{ be } \{ a({\bf v}) \mid a({\bf v}, x_1, \dots, x_m) \in S \text{ and } a({\bf v})\in \sigma  \}.
\]
If $S \models ap(x_1,\dots,x_m)$, we define the set $shrink(S, x_1, \dots, x_m)$ as a {\em candidate answer set on $\sigma$} of $\Pi'$. 
\EOCC

The following proposition asserts the soundness of the translation ${\sf lpod2asp}(\Pi)_{base}$.

\begin{prop} \label{prop:lpod2asp:base}
The candidate answer sets of an LPOD $\Pi$ of signature $\sigma$ are exactly the candidate answer sets on $\sigma$ of ${\sf lpod2asp}(\Pi)_{base}$.
\end{prop}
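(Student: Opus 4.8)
The plan is to reduce the claim to Proposition~\ref{prop:lpod2asp:hardsplit} by recognizing that an optimal answer set of ${\sf lpod2asp}(\Pi)_{base}$ is nothing but a disjoint bundle of \emph{tagged} copies of the answer sets of all \emph{consistent} assumption programs of $\Pi$, decorated with the bookkeeping atoms $ap(\cdot)$ and $\i{degree}(\cdot)$. Fix a tuple $\vec{x}=(x_1,\dots,x_m)$ with $x_i\in\{0,\dots,n_i\}$ and let ${\sf lpod2asp}(\Pi)_{\vec x}$ be the set consisting of the fact $ap(\vec x)$ together with all ground instances of rules \eqref{equ:lpod:regular}--\eqref{equ:lpod:ckconstraint} in which the schematic variables $X_1,\dots,X_m$ are instantiated to $x_1,\dots,x_m$. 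Inspecting the rules shows that, modulo the atom renaming $a({\bf v})\mapsto a({\bf v},\vec x)$ and $body_i\mapsto body_i(\vec x)$, the program ${\sf lpod2asp}(\Pi)_{\vec x}$ is exactly $ap(\vec x)$ adjoined to the assumption program of $\Pi$ with assumption degree list $\vec x$: \eqref{equ:lpod:xi=0}--\eqref{equ:lpod:xi!=0} instantiate to the copies of \eqref{hard:lpod:1}--\eqref{hard:lpod:2} (using that $ap(\vec x)$ is a fact), and \eqref{equ:lpod:ck}, \eqref{equ:lpod:ckconstraint} instantiate to the copies of \eqref{hard:lpod:3}, \eqref{hard:lpod:4}. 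Since the renaming is a bijection on atoms and $ap(\vec x)$ is a fact, the assumption program with degree list $\vec x$ has an answer set $U$ iff ${\sf lpod2asp}(\Pi)_{\vec x}$ has the answer set (the $\vec x$-tagged copy of $U$) $\cup\,\{ap(\vec x)\}$.

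\textbf{Gluing the copies.}
Next I would put the pieces together by a splitting argument, with the choice rule \eqref{equ:lpod:choice} (and, for optimality, the weak constraint \eqref{equ:lpod:wc}) as the bottom. The atoms $a({\bf v},\vec x)$ and $body_i(\vec x)$ occur only inside ${\sf lpod2asp}(\Pi)_{\vec x}$, so for distinct tuples the programs ${\sf lpod2asp}(\Pi)_{\vec x}$ are mutually independent modules sitting above the bottom; the rules \eqref{equ:lpod:onedegree}--\eqref{equ:lpod:degreebody} form one more module whose only inputs are the $ap(\cdot)$ atoms and which, for each true $ap(\vec x)$, contributes exactly one atom $\i{degree}(ap(\vec x),\dots)$---the unique one allowed by \eqref{equ:lpod:degreenotbody}--\eqref{equ:lpod:degreebody}---and never introduces inconsistency. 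Observe also that every rule of the $\vec x$-module fires only when $ap(\vec x)$ is true (directly, or through $body_i(\vec x)$, which requires \eqref{equ:lpod:body}). Hence, for a set $A$ of $ap(\cdot)$ atoms guessed by \eqref{equ:lpod:choice}, the program above the bottom is consistent iff the assumption program with degree list $\vec x$ is consistent for every $\vec x$ with $ap(\vec x)\in A$. The weak constraint \eqref{equ:lpod:wc} assigns penalty $-|A|$, so an answer set is optimal iff $A$ is $\subseteq$-maximal among guesses all of whose components are consistent; because the modules are independent this maximum is attained uniquely at $A^{*}:=\{ap(\vec x)\mid\text{the assumption program with degree list }\vec x\text{ is consistent}\}$.

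\textbf{Conclusion.}
Combining the two steps, the optimal answer sets of ${\sf lpod2asp}(\Pi)_{base}$ are precisely the sets $S$ obtained by choosing, for every $\vec x$ with $ap(\vec x)\in A^{*}$, some answer set $U_{\vec x}$ of the assumption program with degree list $\vec x$, forming its $\vec x$-tagged copy, and adjoining $A^{*}$ and the forced $\i{degree}$ atoms. For such an $S$ and any $\vec x$ with $S\models ap(\vec x)$, we have $\vec x\in A^{*}$ and $shrink(S,\vec x)$ is the restriction of $U_{\vec x}$ to the atoms of $\sigma$, which by Proposition~\ref{prop:lpod2asp:hardsplit} is a candidate answer set of $\Pi$; this proves one inclusion. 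Conversely, if $T$ is a candidate answer set of $\Pi$, Proposition~\ref{prop:lpod2asp:hardsplit} supplies a tuple $\vec x$ such that the assumption program with degree list $\vec x$ is consistent and has $T\cup\{body_i\mid T\text{ satisfies }\i{Body}_i\}$ as an answer set; taking this set as $U_{\vec x}$ and arbitrary answer sets for the other indices in $A^{*}$ (which exist) yields an optimal answer set $S$ with $S\models ap(\vec x)$ and $shrink(S,\vec x)=T$.

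\textbf{Main obstacle.}
I expect the delicate part to be the splitting/module bookkeeping rather than any computation: one must verify that the $ap(\cdot)$ atoms genuinely act as inert \emph{switches}---produced only by the bottom rule \eqref{equ:lpod:choice}, occurring only positively and without ever blocking anything in the bodies of the $\vec x$-modules---so that guessing $ap(\vec x)$ neither helps nor harms the consistency of any module other than the $\vec x$-th, and hence that $\subseteq$-maximality of the guessed set coincides with selecting all consistent modules. A secondary (routine) check is that the aggregate rule \eqref{equ:lpod:onedegree} together with \eqref{equ:lpod:degreenotbody}--\eqref{equ:lpod:degreebody} pins down exactly one $\i{degree}$ atom per true $ap(\cdot)$ atom, so that the $\i{degree}$ part neither removes nor duplicates answer sets.
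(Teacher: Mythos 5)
Your proposal is correct and follows essentially the same route as the paper's own proof: both reduce the claim to Proposition~\ref{prop:lpod2asp:hardsplit}, exploit the fact that the partially grounded copies of the translation for distinct tuples $(x_1,\dots,x_m)$ are disjoint modules handled by the splitting theorem, use the weak constraint \eqref{equ:lpod:wc} to argue that optimal answer sets contain $ap(x_1,\dots,x_m)$ exactly for the consistent assumption programs, and treat the $\i{degree}$ rules as a conservative extension. The paper phrases the optimality step by swapping a tagged copy $\psi(S)$ into an arbitrary optimal answer set rather than characterizing the optimal answer sets globally as you do, but this is a presentational difference, not a different argument.
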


\noindent{\sl Example \ref{ex:abcd} Continued: } 
The following is the encoding of ${\sf lpod2asp}(\Pi_1)_{base}$ in the input language of {\sc clingo}. 
{\small
\begin{lstlisting}
%%%% 1 %%%%
{ap(X1,X2): X1=0..2, X2=0..2}.             :~ ap(X1,X2). [-1, X1, X2]

%%%% 2 %%%%
% a*b <- not c.
body_1(X1,X2) :- ap(X1,X2), not c(X1,X2).
:- ap(X1,X2), X1=0, body_1(X1,X2).         :- ap(X1,X2), X1>0, not body_1(X1,X2).

a(X1,X2) :- body_1(X1,X2), X1=1.           b(X1,X2) :- body_1(X1,X2), X1=2.

:- body_1(X1,X2), X1!=1, a(X1,X2).
:- body_1(X1,X2), X1!=2, not a(X1,X2), b(X1,X2).

% b*c <- not d.
body_2(X1,X2) :- ap(X1,X2), not d(X1,X2).
:- ap(X1,X2), X2=0, body_2(X1,X2).         :- ap(X1,X2), X2>0, not body_2(X1,X2).

b(X1,X2) :- body_2(X1,X2), X2=1.           c(X1,X2) :- body_2(X1,X2), X2=2.

:- body_2(X1,X2), X2!=1, b(X1,X2).
:- body_2(X1,X2), X2!=2, not b(X1,X2), c(X1,X2).

%%%% 3 %%%%
1{degree(ap(X1,X2), D1, D2): D1=1..2, D2=1..2}1 :- ap(X1,X2).

:- degree(ap(X1,X2), D1, D2), X1=0, D1!=1.
:- degree(ap(X1,X2), D1, D2), X1>0, D1!=X1.

:- degree(ap(X1,X2), D1, D2), X2=0, D2!=1.
:- degree(ap(X1,X2), D1, D2), X2>0, D2!=X2.
\end{lstlisting}}

The optimal answer set $S$ of ${\sf lpod2asp}(\Pi_1)_{base}$ is 
\beq
\{ap(1,1),  a(1,1),  b(1,1), \dots, ap(2,1),  b(2,1),  \dots, ap(0,2),  c(0,2), \dots
\}
\eeq{ex1-cand}
%
($body_i(\cdot)$ and $degree(\cdot)$ atoms are not listed).
Since $S$ satisfies $ap(1,1)$, $ap(2,1)$, and $ap(0,2)$, the candidate answer sets on $\sigma$ of ${\sf lpod2asp}(\Pi_1)_{base}$ are
\[
\ba {lll}
shrink(S,1,1) = \{ a, b \}, & 
shrink(S,2,1) = \{ b \},  &
shrink(S,0,2) = \{ c \}
\ea
\]
which are exactly the candidate answer sets of $\Pi_{1}$.

\BOCC
Indeed, the first line of \eqref{ex1-cand} encodes the candidate answer set $\{A,B\}$, whose satisfying degrees are $(1,1)$. The second line encodes the candidate answer set $\{B\}$, whose satisfying degrees are $(2,1)$ and the third encodes the candidate answer set $\{C\}$, whose satisfying degrees are $(1,2)$. Unlike the four split programs \eqref{ex1-split}, there are three pairs of indices $(1,1), (2,1), (0,2)$ which are almost the same as the satisfaction degrees except that when the body is false, the index gets $0$ instead of $1$.
\EOCC

\subsubsection{Find Preferred Answer Sets}


The second part of the translation ${\sf lpod2asp}(\Pi)$ is to compare the candidate answer sets to find the  preferred answer sets.
For each preference criterion, ${\sf lpod2asp}(\Pi)$ contains the following rules respectively.
Below $maxdegree$ is $max\{n_i \mid i\in \{ 1,\dots, m \} \}$. 
\begin{enumerate}
\item[(a)]
{\bf Cardinality-Preferred: } For this criterion, ${\sf lpod2asp}(\Pi)$ contains the following rules.
{\small 
\begin{align}
card(P, X, N) ~\leftarrow~& \i{degree}(P, D_1, \dots, D_m), X=1..maxdegree,   && \nonumber \\
~~~~~~~~~~~~~~&~N=\{ D_1=X; \dots; D_m=X \}. && \label{equ:lpod:card:card} \\
 equ2degree(P_1,P_2,X) ~\leftarrow~& card(P_1,X,N), card(P_2,X,N), P_1\ne P_2. && \label{equ:lpod:card:equ2} \\
 prf2degree(P_1,P_2,X) ~\leftarrow~& card(P_1,X,N_1), card(P_2,X,N_2), N_1>N_2. && \label{equ:lpod:card:prf2} \\
 \i{prf}(P_1, P_2) ~\leftarrow~& X=0..maxdegree-1, prf2degree(P_1,P_2,X+1), && \nonumber \\
~~~~~~~~~~~~~~&~~ X\{equ2degree(P_1,P_2,Y): Y=1..X\}. && \label{equ:lpod:card:prf} \\
 pAS(X_1, \dots, X_m) ~\leftarrow~& ap(X_1, \dots, X_m), \{\i{prf}(P, ap(X_1, \dots, X_m))\}0. && \label{equ:lpod:card:pAS}
\end{align}
}
$P$, $P_1$, and $P_2$ denote assumption programs in the form of $ap(X_1, \dots, X_m)$. 
$card(P, X, N)$ is true if $P$ satisfies $N$ rules in $\Pi_{od}$ to degree $X$.
$equ2degree(P_1,P_2,X)$ is true if $P_1$ and $P_2$ have the same number of rules that are satisfied to degree $X$.
$prf2degree(P_1,P_2,X)$ is true if $P_1$ satisfies more rules to degree $X$ than $P_2$ does.
$\i{prf}(P_1,P_2)$ is true if $P_1$ is cardinality-preferred to $P_2$: $P_1$ satisfies more rules to degree $X+1$ than $P_2$ does whereas they satisfy the same number of rules up to degree $X$. 
%
Rule (\ref{equ:lpod:card:pAS}) reads as: given an assumption program represented by $ap(X_1, \dots, X_m)$, if we cannot find an assumption program $P$ that is more preferable, then the answer sets of $ap(X_1, \dots, X_m)$ are all preferred answer sets of $\Pi$. Note that $P$ in rule (\ref{equ:lpod:card:pAS}) is a local variable that ranges over all $ap(\cdot)$ atoms.

\smallskip
\item[(b)]
{\bf Inclusion-Preferred: } For this criterion, ${\sf lpod2asp}(\Pi)$ contains the following rules.
{\small 
\begin{align}
even(0; 2). \hspace{0.55cm} &~~~~ && \label{equ:lpod:incl:even} \\
equ2degree(P_1,P_2,X) ~\leftarrow~& P_1\ne P_2, X=1..maxdegree, && \nonumber \\
~~~~~~~~~~~~~~~&~ \i{degree}(P_1, D_{11}, \dots, D_{1m}), \i{degree}(P_2, D_{21}, \dots, D_{2m}),  && \nonumber \\
~~~~~~~~~~~~~~~&~ C_1 = \{ D_{11}=X; D_{21}=X \}, \dots, 
                  C_m = \{ D_{1m}=X; D_{2m}=X \}, && \nonumber \\
~~~~~~~~~~~~~~~&~ even(C_1), \dots, even(C_m). && \label{equ:lpod:incl:equ2} \\
 prf2degree(P_1,P_2,X) ~\leftarrow~& P_1\ne P_2, X=1..maxdegree, && \nonumber \\
~~~~~~~~~~~~~~~&~ \no\ equ2degree(P_1,P_2,X), && \nonumber \\
~~~~~~~~~~~~~~~&~ \i{degree}(P_1, D_{11}, \dots, D_{1m}), \i{degree}(P_2, D_{21}, \dots, D_{2m}),  && \nonumber \\
~~~~~~~~~~~~~~~&~ \{D_{11} \neq X; D_{21}=X\}1, \dots, \{D_{1m} \neq X; D_{2m}=X\}1. && \label{equ:lpod:incl:prf2} \\
 \i{prf}(P_1, P_2) ~\leftarrow~& X=0..maxdegree-1, prf2degree(P_1,P_2,X+1), && \nonumber \\
~~~~~~~~~~~~~~~&~ X\{equ2degree(P_1,P_2,Y): Y=1..X\}. && \label{equ:lpod:incl:prf} \\
 pAS(X_1, \dots, X_m) ~\leftarrow~& ap(X_1, \dots, X_m), \{\i{prf}(P, ap(X_1, \dots, X_m))\}0. && \label{equ:lpod:incl:pAS}
\end{align}
}
\noindent
where $\{ D_{11}=X; D_{21}=X \}$ counts the number of true atoms in this set, so it equals to 0 (or 2) when none (or both) of $D_{11}=X$ and $D_{21}=X$ are true; $\{D_{11} \neq X; D_{21}=X\}1$ means that the number of true atoms in this set must be smaller or equal to 1, which means that $D_{11}\neq X$ and $D_{21}=X$ cannot be true at the same time -- in other words, $D_{21}=X$ implies $D_{11}=X$.

\smallskip
\item[(c)] 
{\bf Pareto-Preferred: } For this criterion, ${\sf lpod2asp}(\Pi)$ contains the following rules.
{\small 
\begin{align}
 equ(P_1,P_2) ~\leftarrow~& \i{degree}(P_1, D_1, \dots, D_m), \i{degree}(P_2, D_1, \dots, D_m).  && \label{equ:lpod:pare:equ} \\
 \i{prf}(P_1, P_2) ~\leftarrow~& \i{degree}(P_1, D_{11}, \dots, D_{1m}), \i{degree}(P_2, D_{21}, \dots, D_{2m}), && \nonumber \\
~~~~~~~~~~~~~~~~ & \no\ equ(P_1,P_2), D_{11}\leq D_{21}, \dots, D_{1m}\leq D_{2m}. && \label{equ:lpod:pare:prf} \\
 pAS(X_1, \dots, X_m) ~\leftarrow~& ap(X_1, \dots, X_m), \{\i{prf}(P, ap(X_1, \dots, X_m))\}0. && \label{equ:lpod:pare:pAS}
\end{align}
}
where $equ(P_1,P_2)$ means that $P_1$ is equivalent to $P_2$ at all degrees.

\smallskip
\item[(d)]
{\bf Penalty-Sum-Preferred: } For this criterion, ${\sf lpod2asp}(\Pi)$ contains the following rules.
{\small
\begin{align}
 sum(P,N) ~\leftarrow~& \i{degree}(P, D_1, \dots, D_m), N=D_1+ \dots + D_m.  && \label{equ:lpod:pena:sum} \\
 \i{prf}(P_1, P_2) ~\leftarrow~& sum(P_1, N_1), sum(P_2, N_2), N_1 < N_2. && \label{equ:lpod:pena:prf} \\
 pAS(X_1, \dots, X_m) ~\leftarrow~& ap(X_1, \dots, X_m), \{\i{prf}(P, ap(X_1, \dots, X_m))\}0. && \label{equ:lpod:pena:pAS}
\end{align}
}
where $sum(P,N)$ means that the sum of $P$'s satisfaction degrees of all rules is $N$.
\end{enumerate}


If $S \models pAS(x_1, \dots, x_m)$, we define the set $shrink(S, x_1, \dots, x_m)$ to be a {\em preferred answer set on $\sigma$} of ${\sf lpod2asp}(\Pi)$.

a

The following 
theorem assert the soundness of the translation ${\sf lpod2asp}(\Pi)$.


\begin{thm} \label{thm:lpod2asp}
Under any of the four preference criteria, the candidate (preferred, respectively) answer sets of an LPOD $\Pi$ of signature $\sigma$ are exactly the candidate (preferred, respectively) answer sets on $\sigma$ of ${\sf lpod2asp}(\Pi)$.
\end{thm}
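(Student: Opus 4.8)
The plan is to handle the \emph{candidate} half of the statement by a splitting argument layered on Proposition~\ref{prop:lpod2asp:base}, and the \emph{preferred} half by decoding the preference rules one degree at a time. For the candidate half, write ${\sf lpod2asp}(\Pi)={\sf lpod2asp}(\Pi)_{base}\cup R$, where $R$ is the block of rules for the chosen criterion (those defining $card$, $equ2degree$, $prf2degree$, $\i{prf}$, $pAS$, and, as applicable, $even$, $equ$, or $sum$). None of these predicates occurs in ${\sf lpod2asp}(\Pi)_{base}$, and the predicate dependency graph inside $R$ is acyclic --- for the inclusion criterion $equ2degree$ is defined independently of $prf2degree$, so the negative body literal on $equ2degree$ in the $prf2degree$ rule closes no negative cycle; likewise for the negative literal on $equ$ in the Pareto block and for the aggregate $\{\i{prf}(P,q)\}0$ in each $pAS$ rule. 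Hence the atoms of ${\sf lpod2asp}(\Pi)_{base}$ form a splitting set for ${\sf lpod2asp}(\Pi)$, so every answer set of ${\sf lpod2asp}(\Pi)_{base}$ extends in exactly one way to an answer set of ${\sf lpod2asp}(\Pi)$ and every answer set of ${\sf lpod2asp}(\Pi)$ restricts to one of ${\sf lpod2asp}(\Pi)_{base}$; since $R$ carries no weak constraints, this correspondence restricts to one between optimal answer sets. As $shrink(\cdot)$ inspects only tagged atoms over $\sigma$, the candidate answer sets on $\sigma$ of the two programs coincide, and by Proposition~\ref{prop:lpod2asp:base} they are the candidate answer sets of $\Pi$.

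For the preferred half, fix an arbitrary optimal answer set $S$ of ${\sf lpod2asp}(\Pi)$ and, for a tuple $\vec x=(x_1,\dots,x_m)$, abbreviate $shrink(S,x_1,\dots,x_m)$ by $shrink(S,\vec x)$. From Proposition~\ref{prop:lpod2asp:base} and the correspondence between assumption degrees and satisfaction degrees recalled just above the $\i{degree}$ rules: $S\models ap(\vec x)$ precisely for the tuples $\vec x$ naming a consistent assumption program; for such $\vec x$, $shrink(S,\vec x)$ is a candidate answer set of $\Pi$ whose satisfaction degree list $L(\vec x)$ is $\vec x$ with each $0$ turned into $1$; every candidate answer set of $\Pi$ equals $shrink(S',\vec x)$ for some optimal $S'$; and the $ap(\cdot)$ and $\i{degree}(\cdot)$ atoms are the same in every optimal answer set, so the satisfaction degree lists realized by candidate answer sets of $\Pi$ are exactly $\{L(\vec y):S\models ap(\vec y)\}$. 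Rules \eqref{equ:lpod:onedegree}--\eqref{equ:lpod:degreebody} then give $S\models\i{degree}(ap(\vec y),\vec d)$ iff $S\models ap(\vec y)$ and $\vec d=L(\vec y)$. Finally, I would note that $>^k$ between candidate answer sets depends only on their satisfaction degree lists, since $S^i(\Pi)=\{\,j\mid d_j=i\,\}$, so $>^k$ may be treated as a (necessarily irreflexive) relation on degree lists.

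The core step is to show, for each criterion $k\in\{c,i,p,ps\}$ and all $ap(\vec y),ap(\vec x)$ true in $S$, that $S\models\i{prf}(ap(\vec y),ap(\vec x))$ iff $L(\vec y)>^k L(\vec x)$, by unwinding the aggregates: for cardinality, $card(P,X,N)$ holds iff $N$ is the number of rules $P$ satisfies to degree $X$, the bound $X\{equ2degree(P_1,P_2,Y):Y=1..X\}$ forces $equ2degree$ at \emph{all} degrees $1,\dots,X$ (only $X$ instances exist), and the resulting condition is the definition of $>^c$; for inclusion, $even$ applied to the count $\{D_{1j}=X;D_{2j}=X\}$ expresses $j\in P_1^X\Leftrightarrow j\in P_2^X$ so that $equ2degree(P_1,P_2,X)$ means $P_1^X=P_2^X$, while $\{D_{1j}\neq X;D_{2j}=X\}1$ expresses $j\in P_2^X\Rightarrow j\in P_1^X$, so $prf2degree(P_1,P_2,X)$ together with the failure of $equ2degree(P_1,P_2,X)$ means $P_2^X\subsetneq P_1^X$, giving $>^i$; for Pareto, $equ(P_1,P_2)$ means the lists agree and the body conjunction $D_{1j}\le D_{2j}$ gives $>^p$; for penalty-sum, $sum(P,N)$ means $N=\sum_j d_j^P$, giving $>^{ps}$. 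Granting this, $S\models pAS(\vec x)$ iff $S\models ap(\vec x)$ and no true $ap(\vec y)$ satisfies $S\models\i{prf}(ap(\vec y),ap(\vec x))$; by the claim this holds iff $shrink(S,\vec x)$ is a candidate answer set of $\Pi$ whose degree list is not $>^k$-dominated by the degree list of any consistent assumption program, equivalently by the degree list of any candidate answer set of $\Pi$, i.e.\ iff $shrink(S,\vec x)$ is a $k$-preferred answer set of $\Pi$. Letting $S$ range over optimal answer sets and recalling that every $k$-preferred answer set is a candidate answer set, the $k$-preferred answer sets of $\Pi$ are exactly the preferred answer sets on $\sigma$ of ${\sf lpod2asp}(\Pi)$, which with the candidate half finishes the proof. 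I expect the main obstacle to be exactly this core step: confirming that the four aggregate blocks really compute \citeN{brewka05preferences}'s counting and set-theoretic conditions --- the ``at least $X$ out of $X$'' idiom for conjunction, the parity trick for set equality, and the bound $\{\cdot\}1$ for implication --- and, relatedly, that the negated and aggregate body literals in $R$ are acyclic so that the splitting used in the first paragraph is sound.
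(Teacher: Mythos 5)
Your proposal is correct and follows essentially the same route as the paper: a conservative-extension argument showing the preference block does not disturb the base program (the paper does this rule-by-rule via Ferraris's explicit-definition lemma, you via the splitting theorem), combined with Proposition~\ref{prop:lpod2asp:base}, the assumption-degree/satisfaction-degree correspondence, and a criterion-by-criterion unwinding of the $equ2degree$/$prf2degree$/$prf$/$pAS$ encodings, which is exactly the content of the paper's Lemmas~\ref{prop:lpod2asp:base-3}--\ref{lem:lpod:preferred}. The one step you assert without detail---that every optimal answer set contains $ap(x_1,\dots,x_m)$ (and hence the forced $\i{degree}$ atom) for every consistent assumption program, so that preference comparisons can be read off a single optimal answer set---is precisely what the paper establishes via the weak constraint together with the disjointness of the partial groundings (statement~\eqref{statement:1} and Lemma~\ref{lem:lpod:twoAS}), so it closes in the same way.
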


%
%

\noindent{\sl Example \ref{ex:hotel} Continued: } 
The first part of ${\sf lpod2asp}(\Pi_2)$ contains the following rules.
{\small
\begin{lstlisting}
#const maxdegree = 4.
  
%%%% 1 %%%%

{ap(X1,X2): X1=0..4, X2=0..3}.             :~ ap(X1,X2). [-1, X1, X2]

%%%% 2 %%%%

1{hotel(H,X1,X2): H=1..3}1 :- ap(X1,X2).
:- ap(X1,X2), hotel(1,X1,X2), not close(X1,X2).
:- ap(X1,X2), hotel(1,X1,X2), not star2(X1,X2).
:- ap(X1,X2), hotel(2,X1,X2), not med(X1,X2).
:- ap(X1,X2), hotel(2,X1,X2), not star3(X1,X2).
:- ap(X1,X2), hotel(3,X1,X2), not tooFar(X1,X2).
:- ap(X1,X2), hotel(3,X1,X2), not star4(X1,X2).

% close * med * far * tooFar.

body_1(X1,X2) :- ap(X1,X2).

:- ap(X1,X2), X1=0, body_1(X1,X2).         :- ap(X1,X2), X1>0, not body_1(X1,X2).

close(X1,X2) :- body_1(X1,X2), X1=1.        med(X1,X2) :- body_1(X1,X2), X1=2.
far(X1,X2) :- body_1(X1,X2), X1=3.          tooFar(X1,X2) :- body_1(X1,X2), X1=4.

:- body_1(X1,X2), X1!=1, close(X1,X2).
:- body_1(X1,X2), X1!=2, not close(X1,X2), med(X1,X2).
:- body_1(X1,X2), X1!=3, not close(X1,X2), not med(X1,X2), far(X1,X2).
:- body_1(X1,X2), X1!=4, not close(X1,X2), not med(X1,X2), not far(X1,X2),
   tooFar(X1,X2).

% star4 * star3 * star2.

body_2(X1,X2) :- ap(X1,X2).

:- ap(X1,X2), X2=0, body_2(X1,X2).          :- ap(X1,X2), X2>0, not body_2(X1,X2).

star4(X1,X2) :- body_2(X1,X2), X2=1.         star3(X1,X2) :- body_2(X1,X2), X2=2.
star2(X1,X2) :- body_2(X1,X2), X2=3.

:- body_2(X1,X2), X2!=1, star4(X1,X2).
:- body_2(X1,X2), X2!=2, not star4(X1,X2), star3(X1,X2).
:- body_2(X1,X2), X2!=3, not star4(X1,X2), not star3(X1,X2), star2(X1,X2).

%%%% 3 %%%%

1{degree(ap(X1,X2), D1, D2): D1=1..4, D2=1..3}1 :- ap(X1,X2).

:- degree(ap(X1,X2), D1, D2), X1=0, D1!=1.
:- degree(ap(X1,X2), D1, D2), X1>0, D1!=X1.

:- degree(ap(X1,X2), D1, D2), X2=0, D2!=1.
:- degree(ap(X1,X2), D1, D2), X2>0, D2!=X2.
\end{lstlisting}}

For the second part of the translation, ${\sf lpod2asp}(\Pi_2)$ contains one of the following sets of rules.

{\small
\begin{lstlisting}
%%%% a. Cardinality %%%%
card(P,X,N) :- degree(P,D1,D2), X=1..maxdegree, N={D1=X; D2=X}.
equ2degree(P1,P2,X) :- card(P1,X,N), card(P2,X,N), P1!=P2.
prf2degree(P1,P2,X) :- card(P1,X,N1), card(P2,X,N2), N1>N2.
prf(P1,P2) :- X=0..maxdegree-1, prf2degree(P1,P2,X+1), X{equ2degree(P1,P2,Y): Y=1..X}.
pAS(X1,X2) :- ap(X1,X2), {prf(P, ap(X1,X2))}0.
\end{lstlisting}}

{\small
\begin{lstlisting}
%%%% b. Inclusion %%%%
even(0;2).
equ2degree(P1,P2,X) :- P1!=P2, X=1..maxdegree, degree(P1,D11,D12), degree(P2,D21,D22), 
                         C1 = {D11=X; D21=X}, C2={D12=X; D22=X}, even(C1), even(C2).
prf2degree(P1,P2,X) :- P1!=P2, X=1..maxdegree, not equ2degree(P1,P2,X), 
                        degree(P1,D11,D12), degree(P2,D21,D22), 
                        {D11!=X; D21=X}1, {D12!=X; D22=X}1.
prf(P1,P2) :- X=0..maxdegree-1, prf2degree(P1,P2,X+1), X{equ2degree(P1,P2,Y): Y=1..X}.
pAS(X1,X2) :- ap(X1,X2), {prf(P, ap(X1,X2))}0.
\end{lstlisting}}

{\small
\begin{lstlisting}
%%%% c. Pareto %%%%
equ(P1,P2) :- degree(P1,D1,D2), degree(P2,D1,D2).
prf(P1,P2) :- degree(P1,D11,D12), degree(P2,D21,D22), not equ(P1,P2), 
              D11<=D21, D12<=D22.
pAS(X1,X2) :- ap(X1,X2), {prf(P, ap(X1,X2))}0.
\end{lstlisting}}

{\small
\begin{lstlisting}
%%%% d. Penalty-Sum %%%%
sum(P,N) :- degree(P,D1,D2), N=D1+D2.
prf(P1,P2) :- sum(P1,N1), sum(P2,N2), N1<N2.
pAS(X1,X2) :- ap(X1,X2), {prf(P, ap(X1,X2))}0.
\end{lstlisting}}


Note that each set of rules in the second part conservatively extends the answer set of the base program.  For example, the optimal answer set of ${\sf lpod2asp}(\Pi_1)$ under Penalty-Sum preference is the union of \eqref{ex1-cand} and 
$\{ sum(ap(0,2),3)$, $sum(ap(1,1),2)$, $sum(ap(2,1),3)$, $prf(ap(1,1),ap(0,2))$, $prf(ap(1,1),ap(2,1))$, $pAS(1,1) \}$, 
which indicates that $\{a,b\}$ is the preferred answer set.

The optimal answer set $S$ of ${\sf lpod2asp}(\Pi_2)$ under the cardinality preference is
\[
\ba {rllll}
\{pAS(1,3),& ap(1,3), &hotel(1,1,3), &close(1,3),  &star2(1,3) , \\
&ap(2,2), &hotel(2,2,2), &med(2,2), &star3(2,2), \\
&ap(4,1), &hotel(3,4,1), &tooFar(4,1), &star4(4,1), \dots \}
\ea
\]
Since $S$ satisfies $ap(1,3)$, $ap(2,2)$, and $ap(4,1)$, the candidate answer sets on $\sigma$ of ${\sf lpod2asp}(\Pi_2)$ are
\[
\ba {l}
shrink(S,1,3) = \{ hotel(1), close, star2 \}, \\
shrink(S,2,2) = \{ hotel(2), med, star3 \}, \\
shrink(S,4,1) = \{ hotel(3), tooFar, star4 \},
\ea
\]
which are exactly the candidate answer sets of $\Pi_2$.
Since $S$ satisfies $pAS(1,3)$, the preferred answer sets on $\sigma$ of ${\sf lpod2asp}(\Pi_2)$ is
$
shrink(S,1,3) = \{ hotel(1), close, star2 \}
$
which is exactly the cardinality-preferred answer set of $\Pi_2$.
Let 
\[
\ba {l}
pAS_1 = \{ pAS(1,3), hotel(1,1,3), close(1,3),star2(1,3) \}, \\
pAS_2 = \{ pAS(2,2), hotel(2,2,2), med(2,2), star3(2,2) \}, \\
pAS_3 = \{ pAS(4,1), hotel(3,4,1), tooFar(4,1), star4(4,1) \}.
\ea
\]
The optimal answer sets of ${\sf lpod2asp}(\Pi_2)$ under 4 criteria contain
\[
\ba {lcllc}
\text{cardinality-preferred:} & pAS_1 & & 
\text{inclusion-preferred:} & pAS_1 \cup pAS_3 \\
\text{Pareto-preferred:} & pAS_1 \cup pAS_2 \cup pAS_3 & &
\text{penalty-sum-preferred:} & pAS_1 \cup pAS_2
\ea
\]
which are in a 1-1 correspondence with the preferred answer sets of $\Pi_2$ under each of the four criteria respectively.

\section{$\crpt$ to ASP with Weak Constraints} \label{sec:crp2asp}

\subsection{Review: $\crpt$} \label{subsec:review:crp}

We review the definition of $\crpt$ from~\cite{bald03b}.

\noindent{\bf Syntax: }\ \ 
A (propositional) $\crpt$ program $\Pi$ consists of four kinds of rules:
\begin{flalign}
&\text{\em regular rule} &  & \i{Head} \leftarrow \i{Body} && \label{crp:r} \\
&\text{\em ordered rule} & i:~~ & C^1 \times \dots \times C^{n_i} \leftarrow \i{Body} && \label{crp:or} \\
&\text{\em cr-rule} & i:~~ & \i{Head} \stackrel{+}\leftarrow \i{Body} && \label{crp:cr} \\
&\text{\em ordered cr-rule} & i:~~ & C^1 \times \dots \times C^{n_i} \stackrel{+}\leftarrow \i{Body} && \label{crp:ocr}
\end{flalign}
where $\i{Head}\ar \i{Body}$ is a standard ASP rule, $i$ is the index of the rule, $C^j$ are atoms, and $n_i\geq 2$.
The intuitive meaning of an ordered disjunction $C^1 \times \dots \times C^{n_i}$ is similar to the one for LPOD. 
A cr-rule \eqref{crp:cr} or an ordered cr-rule \eqref{crp:ocr} is {\em applied} in $\Pi$ if it is treated as a usual ASP rule in $\Pi$ (by replacing $\stackrel{+}\ar$ with $\ar$); it is not applied if it is omitted in $\Pi$.
A cr-rule \eqref{crp:cr} or an ordered cr-rule \eqref{crp:ocr} is
applied only if the agent has no way to obtain a consistent set of beliefs using regular rules or ordered rules only.
By $\i{Head}(i)$ and $\i{Body}(i)$, we denote the head and the body of rule $i$.

\smallskip
\noindent{\bf Semantics: }\ \ 
The semantics of $\crpt$ is based on the transformation from a $\crpt$ program $\Pi$ of signature $\sigma$ into an answer set program $H_{\Pi}$, which is constructed as follows. 
The first-order signature of $H_{\Pi}$ is $\sigma \cup \{ \i{choice}/2, \i{appl}/1, \i{fired}/1, \i{isPreferred}/2 \}$, where $\i{choice}$ is a function constant,  $\i{appl}, \i{fired}, \i{isPreferred}$ are predicate constants not in $\sigma$.
\begin{enumerate}
\item

Let $R_{\Pi}$ be the set of rules obtained from $\Pi$ by replacing every cr-rule and ordered cr-rule of index $i$ with a rule:
\[
i: \ \ \i{Head}(i) \leftarrow \i{Body}(i), \i{appl}(i)
\]
where $\i{appl}(i)$ means rule $i$ is applied.
Notice that $R_{\Pi}$ contains only regular rules and ordered rules. 

$H_{\Pi}$ is then obtained from $R_{\Pi}$ by replacing every ordered rule of index $r$, where $\i{Head}(r)=C^1 \times \dots \times C^{n_i}$, with the following rules (for $1\leq j \leq n_i$):
\beq
\ba{l}
C^j \leftarrow \i{Body}(r), \i{appl}(\i{choice}(r, j)) \\
\i{fired}(r) \leftarrow \i{appl}(\i{choice}(r, j)) \\
\i{prefer}(\i{choice}(r, j), \i{choice}(r, j+1)) \hspace{1cm}  (j<n_i) \\
\bot\leftarrow \i{Body}(r), \no\ \i{fired}(r)
\ea
\eeq{h-pi}
where $\i{appl}(\i{choice}(r, j))$ means that the $j$-th atom in the ordered disjunction $\i{Head}(r)$ is chosen, i.e., $C^j$ is true if $\i{Head}(r)$ is true.

\item
$H_{\Pi}$ also contains the following set of rules:
\[
\ba {l}
\i{isPreferred}(R1,R2) \leftarrow \i{prefer}(R1,R2). \\
\i{isPreferred}(R1,R3) \leftarrow \i{prefer}(R1,R2), \i{isPreferred}(R2,R3). \\
\bot\leftarrow \i{isPreferred}(R,R). \\
\bot\leftarrow \i{appl}(R1), \i{appl}(R2), \i{isPreferred}(R1,R2).
\ea
\]
where $R1, R2, R3$ are schematic variables ranging over indices of cr-rules and ordered cr-rules in $\Pi$ as well as terms of the form $\i{choice}(\cdot)$.
\end{enumerate}

By $atoms(H_{\Pi}, \{\i{appl}\})$, we denote the set of atoms in $H_{\Pi}$ in the form of $\i{appl}(\cdot)$. A {\em generalized answer set} of $\Pi$ is an answer set of $H_{\Pi} \cup A$ where $A \subseteq atoms(H_{\Pi}, \{ \i{appl}\})$.

Let $S_1, S_2$ be generalized answer sets of $\Pi$. $S_1$ {\em dominates} $S_2$ if there exist $r_1$ and $r_2$ such that $\i{appl}(r_1) \in S_1$, $\i{appl}(r_2)\in S_2$, and $\i{isPreferred}(r_1,r_2) \in S_1 \cap S_2$.
Further, we say this domination is {\em rule-wise} if $r_1$ and $r_2$ are indices of two cr-rules; 
{\em atom-wise} if $r_1$ and $r_2$ are two terms of the form $\i{choice}(\cdot)$.
%
$S_1$ is a {\em candidate answer set} of $\Pi$ if there is no other generalized answer set that dominates $S_1$. 

The projection of $S_1$ onto $\sigma$ is a {\em preferred answer set} of $\Pi$ if $S_1$ is a candidate answer set of $\Pi$ and there is no other candidate answer set $S_2$ such that $S_2 \cap  atoms(H_{\Pi}, \{ \i{appl}\}) \subset S_1$.

\begin{example} \label{ex:crp2asp}
(From \cite{bald03b}) Consider the following $\crpt$ program $\Pi_3$:

\begin{minipage}[c]{0.3\textwidth}
\[
\ba {rl}
& q \leftarrow t. \\
& s \leftarrow t. \\ \\
\ea 
\]
\end{minipage}
\begin{minipage}[c]{0.3\textwidth}
\[
\ba {rl}
& p \leftarrow \no\ q. \\
& r \leftarrow \no\ s. \\
& \leftarrow p, r. \\
\ea
\]
\end{minipage}
\begin{minipage}[c]{0.3\textwidth}
\[
\ba {rl}
1:& t \stackrel{+}\leftarrow. \\
2:& q \times s \stackrel{+}\leftarrow. \\
\\
\ea
\]
\end{minipage}

\smallskip\noindent
which has 5 generalized answer sets (the atoms formed by $\i{isPreferred}$ or $\i{fired}$ are omitted)
\[
\ba {l}
S_1 = \{ q, s, t, \i{appl}(1), ~~\i{prefer}(\i{choice}(2, 1), \i{choice}(2, 2)) \} \\
S_2 = \{ q, r, \i{appl}(2), \i{appl}(\i{choice}(2, 1)), ~~\i{prefer}(\i{choice}(2, 1), \i{choice}(2, 2)) \} \\
S_3 = \{ p, s, \i{appl}(2), \i{appl}(\i{choice}(2, 2)), ~~\i{prefer}(\i{choice}(2, 1), \i{choice}(2, 2)) \} \\
S_4 = \{ q, s, t, \i{appl}(1), \i{appl}(2), \i{appl}(\i{choice}(2, 1)), ~~\i{prefer}(\i{choice}(2, 1), \i{choice}(2, 2)) \} \\
S_5 = \{ q, s, t, \i{appl}(1), \i{appl}(2), \i{appl}(\i{choice}(2, 2)), ~~\i{prefer}(\i{choice}(2, 1), \i{choice}(2, 2)) \}. \\
\ea
\]
Since $S_2$ (atom-wise) dominates $S_3$ and $S_5$, the candidate answer sets are $S_1$, $S_2$, and $S_4$. Since $S_1 \cap  atoms(H_{\Pi_3}, \{ \i{appl}\}) \subset S_4$, the preferred answer sets of $\Pi_3$ are the projections from $S_1$ or $S_2$ onto $\sigma$.

\end{example}

\subsection{Turning $\crpt$ into ASP with Weak Constraints} \label{subsec:crp2asp}

We define a translation ${\sf crp2asp}(\Pi)$ that turns a $\crpt$ program $\Pi$ into an answer set program with weak constraints.

Let $\Pi$ be a $\crpt$ program of signature $\sigma$, where its rules are rearranged such that the cr-rules are of indices $1,\dots, k$, the ordered cr-rules are of indices $k+1,\dots, l$, and the ordered rules are of indices $l+1,\dots, m$. 

For an ordered rule \eqref{crp:or} or an ordered cr-rule \eqref{crp:ocr}, its $i$-th {\em assumption}, where $i\in \{ 1, \dots, n_i\}$, is defined as $C^i \leftarrow \i{Body}$.
An {\em assumption program} $AP(x_1,\dots,x_m)$ of $\Pi$ whose {\em assumption degree list} is $(x_1,\dots,x_m)$ is obtained from $\Pi$ as follows ($x_i\in\{0,1\}$ if $i=1,\dots k$; $x_i\in \{0,\dots,n_i\}$ if $i=k\!+\!1,\dots, l$; $x_i\in\{1,\dots,n_i\}$ if $i=l\!+\!1,\dots,m$, where $n_i$ is the number of atoms in the head of rule $i$).


\begin{itemize}
\item
every regular rule \eqref{crp:r} is in $AP(x_1,\dots,x_m)$;
\item 
a cr-rule \eqref{crp:cr} is omitted if $x_i=0$, and is replaced by $\i{Head} \leftarrow \i{Body}$ if $x_i=1$;
\item 
an ordered cr-rule \eqref{crp:ocr} is omitted if $x_i=0$, and is replaced by its $x_i$-th assumption if $x_i>0$;
\item 
an ordered rule \eqref{crp:or} is replaced by its $x_i$-th assumption.

\end{itemize}
Besides, each assumption program $AP(x_1,\dots,x_m)$ contains
\[
\ba {l}
\i{isPreferred}(R1,R2) \leftarrow \i{prefer}(R1,R2). \\
\i{isPreferred}(R1,R3) \leftarrow \i{prefer}(R1,R2), \i{isPreferred}(R2,R3). \\
\leftarrow \i{isPreferred}(R,R). \\
\leftarrow x_{r_1}>0, x_{r_2}>0, \i{isPreferred}(r_1,r_2). \ \ \ \ \ \ (1 \leq r_1, r_2 \leq l)
\ea
\]

The generalized answer sets of $\Pi$ can be obtained from the answer sets of all the assumption programs of $\Pi$.

\begin{prop} \label{prop:crp:SP}
For any $\crpt$ program $\Pi$ of signature $\sigma$, a set $X$ of atoms is the projection of a generalized answer set of $\Pi$ onto $\sigma$ iff $X$ is the projection of an answer set of an assumption program of $\Pi$ onto $\sigma$.
\end{prop}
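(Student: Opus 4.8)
The plan is to reduce the proposition to a per--degree-list correspondence. For an assumption degree list $(x_1,\dots,x_m)$, define $A(x_1,\dots,x_m)\subseteq atoms(H_{\Pi},\{\i{appl}\})$ to contain $\i{appl}(i)$ exactly when $x_i>0$ (for cr-rules and ordered cr-rules, $1\le i\le l$) and $\i{appl}(\i{choice}(i,x_i))$ exactly when $x_i>0$ (for ordered cr-rules and ordered rules, $k+1\le i\le m$, where $x_i\ge 1$ whenever $i>l$). I would then prove: (i) $AP(x_1,\dots,x_m)$ and $H_{\Pi}\cup A(x_1,\dots,x_m)$ have answer sets with identical projections onto $\sigma$; and (ii) for every $A\subseteq atoms(H_{\Pi},\{\i{appl}\})$ for which $H_{\Pi}\cup A$ is consistent, the $\sigma$-projections of its answer sets agree with those of $H_{\Pi}\cup A(x_1,\dots,x_m)$ for the degree list read off from $A$. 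Since a generalized answer set of $\Pi$ is by definition an answer set of $H_{\Pi}\cup A$ for some such $A$, (i) and (ii) together give both directions.

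For (i) I would carry out a layered (splitting-set) analysis of $H_{\Pi}\cup A$ with $A=A(x_1,\dots,x_m)$: split off first the $\i{appl}(\cdot)$ and $\i{prefer}(\cdot)$ atoms, which are all facts here, and then the $\i{fired}(\cdot)$ and $\i{isPreferred}(\cdot)$ atoms, which are functionally determined by those facts with no recursion through negation. The residual program over $\sigma$ then simplifies rule by rule to the non-bookkeeping part of $AP(x_1,\dots,x_m)$: a cr-rule's rule $\i{Head}(i)\leftarrow\i{Body}(i),\i{appl}(i)$ becomes $\i{Head}\leftarrow\i{Body}$ when $x_i=1$ and is dead when $x_i=0$; each ordered-rule rule $C^j\leftarrow\i{Body}(i),\i{appl}(i),\i{appl}(\i{choice}(i,j))$ collapses to the single surviving instance $C^{x_i}\leftarrow\i{Body}$ when $x_i>0$ and is dead when $x_i=0$ (mirroring the omission of the rule in $AP$); each constraint $\bot\leftarrow\i{Body}(i),\no\ \i{fired}(i)$ is satisfied, since $A(x_1,\dots,x_m)$ contributes $\i{fired}(i)$ whenever $x_i>0$ and makes the constraint's body vacuous when $x_i=0$; and the $\i{isPreferred}$ rules and constraints contribute no $\sigma$-atom and never fire $\bot\leftarrow\i{appl}(R_1),\i{appl}(R_2),\i{isPreferred}(R_1,R_2)$, as at most one $\i{choice}(r,\cdot)$ is ever applied, so they can be dropped symmetrically on both sides. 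Hence the answer sets of the two programs coincide on $\sigma$.

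For (ii), given consistent $A$ and an answer set $S$ of $H_{\Pi}\cup A$, I would read off the degree list: $x_i=1$ iff $\i{appl}(i)\in A$ for a cr-rule; $x_i=0$ if $\i{appl}(i)\notin A$ for an ordered cr-rule; and for an ordered rule, or an ordered cr-rule with $\i{appl}(i)\in A$, if the body of rule $i$ holds in $S$ then the constraints $\bot\leftarrow\i{Body}(i),\no\ \i{fired}(i)$ and $\bot\leftarrow\i{appl}(R_1),\i{appl}(R_2),\i{isPreferred}(R_1,R_2)$ together force exactly one $\i{appl}(\i{choice}(i,j))\in A$, and I set $x_i=j$; if the body fails, the rule contributes nothing to $\sigma$ and any admissible value of $x_i$ works. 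The same layered analysis then shows $H_{\Pi}\cup A$ and $H_{\Pi}\cup A(x_1,\dots,x_m)$ agree on $\sigma$, because they differ only in ``spurious'' $\i{appl}(\cdot)$ atoms whose guarded rules do not fire in $S$. I expect the main obstacle to be exactly this reconciliation in (ii) of the two bookkeeping schemes---$H_{\Pi}$'s two-level guess ($\i{appl}(i)$ that a cr-rule applies together with a separate $\i{appl}(\i{choice}(i,j))$ that picks an option, policed afterward by the $\i{fired}$ and $\i{isPreferred}$ rules and constraints) versus $AP$'s structural commitment to a single $x_i$ per rule---in particular verifying that every consistent guess really does collapse to one option per firing ordered rule, so that the extracted degree list is well defined and the $\sigma$-projection is preserved. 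Organizing the whole argument around the canonical sets $A(x_1,\dots,x_m)$ and a clean splitting decomposition, rather than ad hoc reduct computations, is what keeps it manageable.
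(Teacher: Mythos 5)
Your overall strategy is essentially the one the paper uses: you read an assumption degree list off the $\i{appl}(\cdot)$ atoms, peel off the bookkeeping predicates ($\i{appl}$, $\i{fired}$, $\i{prefer}(\i{choice}(\cdot),\i{choice}(\cdot))$, $\i{isPreferred}$) by a splitting/layering argument, and identify the residual program with $AP_\Pi(x_1,\dots,x_m)$ through the dictionary between $\i{appl}$ atoms and the conditions $x_i=j$. The only packaging difference is that you factor the argument through canonical witness sets $A(x_1,\dots,x_m)$ plus a normalization step (ii) for arbitrary consistent guesses, whereas the paper folds all guesses $A\subseteq atoms(H_\Pi,\{\i{appl}\})$ into a single program $H'_\Pi$ with choice rules over the $\i{appl}$ atoms and proves both directions on it directly; your degree-list extraction and the well-definedness argument (the $\i{fired}$ constraint plus the atom-wise $\i{isPreferred}$ constraint force exactly one $\i{choice}(i,\cdot)$ per firing ordered rule) are exactly the paper's.

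The one step that does not survive scrutiny is the claim that $\bot\leftarrow\i{appl}(R_1),\i{appl}(R_2),\i{isPreferred}(R_1,R_2)$ ``never fires,'' so that the $\i{isPreferred}$ rules and constraints ``can be dropped symmetrically on both sides.'' That justification covers only the atom-wise (choice-term) instances. A $\crpt$ program may contain $\i{prefer}$ atoms between cr-rule/ordered cr-rule indices---the paper's own $\Pi_3'=\Pi_3\cup\{\i{prefer}(2,1)\}$ is such a program---and then rule-wise instances of this constraint can fire, making $H_\Pi\cup A(x_1,\dots,x_m)$ inconsistent for some degree lists; dropping the constraint would create spurious answer sets, so equality of $\sigma$-projections for the \emph{original} programs would no longer follow from your simplification. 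The repair is immediate and uses machinery you already have: note that $AP_\Pi(x_1,\dots,x_m)$ deliberately keeps the $\i{isPreferred}$ closure rules and the constraint $\leftarrow x_{r_1}>0,\ x_{r_2}>0,\ \i{isPreferred}(r_1,r_2)$, and that under your dictionary ($\i{appl}(r)\in A(x_1,\dots,x_m)$ iff $x_r>0$, with identical $\i{prefer}$/$\i{isPreferred}$ derivations because the generating rules are shared) the rule-wise constraints on the two sides are violated under exactly the same conditions; only the choice-term groundings may be discharged by the ``at most one choice applied'' argument. This is precisely how the paper handles them: it removes only the choice-term groundings of these constraints and translates the rule-index groundings into the $x_i>0$ constraints of the assumption program.
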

%

Let $\Pi_1$ and $\Pi_2$ be two assumption programs of $\Pi$. We say an answer set $S_1$ of $\Pi_1$ {\em dominates} an answer set $S_2$ of $\Pi_2$ if (i) there exists a rule $i$ in $\Pi$ that is replaced by its $j_1$-th assumption in $\Pi_1$, is replaced by its $j_2$-th assumption in $\Pi_2$, and $j_1 < j_2$; or (ii) there exist 2 rules $r_1,r_2$ in $\Pi$ such that $r_1$ is applied in $\Pi_1$, $r_2$ is applied in $\Pi_2$, and $prefer(r_1,r_2) \in S_1 \cap S_2$. 
Indeed, by Proposition \ref{prop:crp:SP}, $S_1$ dominates $S_2$ iff the corresponding generalized answer set of the former dominates that of the latter.

An answer set program with weak constraints ${\sf crp2asp}(\Pi)$ is obtained from $\Pi$ based on the notion of assumption programs as follows. 
%
The first-order signature $\sigma'$ of ${\sf crp2asp}(\Pi)$ contains $m$-ary predicate constant $a/m$ for each propositional constant $a$ of $\sigma$.
Besides, $\sigma'$ contains the following predicate constants not in $\sigma$: $ap/m$, $dominate/2$, $\i{isPreferred}/(m+2)$, $candidate/m$, $lessCrRulesApplied/2$, and $pAS/m$.
%

\medskip
\noindent
{\bf 1.}\ \ 
To consider a maximal set of consistent assumption programs, ${\sf crp2asp}(\Pi)$ contains 
%
\begin{flalign}
&\{ ap(X_1, \dots, X_{m}):  X_1=0..1,\ \dots \ , X_k=0..1,\ \
 X_{k+1} = 0..n_{k+1}, \dots, X_l=0..n_l, && \nonumber \\
&\hspace{2.8cm} X_{l+1} = 1..n_{l+1}, \dots, X_m=1..n_m \}. && \label{crp:1:choice} \\
&:\sim ap(X_1, \dots, X_{m}).  \left[ -1, X_1, \dots, X_{m} \right] && \label{crp:1:wc}
\end{flalign}
where $n_i$ is the number of atoms in $\i{Head}(i)$,
$ap(X_1, \dots, X_{p})$ denotes an assumption program obtained from $\Pi$. 



\medskip
\noindent
{\bf 2.}\ \ 
${\sf crp2asp}(\Pi)$ contains the following rules to construct all assumption programs $AP(x_1,\dots,x_m)$:
\begin{itemize}
\item
for each regular rule~~~
$
\i{Head} \leftarrow \i{Body}
$~~~
in $\Pi$, ${\sf crp2asp}(\Pi)$ contains
\begin{flalign}
&\i{Head}(X_1, \dots, X_m) \leftarrow ap(X_1, \dots, X_m), Body(X_1, \dots, X_m) && \label{crp:3:regular}
\end{flalign}

\item 
for each cr-rule~~~
$
i: ~~\i{Head}_i \stackrel{+}\leftarrow \i{Body}_i
$~~~
in $\Pi$, ${\sf crp2asp}(\Pi)$ contains
\begin{flalign}
&\i{Head}_i(X_1, \dots, X_m) \leftarrow ap(X_1, \dots, X_m), \i{Body}_i(X_1, \dots, X_m), X_i=1 && \label{crp:3:cr}
\end{flalign}

\item
for each ordered rule or ordered cr-rule ~~~$
i: ~~C^1_i \times \dots \times C^n_i \stackrel{(+)}\leftarrow \i{Body}_i
$~~~
in $\Pi$, 
for $1\leq j \leq n_i$, ${\sf crp2asp}(\Pi)$ contains
\begin{flalign}
&C^j_i(X_1, \dots, X_m) \leftarrow ap(X_1, \dots, X_m), \i{Body}_i(X_1, \dots, X_m), X_i=j && \label{crp:2}
\end{flalign}

\end{itemize}

\medskip
\noindent
{\bf 3.}\ \ 
To define $dominate$ in the semantics of $\crpt$, ${\sf crp2asp}(\Pi)$ contains the following rules.

\medskip\noindent
{\bf Atom-wise dominance:\ } 
Instead of using $\i{choice}(\cdot)$ terms and $\i{appl}(\i{choice}(\cdot))$ atoms in \eqref{h-pi}, we represent the atom wise dominance by comparing the assumption degrees.
For ordered cr-rules and ordered rules $i \in \{k+1, \dots m\}$, we  include
\begin{flalign}
&dominate(ap(X_1, \dots, X_m), ap(Y_1, \dots, Y_m)) \leftarrow && \nonumber \\
&~~~~~~~~~~~\hspace{3cm} ap(X_1, \dots, X_m), ap(Y_1, \dots, Y_m), 0<X_i, X_i<Y_i && \label{crp:4:atom} 
\end{flalign}

\noindent
{\bf rule-wise dominance:\ } {The following rules are included only when $\Pi$ contains an atom $\i{prefer}(\cdot)$. $r_1$ and $r_2$ ranges over $\{1,\dots, l\}$. }
{\small
\begin{flalign}
& \i{isPreferred}(R_1,R_2,X_1, \dots, X_m) \leftarrow \i{prefer}(R_1,R_2,X_1, \dots, X_m) && \label{crp:4:rule:1} \\
& \i{isPreferred}(R_1,R_3,X_1, \dots, X_m) \leftarrow \i{prefer}(R_1,R_2,X_1, \dots, X_m), && \nonumber \\
&~\hspace{5cm}~~~~~~~~~~~~~~~\i{isPreferred}(R_2,R_3,X_1, \dots, X_m) && \label{crp:4:rule:2} \\
& \leftarrow \i{isPreferred}(R,R,X_1, \dots, X_m) && \label{crp:4:rule:3} \\
& \leftarrow \i{isPreferred}(r_1,r_2,X_1, \dots, X_m), X_{r_1}>0, X_{r_2}>0 && \label{crp:4:rule:4} \\
&dominate(ap(X_1, \dots, X_m), ap(Y_1, \dots, Y_m)) \leftarrow 
    ap(X_1, \dots, X_m), ap(Y_1, \dots, Y_m), && \nonumber \\
&\hspace{1.5cm} \i{isPreferred}(r_1,r_2, X_1, \dots, X_m), \i{isPreferred}(r_1,r_2, Y_1, \dots, Y_m), 
 X_{r_1}>0, Y_{r_2}>0 && \label{crp:4:rule:5}
\end{flalign}
}

We say an assumption program $\Pi_1$ {\em dominates} an assumption program $\Pi_2$ if an answer set of $\Pi_1$ {dominates} an answer set of $\Pi_2$.
Indeed, our translation guarantees that if $\Pi_1$ dominates $\Pi_2$, all answer sets of $\Pi_1$ dominates any answer sets of $\Pi_2$. 
Rule \eqref{crp:4:atom} says that the assumption program $AP(x_1, \dots, x_m)$ dominates the assumption program $AP(y_1, \dots, y_m)$ if there exists a rule $i$ in $\Pi$ that is replaced by its $x_i$-th assumption in $AP(x_1, \dots, x_m)$, by its $y_i$-th assumption in $AP(y_1, \dots, y_m)$, and $x_i < y_i$.
Rules \eqref{crp:4:rule:1}, \eqref{crp:4:rule:2}, \eqref{crp:4:rule:3}, \eqref{crp:4:rule:4} are the set of rules in the semantics of $\crpt$ with the extended signature $\sigma'$.
Rule \eqref{crp:4:rule:5} says that $AP(x_1, \dots, x_m)$ dominates $AP(y_1, \dots, y_m)$ if 
$\i{isPreferred}(r_1,r_2)$ is true in both assumption programs while $r_1$ is applied in $AP(x_1, \dots, x_m)$ and $r_2$ is applied in $AP(y_1, \dots, y_m)$.

\medskip
\noindent
{\bf 4.}\ \ 
To define candidate answer sets in the semantics of $\crpt$, ${\sf crp2asp}(\Pi)$ contains
{\small
\beq 
candidate(X_1, \dots, X_m) \leftarrow ap(X_1, \dots, X_m), \{ dominate(P, ap(X_1, \dots, X_m)) \}0 
\eeq{crp:5}
}
Rule \eqref{crp:5} says that the answer sets of $AP(x_1, \dots, x_m)$ are candidate answer sets if there does not exist an assumption program $P$ that dominates $AP(x_1, \dots, x_m)$.

\medskip
\noindent
{\bf 5.}\ \ 
To define the preference between two candidate answer sets and find preferred answer sets, ${\sf crp2asp}(\Pi)$ contains
{\small
\begin{flalign}
&lessCrRulesApplied(ap(X_1, \dots, X_m), ap(Y_1, \dots, Y_m)) \leftarrow && \nonumber\\
&~~~~~~~~~~~~~~~~candidate(X_1, \dots, X_m), candidate(Y_1, \dots, Y_m), && \nonumber \\
&~~~~~~~~~~~~~~~~1\{X_1\neq Y_1; \dots ; X_m\neq Y_m\}, X_1\leq Y_1, \dots , X_m\leq Y_m  && \label{crp:6} \\
&pAS(X_1, \dots, X_m) \leftarrow candidate(X_1, \dots, X_m), \{ lessCrRulesApplied(P, ap(X_1, \dots, X_m)) \}0 && \label{crp:7}
\end{flalign}
}
Rule \eqref{crp:6} says that for any different assumption programs $AP(x_1, \dots, x_m)$ and $AP(y_1, \dots, y_m)$ whose answer sets are candidate answer sets, if all the choices in $AP(x_1, \dots, x_m)$ is not worse than
\footnote{
i.e., for any rule $i$ in $\Pi$, if it is applied in $AP(x_1, \dots, x_m)$, it must be applied in $AP(y_1, \dots, y_m)$; if it is replaced by its $x_i$-th assumption in $AP(x_1, \dots, x_m)$, it must be replaced by its $y_i$-th assumption in $AP(y_1, \dots, y_m)$ and $x_i \leq y_i$
}
those in $AP(y_1, \dots, y_m)$, then the former must apply less cr-rules or ordered cr-rules than the latter.
Rule \eqref{crp:7} says that the answer sets of $AP(x_1, \dots, x_m)$ are preferred answer sets if these answer sets are candidate answer sets and there does not exist an assumption program $P$ that applies less cr-rules than $AP(x_1, \dots, x_m)$.


Let $S$ be an optimal answer set of ${\sf crp2asp}(\Pi)$; $x_1, \dots, x_m$ be a list of integers.
If $S \models ap(x_1, \dots, x_m)$, we define the set $shrink(S, x_1, \dots, x_m)$ as a {\em generalized answer set on $\sigma$} of ${\sf crp2asp}(\Pi)$; if $S \models candidate(x_1, \dots, x_m)$, we define the set $shrink(S, x_1, \dots, x_m)$ as a {\em candidate answer set on $\sigma$} of ${\sf crp2asp}(\Pi)$; if $S \models pAS(x_1, \dots, x_p)$, we define the set $shrink(S, x_1, \dots, x_p)$ as a {\em preferred answer set on $\sigma$} of ${\sf crp2asp}(\Pi)$. 

\BOCC
\begin{prop} \label{prop:crp2asp:gAS}
The projections of the generalized answer sets of a $\crpt$ program $\Pi$ onto its signature $\sigma$ are exactly the generalized answer sets on $\sigma$ of ${\sf crp2asp}(\Pi)$.
\end{prop}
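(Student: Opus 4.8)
The plan is to reduce the statement to Proposition~\ref{prop:crp:SP} and then to establish a precise correspondence between the answer sets of the assumption programs $AP(x_1,\dots,x_m)$ of $\Pi$ and the slices $shrink(S,x_1,\dots,x_m)$ of the optimal answer sets $S$ of ${\sf crp2asp}(\Pi)$. By Proposition~\ref{prop:crp:SP}, the projections onto $\sigma$ of the generalized answer sets of $\Pi$ are exactly the projections onto $\sigma$ of the answer sets of the assumption programs of $\Pi$. It therefore suffices to show that the latter family coincides with the family $\{ shrink(S,x_1,\dots,x_m) \mid S$ is an optimal answer set of ${\sf crp2asp}(\Pi)$ and $S\models ap(x_1,\dots,x_m) \}$.

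First I would apply a splitting argument to ${\sf crp2asp}(\Pi)$. The predicates $dominate$, $candidate$, $lessCrRulesApplied$, and $pAS$ occur only in the heads of rules \eqref{crp:4:atom}, \eqref{crp:4:rule:5}, \eqref{crp:5}, \eqref{crp:6}, and \eqref{crp:7}, and never in the body of any rule that produces a $\sigma$-atom or an $\i{isPreferred}$-atom; moreover none of these rules is a constraint. Hence they form an upper stratum that only adds atoms on top of each answer set of the lower part and never eliminates one. Since $shrink(S,x_1,\dots,x_m)$ retains only $\sigma$-atoms, this upper stratum is irrelevant to the statement, and I may work with the lower program $P_{base}$ consisting of the choice rule \eqref{crp:1:choice}, the weak constraint \eqref{crp:1:wc}, the core rules \eqref{crp:3:regular}--\eqref{crp:2}, and the $\i{isPreferred}$ rules and constraints \eqref{crp:4:rule:1}--\eqref{crp:4:rule:4}.

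Next I would observe that $P_{base}$ decomposes name space by name space. The choice rule together with the weak constraint forms a bottom module that merely generates a set of $ap(\cdot)$ atoms over exactly the assumption degree ranges, so that name spaces are in bijection with assumption programs. Every remaining rule of $P_{base}$ carries the same tuple $X_1,\dots,X_m$ across head and body, so distinct tuples $(x_1,\dots,x_m)$ use pairwise disjoint atoms; thus, given the $ap$ atoms, $P_{base}$ splits into one independent module per tuple. For a tuple with $ap(x_1,\dots,x_m)$ absent, every core rule body is unsatisfied, so the module derives nothing and violates no constraint. For a tuple with $ap(x_1,\dots,x_m)$ present, the module is, after the renaming $a({\bf v})\mapsto a({\bf v},x_1,\dots,x_m)$ and $\i{isPreferred}(r_1,r_2)\mapsto \i{isPreferred}(r_1,r_2,x_1,\dots,x_m)$, syntactically identical to the assumption program $AP(x_1,\dots,x_m)$: rule \eqref{crp:3:regular} reproduces the regular rules, \eqref{crp:3:cr} applies exactly the cr-rules with $x_i=1$, \eqref{crp:2} installs the $x_i$-th assumption of each ordered and ordered cr-rule, and \eqref{crp:4:rule:1}--\eqref{crp:4:rule:4} are the $\i{isPreferred}$ rules and the two constraints of $AP(x_1,\dots,x_m)$. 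Consequently the answer sets of this module are exactly those of $AP(x_1,\dots,x_m)$ under the renaming, and in particular the module is consistent iff $AP(x_1,\dots,x_m)$ is.

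Finally I would combine these observations with the optimization. An answer set of $P_{base}$ can set $ap(x_1,\dots,x_m)$ true only when the corresponding module is consistent, since otherwise one of the constraints \eqref{crp:4:rule:3}--\eqref{crp:4:rule:4} (or a core constraint) fires and destroys the answer set; conversely, by the per-name-space splitting, each consistent module can be switched on independently. Because the weak constraint \eqref{crp:1:wc} assigns penalty $-1$ per true $ap$ atom, an optimal answer set turns on $ap(x_1,\dots,x_m)$ for precisely the consistent assumption programs and selects one answer set of each corresponding module. Hence $S\models ap(x_1,\dots,x_m)$ implies that $AP(x_1,\dots,x_m)$ is consistent and that $shrink(S,x_1,\dots,x_m)$ is the $\sigma$-projection of one of its answer sets; and as $S$ ranges over all optimal answer sets, these slices realize every answer set of every consistent assumption program. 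Together with Proposition~\ref{prop:crp:SP} this yields the claimed equality. The hard part will be the modularity bookkeeping: rigorously verifying that the extra arguments $X_1,\dots,X_m$ make the name-space modules mutually independent and that each switched-on module coincides with $AP(x_1,\dots,x_m)$ up to renaming, together with the argument that the maximizing weak constraint turns on $ap(\cdot)$ for exactly the consistent assumption programs and for no inconsistent one, since an inconsistent module necessarily forces a constraint to fire.
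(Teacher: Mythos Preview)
Your proposal is correct and follows essentially the same route as the paper's proof (which appears as part~(a) of Theorem~\ref{thm:crp2asp}): reduce via Proposition~\ref{prop:crp:SP}, peel off the upper stratum \eqref{crp:4:atom}, \eqref{crp:4:rule:5}, \eqref{crp:5}--\eqref{crp:7} to obtain a base program, split that base by the name-space tuple $(x_1,\dots,x_m)$, identify each switched-on slice with $AP(x_1,\dots,x_m)$ up to renaming, and use the weak constraint to argue that optimal answer sets turn on $ap(\cdot)$ exactly for the consistent assumption programs. The paper carries this out with explicit appeals to Lemma~\ref{lem:2} for the stratum removal and the Splitting Theorem for the per-tuple decomposition, and it records the small but necessary fact that the empty set is an answer set of the base program (so an optimal answer set exists); you should add that observation and tighten the sentence about constraints ``firing'', since an assumption program can be inconsistent without any explicit constraint being violated.
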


\begin{prop} \label{prop:crp2asp:cAS}
The projections of the candidate answer sets of a $\crpt$ program $\Pi$ onto its signature $\sigma$ are exactly the candidate answer sets on $\sigma$ of ${\sf crp2asp}(\Pi)$.
\end{prop}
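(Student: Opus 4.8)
The plan is to exploit the modular structure of ${\sf crp2asp}(\Pi)$ and reduce the claim to the correctness of the $dominate$ predicate, using the dominance-preserving correspondence between answer sets of assumption programs and generalized answer sets of $\Pi$ established in the proof of Proposition~\ref{prop:crp:SP}. First I would pin down the shape of an arbitrary optimal answer set $S$ of ${\sf crp2asp}(\Pi)$: writing $T(S)=\{\vec x \mid ap(\vec x)\in S\}$, (a)~$T(S)$ is exactly the set of assumption-degree lists $\vec x$ for which the assumption program $AP(\vec x)$ is consistent; (b)~for each $\vec x\in T(S)$ the $\vec x$-tagged atoms of $S$ are a tagged copy of some answer set $M_{\vec x}$ of $AP(\vec x)$; and (c)~the atoms of $S$ over $dominate$, $\i{isPreferred}$, $candidate$, $lessCrRulesApplied$ and $pAS$ are the closure of those tagged copies under rules~\eqref{crp:4:atom}--\eqref{crp:7}; and, conversely, every configuration of this shape is an optimal answer set. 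This is a splitting-theorem argument: rule~\eqref{crp:1:choice} guesses $T$; for each $\vec x$ the block consisting of \eqref{crp:2}, \eqref{crp:3:regular}, \eqref{crp:3:cr} and \eqref{crp:4:rule:1}--\eqref{crp:4:rule:4} with all degree arguments fixed to $\vec x$ is, once $ap(\vec x)$ is assumed, literally $AP(\vec x)$ with every atom $a$ renamed to $a(\vec x)$; and the remaining cross-tag rules \eqref{crp:4:atom}, \eqref{crp:4:rule:5}, \eqref{crp:5}, \eqref{crp:6}, \eqref{crp:7} are non-constraint, so they never destroy consistency and are fully determined by the lower part. Consistency of the $\vec x$-block therefore coincides with consistency of $AP(\vec x)$, and the weak constraint~\eqref{crp:1:wc} then forces $T(S)$ to contain every such $\vec x$. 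In particular, $\{\,shrink(S,\vec x)\mid S$ optimal, $S\models ap(\vec x)\,\}$ is exactly the set of $\sigma$-projections of answer sets of consistent assumption programs of $\Pi$, which by Proposition~\ref{prop:crp:SP} equals the set of $\sigma$-projections of generalized answer sets of $\Pi$.

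Next I would show that, for an optimal $S$ and $\vec x,\vec y\in T(S)$, $dominate(ap(\vec x),ap(\vec y))\in S$ iff $AP(\vec x)$ dominates $AP(\vec y)$ in the sense defined just before the translation --- equivalently, by the remark following Proposition~\ref{prop:crp:SP}, iff the generalized answer set corresponding to $M_{\vec x}$ dominates the one corresponding to $M_{\vec y}$. Rule~\eqref{crp:4:atom} yields $dominate$ precisely when $0<x_i<y_i$ for some ordered rule or ordered cr-rule $i$, which is exactly clause~(i), atom-wise dominance: in $H_\Pi$ the transitive closure of the facts $\i{prefer}(\i{choice}(r,j),\i{choice}(r,j+1))$ makes $\i{choice}(r,x_i)$ preferred to $\i{choice}(r,y_i)$ iff $x_i<y_i$, and here that $\i{choice}$ machinery is replaced by a direct comparison of assumption degrees. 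For clause~(ii), rule-wise dominance, I would check, tag by tag, that rules~\eqref{crp:4:rule:1}--\eqref{crp:4:rule:3} compute the same transitive closure $\i{isPreferred}(\cdot,\cdot,\vec x)$ of the $\i{prefer}(\cdot,\cdot,\vec x)$ atoms as the $\crpt$ semantics computes $\i{isPreferred}$ in $H_\Pi$ (the tagged $\i{prefer}$ atoms being exactly the tagged copies produced by \eqref{crp:3:regular}/\eqref{crp:2}), that \eqref{crp:4:rule:4} is the tagged copy of the assumption program's own constraint $\leftarrow x_{r_1}>0,\ x_{r_2}>0,\ \i{isPreferred}(r_1,r_2)$, and that \eqref{crp:4:rule:5} fires exactly when some $r_1,r_2\le l$ satisfy $\i{isPreferred}(r_1,r_2)$ in both tagged copies with $r_1$ applied in $AP(\vec x)$ (i.e.\ $x_{r_1}>0$) and $r_2$ applied in $AP(\vec y)$ (i.e.\ $y_{r_2}>0$), which is clause~(ii). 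I would also note that for a consistent $AP(\vec x)$ neither clause can hold with $\vec y=\vec x$, so ``dominated by some other generalized answer set'' and ``dominated by some generalized answer set'' agree, which is what rule~\eqref{crp:5} needs.

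Finally, by rule~\eqref{crp:5}, $candidate(\vec x)\in S$ iff $ap(\vec x)\in S$ and no $dominate(ap(\vec y),ap(\vec x))$ is in $S$; by the two steps above this holds iff $AP(\vec x)$ is consistent and is dominated by no consistent assumption program, which, by the ``all answer sets versus any answer set'' property of assumption-program dominance together with Proposition~\ref{prop:crp:SP}, is equivalent to: the generalized answer set corresponding to $M_{\vec x}$ is dominated by no generalized answer set of $\Pi$, i.e.\ it is a candidate answer set of $\Pi$. Since across all optimal $S$ every answer set of every consistent $AP(\vec x)$ arises as $M_{\vec x}$, taking $\sigma$-projections gives that $\{\,shrink(S,\vec x)\mid S$ optimal, $S\models candidate(\vec x)\,\}$ equals the set of $\sigma$-projections of candidate answer sets of $\Pi$, which is the statement.

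I expect the second step to be the delicate one: one must match the rule-wise rules~\eqref{crp:4:rule:1}--\eqref{crp:4:rule:5} against clause~(ii) while being careful that the per-tag transitive closure is taken over exactly the right $\i{prefer}$ facts; that, since ${\sf crp2asp}(\Pi)$ deliberately omits the $\i{choice}(\cdot)$ terms of $H_\Pi$, the atom-wise part of dominance is carried solely by~\eqref{crp:4:atom} and is not double-counted through~\eqref{crp:4:rule:5}; and that the constraints~\eqref{crp:4:rule:3}--\eqref{crp:4:rule:4} line up tag by tag with the assumption program's own $\i{isPreferred}$ block, so that the identification ``consistency of the $\vec x$-block $=$ consistency of $AP(\vec x)$'' used in the first step genuinely holds.
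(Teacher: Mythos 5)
Your proposal is correct and follows essentially the same route as the paper's proof of Theorem~\ref{thm:crp2asp}(b): characterize the optimal answer sets of ${\sf crp2asp}(\Pi)$ block-by-block as tagged copies of the answer sets of the consistent assumption programs (splitting theorem, the definitional-extension lemma for the non-constraint rules \eqref{crp:4:atom}--\eqref{crp:7}, and the weak constraint for maximality of the $ap(\cdot)$ atoms), then match $dominate$ against atom-wise and rule-wise dominance of assumption programs (hence, via Proposition~\ref{prop:crp:SP} and the remark following it, dominance of generalized answer sets), and read off candidacy from rule~\eqref{crp:5}. The only difference is presentational: you isolate a structure lemma and a dominance-correspondence lemma, whereas the paper argues (b.1)/(b.2) directly through chains of equivalences built on its part~(a).
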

\EOCC

\begin{thm} \label{thm:crp2asp} For any $\crpt$ program $\Pi$ of signature $\sigma$,
(a)  the projections of the generalized answer sets of $\Pi$ onto $\sigma$ are exactly the generalized answer sets on $\sigma$ of ${\sf crp2asp}(\Pi)$.
(b)  the projections of the candidate answer sets of $\Pi$ onto $\sigma$ are exactly the candidate answer sets on $\sigma$ of ${\sf crp2asp}(\Pi)$.
(c)  the preferred answer sets of $\Pi$ are exactly the preferred answer sets on $\sigma$ of ${\sf crp2asp}(\Pi)$.
\end{thm}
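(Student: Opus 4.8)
The plan is to reduce all three claims to statements about the \emph{assumption programs} of $\Pi$ and then to check, one rule group at a time, that ${\sf crp2asp}(\Pi)$ reifies consistency, domination, candidacy and preference as the atoms $ap(\cdot)$, $dominate(\cdot)$, $candidate(\cdot)$, $lessCrRulesApplied(\cdot)$ and $pAS(\cdot)$. For part~(a) I would prove the $\crpt$ analogue of Proposition~\ref{prop:lpod2asp:base} by the splitting set theorem: the collection of $ap(\cdot)$ atoms is a splitting set of the subprogram formed by \eqref{crp:1:choice}, \eqref{crp:1:wc}, \eqref{crp:3:regular}, \eqref{crp:3:cr}, \eqref{crp:2} and \eqref{crp:4:rule:1}--\eqref{crp:4:rule:4}; once the true $ap(\cdot)$ atoms are fixed, the bottom part decomposes into independent copies, one per true $ap(\bar x)$, each copy being the assumption program $AP(\bar x)$ with every atom $a({\bf v})$ renamed to $a({\bf v},\bar x)$ and its $\i{isPreferred}$ rules and constraints renamed accordingly. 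Hence a set $T$ of $ap(\cdot)$ atoms extends to an answer set iff every $ap(\bar x)\in T$ names a consistent assumption program, and the weak constraint \eqref{crp:1:wc} selects the unique maximal such $T$. The remaining rules \eqref{crp:4:atom}, \eqref{crp:4:rule:5}, \eqref{crp:5}, \eqref{crp:6} and \eqref{crp:7} add only atoms over the fresh predicates $dominate$, $candidate$, $lessCrRulesApplied$ and $pAS$, contain no integrity constraint, and do not affect the weak-constraint penalty, so they conservatively extend each optimal answer set. Thus $shrink(S,\bar x)$ for $S\models ap(\bar x)$ ranges exactly over the projections onto $\sigma$ of the answer sets of the consistent assumption programs of $\Pi$, which by Proposition~\ref{prop:crp:SP} are exactly the $\sigma$-projections of the generalized answer sets of $\Pi$; this is part~(a).

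For part~(b) I would use the fact asserted just before the translation that domination of one assumption program by another depends only on their assumption-degree lists and coincides with domination of the corresponding generalized answer sets; together with part~(a) and Proposition~\ref{prop:crp:SP} this reduces the claim to a direct inspection of the two cases in the definition of domination. Case~(i)---a rule $i$ replaced by its $j_1$-th assumption in one program and by its $j_2$-th in the other with $j_1<j_2$---is exactly rule \eqref{crp:4:atom}, whose guard $0<X_i$, $X_i<Y_i$ reads ``$1\le x_i<y_i$'' (for an ordered cr-rule $x_i=0$ means ``omitted'', not ``$0$-th assumption'', so it is correctly excluded). Case~(ii)---a $\i{prefer}(r_1,r_2)$ true in answer sets of both programs with $r_1$ applied in the first and $r_2$ applied in the second---is handled by \eqref{crp:4:rule:1}--\eqref{crp:4:rule:5}, where \eqref{crp:4:rule:1}--\eqref{crp:4:rule:4} are the $\i{isPreferred}$ rules and constraints of the assumption programs lifted to $\sigma'$ and \eqref{crp:4:rule:5} reads off the domination. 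It then follows that $dominate(ap(\bar x),ap(\bar y))\in S$ iff $AP(\bar x)$ dominates $AP(\bar y)$, so \eqref{crp:5} puts $candidate(\bar x)$ in $S$ iff no consistent assumption program dominates $AP(\bar x)$, i.e.\ iff the generalized answer sets associated with $AP(\bar x)$ are candidate answer sets of $\Pi$; combined with part~(a) this yields part~(b).

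For part~(c) the key observation is that the body of \eqref{crp:6} requires both $candidate(X_1,\dots,X_m)$ and $candidate(Y_1,\dots,Y_m)$, which---by case~(i) domination and part~(b)---forces $x_i=y_i$ for every ordered-rule index and for every ordered-cr-rule index with $x_i>0$. Hence the hypothesis of \eqref{crp:6} (``$x_i\le y_i$ for all $i$, with at least one strict inequality'') is equivalent to ``the set of cr-rules and ordered cr-rules applied in $AP(\bar x)$ is a proper subset of that applied in $AP(\bar y)$'', which under the correspondence of part~(a) is exactly $G_1\cap atoms(H_{\Pi},\{\i{appl}\})\subset G_2\cap atoms(H_{\Pi},\{\i{appl}\})$ for the associated generalized answer sets $G_1,G_2$ (recalling that an ordered cr-rule contributes both an $\i{appl}(i)$ and an $\i{appl}(\i{choice}(i,j))$ atom, whereas an ordered rule contributes a single $\i{appl}(\i{choice}(r,j))$ that is pinned down once $x_r=y_r$). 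Therefore $lessCrRulesApplied(ap(\bar x),ap(\bar y))\in S$ iff $AP(\bar x)$ and $AP(\bar y)$ are candidates with the applied set of the former properly inside that of the latter, and \eqref{crp:7} puts $pAS(\bar x)$ in $S$ iff $AP(\bar x)$ is a candidate that is minimal in this order, i.e.\ iff $shrink(S,\bar x)$ is a preferred answer set of $\Pi$; this is part~(c).

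The routine part is~(a), which mirrors the proof of Proposition~\ref{prop:lpod2asp:base}; the main obstacle is the bookkeeping behind parts~(b) and~(c). The original $\crpt$ semantics expresses both domination and the $\subset$-preference through $\i{choice}(\cdot)$ and $\i{appl}(\cdot)$ atoms, and a single generalized answer set may ``apply'' several $\i{choice}$ atoms of the same ordered rule, whereas ${\sf crp2asp}(\Pi)$ records a single assumption index per rule. Making this correspondence precise---showing that restricting to ``one index per ordered rule'' loses no candidate or preferred answer set up to $\sigma$-projection, that the lifted rules \eqref{crp:4:rule:1}--\eqref{crp:4:rule:4} reproduce the transitive-closure reasoning about $\i{isPreferred}$ inside every assumption program, and that ordered cr-rules (which contribute two kinds of $\i{appl}$ atom) are treated consistently in both the $dominate$ and the $lessCrRulesApplied$ definitions---is where the care is needed.
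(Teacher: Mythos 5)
Your proposal is correct and follows essentially the same route as the paper's proof: part~(a) is obtained, as you describe, from Proposition~\ref{prop:crp:SP} plus the splitting theorem applied to the pairwise disjoint per-$ap(\cdot)$ copies of the base program (with the weak constraint selecting the maximal set of consistent assumption programs and a conservative-extension argument, Lemma~\ref{lem:2}, for the $dominate$/$candidate$/$lessCrRulesApplied$/$pAS$ rules), and parts~(b) and~(c) proceed by the same case analysis of the two forms of domination, including your key observation that joint candidacy in rule~\eqref{crp:6} forces $x_i=y_i$ wherever $x_i>0$, so that its body expresses proper inclusion of the applied (ordered) cr-rules. The only inaccuracy is immaterial to the argument: a generalized answer set cannot apply two $\i{choice}$ atoms of the same ordered rule, since the constraint $\bot\leftarrow \i{appl}(R1),\i{appl}(R2),\i{isPreferred}(R1,R2)$ together with the $\i{prefer}$ chain forbids it, which is exactly the footnote used in the paper's proof of Proposition~\ref{prop:crp:SP}.
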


\smallskip
\noindent{\sl Example~\ref{ex:crp2asp} Continued: } 
\BOCC
Given a $\crpt$ program $\Pi$:
\[
\ba {rl}
& q \leftarrow t. \\
& s \leftarrow t. \\
& p \leftarrow \no\ q. \\
& r \leftarrow \no\ s. \\
& \leftarrow p, r. \\
&\\
1:& t \stackrel{+}\leftarrow. \\
2:& q \times s \stackrel{+}\leftarrow. \\
\ea
\]
\EOCC
The translated ASP program ${\sf crp2asp}(\Pi_3)$ is

{\small
\begin{lstlisting}
%%%% 1 %%%%
{ap(X1,X2): X1=0..1, X2=0..2}.             :~ ap(X1,X2). [-1,X1,X2]

%%%% 2 %%%%
q(X1,X2) :- ap(X1,X2), t(X1,X2).           s(X1,X2) :- ap(X1,X2), t(X1,X2).
p(X1,X2) :- ap(X1,X2), not q(X1,X2).       r(X1,X2) :- ap(X1,X2), not s(X1,X2).
:- ap(X1,X2), p(X1,X2), r(X1,X2).

% 1: t <+-.
t(X1,X2) :- ap(X1,X2), X1=1.

% 2: q*s <+-.
q(X1,X2) :- ap(X1,X2), X2=1.               s(X1,X2) :- ap(X1,X2), X2=2.

%%%% 3 %%%%
dominate(ap(X1,X2), ap(Y1,Y2)) :- ap(X1,X2), ap(Y1,Y2), 0<X1, X1<Y1.
dominate(ap(X1,X2), ap(Y1,Y2)) :- ap(X1,X2), ap(Y1,Y2), 0<X2, X2<Y2.

%%%% 4 %%%%
candidate(X1,X2) :- ap(X1,X2), {dominate(P,ap(X1,X2))}0.

%%%% 5 %%%%
lessCrRulesApplied(ap(X1,X2), ap(Y1,Y2)) :- candidate(X1,X2), candidate(Y1,Y2), 
        1{X1!=Y1;X2!=Y2}, X1<=Y1, X2<=Y2.
pAS(X1,X2) :- candidate(X1,X2), {lessCrRulesApplied(P,ap(X1,X2))}0.
\end{lstlisting}}

The optimal answer set $S$ of ${\sf crp2asp}(\Pi_3)$ is 
\[
\ba {rccl}
\{pAS(1,0), & candidate(1,0), & ap(1,0), & t(1,0), q(1,0), s(1,0), \\
pAS(0,1), & candidate(0,1), & ap(0,1), & q(0,1), r(0,1), \\
& & ap(0,2), & p(0,2), s(0,2), \\
& candidate(1,1), & ap(1,1), & t(1,1), q(1,1), s(1,1), \\
& & ap(1,2), & t(1,2), q(1,2), s(1,2), \dots 
\}.
\ea
\]
Since $S$ satisfies $ap(1,0)$, $ap(0,1)$, $ap(0,2)$, $ap(1,1)$, $ap(1,2)$, the generalized answer sets on $\sigma$ of ${\sf crp2asp}(\Pi_3)$ are

\begin{minipage}[c]{0.35\textwidth}
\[
\ba {l}
shrink(S,1,0) = \{ t,q,s \} \\
shrink(S,0,1) = \{ q,r \} \\
shrink(S,0,2) = \{ p,s \} \\
\ea
\]
\end{minipage}
\begin{minipage}[c]{0.5\textwidth}
\[
\ba{l}
shrink(S,1,1) = \{ t,q,s \} \\
shrink(S,1,2) = \{ t,q,s \} \\
\ea
\]
\end{minipage}

\smallskip\noindent
which are exactly the projections of the generalized answer sets of $\Pi_3$ onto $\sigma$.
Similarly, we observe that the candidate (preferred, respectively) answer sets on $\sigma$ of ${\sf crp2asp}(\Pi_3)$ are exactly the projections of the candidate (preferred, respectively) answer sets of $\Pi_3$ onto $\sigma$.

Furthermore, let $\Pi_3' = \Pi_3 \cup \{ \i{prefer}(2,1). \}$. The translation ${\sf crp2asp}(\Pi_3')$ is ${\sf crp2asp}(\Pi_3) \cup R$, where $R$ is the set of the following rules:

{\small
\begin{lstlisting}
%%%% 2 %%%%
prefer(2,1,X1,X2) :- ap(X1,X2).

%%%% 3 %%%%
isPreferred(R1,R2,X1,X2) :- prefer(R1,R2,X1,X2).
isPreferred(R1,R3,X1,X2) :- prefer(R1,R2,X1,X2), isPreferred(R2,R3,X1,X2).
:- isPreferred(R,R,X1,X2).
:- isPreferred(2,1,X1,X2), X2>0, X1>0.

dominate(ap(X1,X2), ap(Y1,Y2)) :- ap(X1,X2), ap(Y1,Y2), 
        isPreferred(2,1,X1,X2), isPreferred(2,1,Y1,Y2), X2>0, Y1>0.
\end{lstlisting}}

The optimal answer set $S$ of ${\sf crp2asp}(\Pi_3')$ is 
\[
\ba {lccl}
\{ &  & ap(1,0), & t(1,0), q(1,0), s(1,0), \\
~~pAS(0,1), & candidate(0,1), & ap(0,1), & q(0,1), r(0,1), \\
& & ap(0,2), & p(0,2), s(0,2), \dots 
\}
\ea
\]
and it is easy to check that the generalized (/candidate/preferred) answer sets on $\sigma$ of ${\sf crp2asp}(\Pi_3')$ are exactly the projections of the generalized (/candidate/preferred) answer sets of $\Pi_3'$ onto $\sigma$.

\BOCC
\subsection{Modification to Guarantee Consistency}

Based on our translation, we can test on any preference semantics by writing them down in ASP.
\EOCC


\section{Related Work and Conclusion} 


We presented reductions of LPOD and $\crpt$ into the standard ASP language, which explains the new constructs for preference handling in terms of the standard ASP language. The one-pass translations are theoretically interesting. They may be a useful tool for studying the mathematical properties of LPOD and $\crpt$ programs by reducing them to more well-known  properties of standard answer set programs.  
Both translations are ``almost'' modular in the sense that the  translations are rule-by-rule but the argument of each atom representing the assumption degrees may need to be expanded when new rules are added. 

%

However, the direct implementations may not lead to effective implementations. The size of ${\sf lpod2asp}(\Pi)$ and ${\sf crp2asp}(\Pi)$ after grounding could be exponential to the size of the non-regular rules in $\Pi$. This is because these translations compare all possible assumption programs whose number is exponential to the size of non-regular rules. One may consider parallelizing the computation of assumption programs since they are disjoint from each other according to the translations. 


In a sense, our translations are similar to the meta-programming approach to handle preference in ASP (e.g., \cite{delgrande03aframework}) in that we turn LPOD and $\crpt$ into answer set programs that do not have the built-in notion of preference. 

In~\cite{brewka02implementing}, LPOD is implemented using {\sc smodels}. The implementation interleaves the execution of two programs--a generator which produces candidate answer sets and a tester which checks whether a given candidate answer set is maximally preferred or produces a more preferred candidate if it is not. An implementation of $\crp$ reported in~\cite{balduccini07cr-models} uses a similar algorithm. 
In contrast, the reductions shown in this paper can be computed by calling an answer set solver one time without the need for iterating the generator and the tester. This feature may be useful for debugging LPOD and $\crpt$ programs because it allows us to compare all candidate and preferred answer sets globally.

Asprin \cite{brewka15asprin} provides a flexible way to express various preference relations over answer sets and is implemented in {\sc clingo}. Similar to the existing LPOD solvers, {\sc clingo} makes iterative calls to find preferred answer sets, unlike the one-shot execution as we do.

Asuncion {\sl et al.} \citeyear{asuncion14logic} presents a first-order semantics of logic programs with ordered disjunction by translation into second-order logic whereas our translation is into the standard answer set programs.


\BOCC
To the best of our knowledge, there is no implementation for $\crpt$. 
The existing implementations for LPOD \cite{brewka04logic} and $\crp$ \cite{bald07} are all based on a generator (an ASP program that creates all candidate or generalized answer sets respectively) and a tester (an ASP program that checks whether a given answer set is preferred), which require multiple translation and interleaved executing of ASP programs. In contrast, our translation yields an implementation of LPOD, $\crp$
\footnote{
Since $\crp$ is a special case of $\crpt$ where there is no ordered regular rule or ordered cr-rule, our translation is also capable of reducing $\crp$ into ASP program.
}
, and $\crpt$ with one-pass translation and one single execution time. 
However, the size of ${\sf lpod2asp}(\Pi)$ and ${\sf crp2asp}(\Pi)$ after grounding is exponential to the size of the non-regular rules in $\Pi$. This is because our translation will compare all possible assumption programs whose number is exponential to the size of non-regular rules in nature. 
One potential way to improve the efficiency of our translation is to execute each assumption program in parallel, which is doable since each assumption program in our translation is disjoint with each other.
\EOCC

\medskip\noindent
{\bf Acknowledgements:} 
We are grateful to the anonymous referees for their useful comments. This work was partially supported by the National Science Foundation under Grant IIS-1526301.

\bibliographystyle{acmtrans}


\newpage
 \setcounter{page}{1}
\title{Appendix: Translating LPOD and $\crpt$ into Standard Answer Set Programs}

\begin{center}
{\large\textnormal{Online appendix for the paper}}   \\
\medskip
{\Large {Translating LPOD and $\crpt$ into Standard Answer Set Programs}}\\

\medskip
{\large\textnormal{published in Theory and Practice of Logic Programming}}

\medskip
Joohyung Lee and Zhun Yang \\ 
{\sl School of Computing, Informatics and Decision Systems Engineering \\
Arizona State University, Tempe, AZ, USA\\
\email{\{joolee, zyang90\}@asu.edu}}

\end{center}

\thispagestyle{empty}

\begin{appendix}

\section{Proof of Proposition \ref{prop:lpod2asp:hardsplit} } \label{sec:proof:prop:lpod2asp:hardsplit}

Let $S$ be a set of atoms and let $\sigma$ be a signature. By $S|_{\sigma}$, we denote the projection of $S$ onto $\sigma$. Let $S'$ be a set of atoms. We say $S$ {\em agrees with} $S'$ {\em onto} $\sigma$ if $S|_{\sigma} = S'|_{\sigma}$.

In the following proofs, whenever we talk about an LPOD program $\Pi$, we refer to \eqref{program:lpod} as its ordered disjunction part $\Pi_{od}$.

\begin{lemma} \label{lem:1}
Let $\Pi$ be an answer set program, $S$ an answer set of $\Pi$, and $A$ an atom in $S$. 
\begin{itemize}
\ii[{\bf (a)}]
$S$ is an answer set of $\Pi \cup \{ A \leftarrow body \}$. 

\ii[{\bf (b)}]
$S$ is an answer set of $\Pi \cup \{ head \leftarrow body \}$ if $S\not \vDash body$.

\ii[{\bf (c)}]
$S$ is an answer set of $\Pi \setminus \{ head \leftarrow body \}$ if $S\not \vDash body$.

\ii[{\bf (d)}]
$S$ is an answer set of $\Pi \cup \{ constraint \}$ if $S\vDash constraint$.

\ii[{\bf (e)}]
$S$ is an answer set of $\Pi \setminus \{ constraint \}$ if $S\vDash constraint$.
\end{itemize}
Here, $body$ is a conjunction of atoms in $\Pi$ where each atom is possibly preceded by $\no$, $head$ is a disjunction of atoms in $\Pi$, and $constraint$ is a rule of the form $\leftarrow body$.
\end{lemma}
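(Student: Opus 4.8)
The plan is to argue directly from the reduct characterization of answer sets: $S$ is an answer set of a program $\Gamma$ iff $S$ is a model of $\Gamma$ and $S$ is a minimal model of the reduct $\Gamma^S$ (the reduct of \cite{gel91b} for the normal/disjunctive fragment, or of \cite{ferraris11logic} in the presence of aggregates — the argument is insensitive to the choice). I will use two standard properties of the reduct: (i) it distributes over unions of programs, $(\Gamma_1\cup\Gamma_2)^S=\Gamma_1^S\cup\Gamma_2^S$, and (ii) $S\models\Gamma$ iff $S\models\Gamma^S$. The five items split naturally into the three ``add a rule'' cases (a), (b), (d) and the two ``remove a rule'' cases (c), (e), and each group follows from one small observation.

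For the adding cases, the observation is: \emph{if $S$ is an answer set of $\Pi$ and $r$ is any rule with $S\models r$, then $S$ is an answer set of $\Pi\cup\{r\}$.} Indeed, $S$ is still a model of $\Pi\cup\{r\}$, and by (i)--(ii), $S\models(\Pi\cup\{r\})^S=\Pi^S\cup\{r^S\}$; moreover any $S'\subsetneq S$ satisfying $\Pi^S\cup\{r^S\}$ would satisfy $\Pi^S$, contradicting the minimality of $S$ as a model of $\Pi^S$. Hence $S$ remains a minimal model of the reduct, i.e. an answer set of $\Pi\cup\{r\}$. Each item then reduces to checking $S\models r$ for the rule added: in (a), $A\in S$ forces $S\models A\leftarrow body$; in (b), $S\not\models body$ makes $head\leftarrow body$ vacuously true in $S$; in (d), $S\models constraint$ is exactly the hypothesis.

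For the removing cases, the observation is: \emph{if $S$ is an answer set of $\Pi$ and $r\in\Pi$ has a body not satisfied by $S$, then $S$ is an answer set of $\Pi\setminus\{r\}$.} Here $S$ is still a model of the smaller program, and $(\Pi\setminus\{r\})^S\subseteq\Pi^S$ with $\Pi^S=(\Pi\setminus\{r\})^S\cup\{r^S\}$. The point is that a rule whose body is false in $S$ contributes to $\Pi^S$ a formula satisfied by \emph{every} subset of $S$: in the Gelfond--Lifschitz reduct $r$ is either deleted or keeps a (positive) body that is already false in $S$ and hence false in every $S'\subseteq S$; in the Ferraris reduct its body is replaced by $\bot$, making $r^S$ a tautology. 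Consequently any $S'\subseteq S$ with $S'\models(\Pi\setminus\{r\})^S$ also satisfies $r^S$, hence $\Pi^S$, so no $S'\subsetneq S$ can model $(\Pi\setminus\{r\})^S$, and $S$ is still a minimal model of the reduct. Items (c) and (e) are immediate instances: in (c) the hypothesis is $S\not\models body$, and $S\models({}\leftarrow body)$ in (e) means exactly $S\not\models body$.

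The only real care needed is the reduct bookkeeping — justifying the per-rule reasoning even though the reduct is a global construction — which is handled by properties (i)--(ii) together with the remark that a rule with false body contributes a formula harmless on all subsets of $S$. I do not expect a genuine obstacle here; this lemma is a bundle of sanity facts whose purpose is to be invoked freely in the subsequent proofs.
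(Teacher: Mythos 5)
Your proof is correct. Note that the paper itself states Lemma~1 without any proof, treating it (together with Lemma~2, which is attributed to \cite{ferraris11logic}) as a bundle of standard facts, so there is no paper argument to compare against; your reduct-based justification is the natural one and fills in exactly what the paper leaves implicit. The two observations you isolate are the right ones and are sound: for (a), (b), (d), adding a rule $r$ with $S\models r$ adds $r^S$ to the reduct, which can only shrink the set of models of the reduct and hence cannot destroy minimality of $S$; for (c), (e), a rule whose body is false in $S$ contributes to $\Pi^S$ a formula true in every subset of $S$ (deleted or with a positive body false in all $S'\subseteq S$ in the Gelfond--Lifschitz case, a tautology in the Ferraris case), so removing it cannot create a smaller model of the reduct.
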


\begin{lemma} \label{lem:1+}
Let $\Pi$ be an answer set program. Let $r$ be a rule of the form $A \leftarrow B_1,\dots,B_m, \no\ C_1, \dots, \no\ C_n$ where $A, B_i, C_j$ are atoms. Let $S$ be a set of atoms such that $S \cap \{ C_1,\dots,C_n \} = \phi$. Then $S$ is an answer set of $\Pi \cup \{r\}$ iff $S$ is an answer set of $\Pi \cup \{ A \leftarrow B_1,\dots,B_m \}$.
\end{lemma}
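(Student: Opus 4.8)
Since the statement is a purely local, per-rule fact, the plan is to argue directly from the reduct-based characterization of answer sets rather than through Lemma~\ref{lem:1}. I would first recall that, for a program $P$ (in the general sense that permits aggregates, following \cite{ferraris11logic}) and a set $S$ of atoms, $S$ is an answer set of $P$ exactly when $S$ is a minimal model of the reduct $P^S$, and that the reduct is taken rule by rule, so that $(\Pi\cup\{\rho\})^S=\Pi^S\cup\{\rho^S\}$ for any single rule $\rho$.

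The one computation to carry out is the reduct of the two rules at hand. Writing $r'$ for $A\leftarrow B_1,\dots,B_m$, I would use the hypothesis $S\cap\{C_1,\dots,C_n\}=\emptyset$ to observe that every literal $\no\ C_j$ is true in $S$, so that forming $r^S$ deletes each of these literals from the body and leaves exactly $r'$; since $r'$ has no negative literals, $(r')^S=r'$ as well. Hence $(\Pi\cup\{r\})^S=\Pi^S\cup\{r'\}=(\Pi\cup\{r'\})^S$, i.e. the two programs have the same reduct with respect to $S$ (up to trivially true conjuncts in the general-reduct formulation).

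With the reducts equal, $S$ is a minimal model of one iff of the other, and since $S$ itself satisfies $r$ iff it satisfies $r'$ (again because $S\models\no\ C_j$ for all $j$), the chain ``$S$ is an answer set of $\Pi\cup\{r\}$'' $\iff$ ``$S$ is a minimal model of $(\Pi\cup\{r\})^S$'' $\iff$ ``$S$ is a minimal model of $(\Pi\cup\{r'\})^S$'' $\iff$ ``$S$ is an answer set of $\Pi\cup\{r'\}$'' delivers both directions simultaneously. I expect the only delicate point to be the bookkeeping in the reduct computation: one must note that the assumption $S\cap\{C_1,\dots,C_n\}=\emptyset$ is precisely what makes each $\no\ C_j$ satisfied by $S$, after which equality of the reducts---and hence of the answer sets---is immediate. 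If one prefers to avoid the general reduct, the same argument works verbatim with the classical Gelfond--Lifschitz reduct whenever $\Pi$ is aggregate-free, in which case $r^S$ is literally $r'$.
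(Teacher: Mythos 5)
Your proof is correct. Note that the paper itself states Lemma~\ref{lem:1+} without proof (it is listed, like Lemma~\ref{lem:1}, as a routine auxiliary fact and is only \emph{used} inside the proof of Proposition~\ref{prop:lpod2asp:hardsplit}), so there is no paper-internal argument to compare against; your reduct computation is exactly the justification one would expect to supply. The key observation is the right one: since $S\cap\{C_1,\dots,C_n\}=\emptyset$, every literal $\no\ C_j$ is satisfied by $S$, so the Gelfond--Lifschitz reduct of $r$ relative to $S$ is literally $A\leftarrow B_1,\dots,B_m$, while $r'$ is its own reduct, giving $(\Pi\cup\{r\})^S=(\Pi\cup\{r'\})^S$ and hence the same minimal-model test on both sides. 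Your parenthetical about the general (Ferraris-style) reduct is also handled adequately: there the two reducts agree only up to tautologically true conjuncts coming from the satisfied negative literals, and up to the case distinction on whether $S$ satisfies the rule at all, but since $S\models r$ iff $S\models r'$ under the hypothesis, the reducts remain equivalent over all subsets of $S$ and the argument goes through; this generality is worth keeping, because the paper explicitly allows modern constructs such as aggregates in the regular part.
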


\begin{lemma} \label{lem:2} (Proposition 8 in \cite{ferraris11logic})
Let $\Pi$ be an ASP program, $Q$ be a set of atoms not occurring in $\Pi$. For each $q \in Q$, let $Def(q)$ be a formula that doesn't contain any atoms from $Q$. Then $X \mapsto X\setminus Q$ is a 1-1 correspondence between the answer sets of $\Pi \cup \{ Def(q) \rightarrow q : q\in Q\}$ and the answer sets of $\Pi$.
\end{lemma}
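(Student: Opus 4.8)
The plan is to use Ferraris' reduct characterization of answer sets for arbitrary propositional theories: for a theory $\Gamma$ and a set $Y$ of atoms, $Y$ is an answer set of $\Gamma$ iff $Y\models\Gamma$ and no proper subset of $Y$ satisfies the reduct $\Gamma^Y$ (obtained by replacing each maximal subformula of $\Gamma$ not satisfied by $Y$ with $\bot$). Write $\Gamma=\Pi\cup\{Def(q)\rightarrow q : q\in Q\}$ and, for an answer set $Y$ of $\Gamma$, put $X=Y\setminus Q$. The goal is to show that $Y$ is completely determined by $X$ via $Y=X\cup\{q\in Q : X\models Def(q)\}$, that this $X$ is an answer set of $\Pi$, and conversely that every answer set $X$ of $\Pi$ arises in this way from a unique $Y$; these three facts together give that $Y\mapsto Y\setminus Q$ is a bijection with inverse $X\mapsto X\cup\{q\in Q : X\models Def(q)\}$.

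The first and main step is to compute $\Gamma^Y$ and read off the $Q$-part of any answer set. Since $\Pi$ contains no atom of $Q$, the reduct $\Pi^Y$ depends only on the truth values in $Y$ of atoms occurring in $\Pi$, so $\Pi^Y=\Pi^X$; likewise $(Def(q))^Y=(Def(q))^X$ because $Def(q)$ contains no atom of $Q$. Because $Y\models\Gamma$, the reduct of each definition is $(Def(q)\rightarrow q)^Y=(Def(q))^X\rightarrow q^Y$. I would then case-split on whether $X\models Def(q)$: if $X\models Def(q)$ then $Y\models Def(q)$ forces $q\in Y$; if $X\not\models Def(q)$ then $(Def(q))^Y=\bot$, so the reduct becomes the tautology $\bot\rightarrow q^Y$, which does not support $q$, and minimality of $Y$ then forbids $q\in Y$. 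This yields the crucial identity $q\in Y \iff X\models Def(q)$ for every answer set $Y$, which both pins down the $Q$-part of $Y$ (giving injectivity at once) and simplifies $\Gamma^Y$ to $\Pi^X$ together with the forced facts $\{(Def(q))^X\rightarrow q : X\models Def(q)\}$ plus tautologies.

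With the reduct in this simplified form, the second step transfers minimality in both directions. For surjectivity (and correctness of the inverse map), assume $X$ is an answer set of $\Pi$, set $Y=X\cup\{q:X\models Def(q)\}$, and suppose some $Y'\subsetneq Y$ satisfies $\Gamma^Y$; since $\Pi^X$ is over a signature disjoint from $Q$, its satisfaction by $Y'$ depends only on $Y'\setminus Q$, so minimality of $X$ for $\Pi^X$ forces $Y'\setminus Q=X$, and then $X\models(Def(q))^X$ (using the standard fact $X\models F\iff X\models F^X$) forces every $q\in Y\cap Q$ into $Y'$, whence $Y'=Y$, a contradiction. For the forward direction, assume $Y$ is an answer set of $\Gamma$; any $X'\subsetneq X$ satisfying $\Pi^X$ would, by taking $Y'=X'\cup\{q\in Y\cap Q : X'\models (Def(q))^X\}$, produce a proper subset of $Y$ satisfying $\Gamma^Y$, contradicting minimality of $Y$. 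Hence $X$ is an answer set of $\Pi$.

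The step I expect to be the main obstacle is making the reduct manipulations rigorous for arbitrary definition bodies $Def(q)$, which may contain nested connectives and negation rather than being in rule normal form; the claims $\Pi^Y=\Pi^X$, the fact that satisfaction of a formula over the base signature depends only on its projection off $Q$, and $X\models F\iff X\models F^X$ are all standard properties of the Ferraris reduct but must be invoked explicitly. A cleaner alternative that sidesteps the explicit bookkeeping is the symmetric splitting theorem \cite{ferr09b}: the dependency structure splits $\Gamma$ into the bottom $\Pi$ over the base atoms and the top $\{Def(q)\rightarrow q\}$ over $Q$ whose bodies lie entirely in the bottom, so the answer sets of $\Gamma$ are the unions $X\cup Z$ with $X$ an answer set of $\Pi$ and $Z$ an answer set of the top with the base atoms fixed to $X$; the latter program collapses to the facts $\{q:X\models Def(q)\}$ with that set as unique answer set, immediately giving the bijection. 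Either route establishes the claim, which is Proposition~8 of \cite{ferraris11logic}.
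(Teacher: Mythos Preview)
The paper does not prove this lemma at all: it is quoted verbatim as Proposition~8 of \cite{ferraris11logic} and used as a black box. So there is no ``paper's proof'' to compare against, and your submission supplies strictly more than what the paper contains.

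That said, your argument is a correct direct verification via the Ferraris reduct. The key identity $q\in Y\iff X\models Def(q)$ is established cleanly by your case split (the minimality argument ruling out $q\in Y$ when $X\not\models Def(q)$ works because removing $q$ from $Y$ cannot affect satisfaction of $\Pi^X$ or of any $(Def(q'))^X$, all of which are $Q$-free). The transfer of minimality in both directions is also sound; in particular your choice $Y'=X'\cup\{q\in Y\cap Q: X'\models (Def(q))^X\}$ in the forward direction is exactly what is needed to keep $Y'\subseteq Y$ while satisfying every reduct rule $(Def(q))^X\rightarrow q$. The auxiliary facts you flag as needing explicit justification ($\Pi^Y=\Pi^X$, $X\models F\iff X\models F^X$, and that satisfaction of a $Q$-free formula depends only on the projection off $Q$) are indeed standard properties of the Ferraris reduct and suffice as stated.

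Your alternative via symmetric splitting \cite{ferr09b} is also valid and is in fact the tool the paper itself uses for the analogous decompositions elsewhere in the appendix (e.g., in the proofs of Proposition~\ref{prop:lpod2asp:base} and Lemma~\ref{lem:lpod:twoAS}); using it here would make your write-up stylistically uniform with the rest of the paper, though the direct reduct argument is more self-contained.
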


\medskip
Let $\Pi$ be an LPOD with signature $\sigma$.
By the definition of a split program of LPOD, there are $n_1 \times \dots \times n_m$ split programs of $\Pi$. Let $\Pi(k_1, \dots, k_m)$ denote a split program of $\Pi$, where for $1 \leq i \leq m$, $k_i \in \{1,\dots, n_i \}$ and rule $i$ in $\Pi$ is replaced by its $k_i$-th option:
\beq
C^{k_i}_i \leftarrow \i{Body}_i, \no\ C^1_i, \dots, \no\ C^{k_i-1}_i
\eeq {proof:lpod:0}
where $\i{Body}_i$ is the body of rule $i$.

Let $AP_{\Pi}(x_1, \dots, x_m)$, where $x_i \in \left[ 0, n_i\right]$, denote the assumption program obtained from $\Pi$ by replacing each LPOD rule $i$ with its $x_i$-th assumption, $O_i(x_i)$:
\begin{align}
body_i & ~~\leftarrow~~ \i{Body}_i \label{proof:lpod:body} \\
\bot & ~~\leftarrow~~ x_i = 0,\ body_i \label{proof:lpod:1} \\
\bot & ~~\leftarrow~~ x_i > 0,\ \no\ body_i \label{proof:lpod:2}\\
C^{j}_i & ~~\leftarrow~~ body_i,\ x_i=j & \text{(for $1\leq j \leq n_i$)} \label{proof:lpod:3} \\
\bot & ~~\leftarrow~~ body_i,\ x_i\ne j,\ \no\ C^1_i, \dots, \no\ C^{j-1}_i,\  C^j_i & \text{(for $1\leq j \leq n_i$)} \label{proof:lpod:4}
\end{align}
where $\i{Body}_i$ is the body of rule $i$, and $body_i$ is an atom not occurring in $\Pi$.

\noindent{\bf Proposition~\ref{prop:lpod2asp:hardsplit} \optional{prop:lpod2asp:hardsplit}}\ \ 
{\sl
For any LPOD $\Pi$ of signature $\sigma$ and any set $S$ of atoms of $\sigma$,
$S$ is a candidate answer set of $\Pi$ iff $S\cup \{ body_i \mid$ $S$ satisfies the body of rule $i$ in $\Pi_{od} \}$ is an answer set of some assumption program of $\Pi$. 
More specifically,
\begin{itemize}
\ii[{\bf (a)}]
for any candidate answer set $S$ of $\Pi$, let's obtain $x_1, \dots, x_m$ such that, for $1\leq i \leq m$,
\begin{itemize}
\item
$x_i = 0$ if $S\not \vDash \i{Body}_i$,
\item
$x_i=k$ if $S \vDash \i{Body}_i$, and $C^{k}_i \in S$, and $C^j_i \not \in S$ for $1\leq j \leq k-1$,
\end{itemize}
then $\phi(S) = S\cup \{ body_i \mid$ $S$ satisfies the body of rule $i$ in $\Pi_{od} \}$ is an answer set of $AP_{\Pi}(x_1, \dots, x_m)$;
\ii[{\bf (b)}]
for any answer set $S'$ of any assumption program $AP_{\Pi}(x_1, \dots, x_m)$, $S'|_{\sigma}$ is a candidate answer set of $\Pi$.
\end{itemize}
}
\medskip

\begin{proof}
\begin{itemize}
\ii[{\bf (a)}]
Let $S$ be a candidate answer set of $\Pi$. We obtain $x_1, \dots, x_m$ such that, for $1\leq i \leq m$,
\begin{itemize}
\item
$x_i = 0$ if $S\not \vDash \i{Body}_i$,
\item
$x_i=k$ if $S \vDash \i{Body}_i$, and $C^{k}_i \in S$, and $C^j_i \not \in S$ for $1\leq j \leq k-1$.
\end{itemize}
We will prove that $\phi(S)$ is an answer set of $AP_{\Pi}(x_1, \dots, x_m)$. 
Since $S$ is a candidate answer set of $\Pi$, $S$ must be an answer set of some $\Pi(k_1, \dots, k_m)$. Let's consider any LPOD rule $i$ in $\Pi$. We know rule $i$ is replaced by one of its options (\ref{proof:lpod:0}) in $\Pi(k_1, \dots, k_m)$. Let's obtain $\Pi'$ from $\Pi(k_1, \dots, k_m)$ by replacing the option of rule $i$ with $O_i(x_i)$. Recall that $\i{Body}_i$ represent the body of rule $i$. Let $S'$ be $S \cup \{ body_i \mid$ $S\vDash \i{Body}_i$ $\}$. We are going to prove $S'$ is an answer set of $\Pi'$.

Since $x_i=j$ is not an atom, rule (\ref{proof:lpod:4}) is strong equivalent to the following constraint 
\[
\leftarrow body_i, C^j_i,  \no\ C^1_i, \dots, \no\ C^{j-1}_i, \no\ x_i=j 
\]
thus Lemma \ref{lem:1} (d) applies to this rule.
According to the assignments for $x_1, \dots, x_m$, it's obvious that rules \eqref{proof:lpod:1}, \eqref{proof:lpod:2}, \eqref{proof:lpod:4} are satisfied by $\phi(S)$.

\begin{itemize}
\item
If $S \not \vDash \i{Body}_i$, $S' \not \vDash body_i$. By Lemma \ref{lem:1} (c), $S$ is an answer set of $\Pi(k_1, \dots, k_m)$ minus the option of rule $i$. Since rules \eqref{proof:lpod:1}, \eqref{proof:lpod:2}, \eqref{proof:lpod:4} are satisfied by $S$, and the bodies of rules \eqref{proof:lpod:body}, \eqref{proof:lpod:3} are not satisfied by $S$, by Lemma \ref{lem:1} (d) and Lemma \ref{lem:1} (b), $S' = S$ is an answer set of $\Pi'$.

\item
If $S \vDash \i{Body}_i$, then $S' \vDash body_i$, and $x_i>0$, and at least one of the atoms in $\{C^1_i, \dots, C^{n_i}_i\}$ must be true, and the first atom among them that is true in $S$ is $C^{x_i}_i$ ($S$ satisfies $C^{x_i}_i$ and $S$ doesn't satisfy $C^j_i$ for $j\in \{ 1,\dots,x_i-1 \}$). 
Let $\Pi''$ be the union of $\Pi(k_1, \dots, k_m)$ and the rule \eqref{proof:lpod:body}, then by Lemma \ref{lem:2}, $S'$ is an answer set of $\Pi''$. 
Assume for the sake of contradiction that $k_i < x_i$. By rule (\ref{proof:lpod:0}), at least one of $\{C^1_i, \dots, C^{k_i}_i\}$ must be true in $S$, which contradicts with the fact that the first atom that is true in $S$ is $C^{x_i}_i$. 
\footnote{
For example, suppose $k_i=2$, and $x_i=3$ is the index of the first atom in $\{C^1_i, \dots, C^{n_1}_i\}$ that is true in $S$. Since $S$ satisfies the $k_i$-th option of rule $i$ --- ``$C^2 \leftarrow body, \no\ C^1$'', and $S\vDash body$, then either $C^1$ is true or $C^2$ is true, which contradicts with the fact that $C^{3}$ is the first atom to be true in $S$.
}
Then there are 2 cases for $k_i$: 
\begin{itemize}
\item
if $k_i = x_i$, by Lemma~\ref{lem:1+}, $S'$ is an answer set of $\Pi'' \cup \{C^{x_i}_i \leftarrow body_i \}$ minus rule  \eqref{proof:lpod:0}. Consequently, by Lemma~\ref{lem:1} (b), $S'$ is an answer set of $\Pi''$ union rule \eqref{proof:lpod:3} minus rule \eqref{proof:lpod:0}. Since rules \eqref{proof:lpod:1}, \eqref{proof:lpod:2}, \eqref{proof:lpod:4} are satisfied by $S'$, by Lemma \ref{lem:1} (d), $S'$ is an answer set of $\Pi'$; 
\item
if $k_i > x_i$, ``$\no\ C^{x_i}_i$'' is in the body of rule \eqref{proof:lpod:0}, then by Lemma \ref{lem:1} (c), $S'$ is an answer set of $\Pi''$ minus rule \eqref{proof:lpod:0}. Since $S\vDash C^{x_i}_i$, by Lemma \ref{lem:1} (a), $S'$ is an answer set of $\Pi'' \cup \{C^{x_i}_i \leftarrow body_i \}$ minus rule  \eqref{proof:lpod:0}. Consequently, by Lemma~\ref{lem:1} (b), $S'$ is an answer set of $\Pi''$ union rule \eqref{proof:lpod:3} minus rule \eqref{proof:lpod:0}. Since rules \eqref{proof:lpod:1}, \eqref{proof:lpod:2}, \eqref{proof:lpod:4} are satisfied by $S'$, by Lemma \ref{lem:1} (d), $S'$ is an answer set of $\Pi'$.
\end{itemize}
\end{itemize}

Consequently, $\phi(S)$ is an answer set of $AP_{\Pi}(x_1, \dots, x_m)$, which is obtained from $\Pi(k_1, \dots, k_m)$ by replacing each option of rule $i$ of $\Pi$ with $O_i(x_i)$ for $1\leq i \leq m$. 

\ii[{\bf (b)}]
Let $S'$ be an answer set of program $AP_{\Pi}(x_1, \dots, x_m)$. 
Let's consider any LPOD rule $i$ in $\Pi$. Let's obtain $\Pi'$ from $AP_{\Pi}(x_1, \dots, x_m)$ by replacing $O_i(x_i)$ with the $k_i$-th option of rule $i$ where $k_i = x_i$ if $x_i>0$, $k_i=1$ if $x_i=0$.
We first prove $S = S' \setminus \{ body_i\}$ is an answer set of $\Pi'$.

Since $S'$ must satisfy rules \eqref{proof:lpod:1}, \eqref{proof:lpod:2}, \eqref{proof:lpod:4}, by Lemma \ref{lem:1} (e), $S'$ is an answer set of $AP_{\Pi}(x_1, \dots, x_m)$ minus rules \eqref{proof:lpod:1}, \eqref{proof:lpod:2}, \eqref{proof:lpod:4}. By Lemma~\ref{lem:1} (c), $S'$ is an answer set of $AP_{\Pi}(x_1, \dots, x_m) \cup \{ C^{x_i}_i \leftarrow body_i \}$ minus rules \eqref{proof:lpod:1}, \eqref{proof:lpod:2}, \eqref{proof:lpod:3}, \eqref{proof:lpod:4}. 
Note that by rule \eqref{proof:lpod:body}, $S'$ satisfies $body_i$ iff $S'$ satisfies $\i{Body}_i$. There are 2 cases as follows.
\begin{itemize}
\item
If $S' \vDash \i{Body}_i$, $S' \vDash body_i$. Since $S'$ satisfies rules \eqref{proof:lpod:1} and \eqref{proof:lpod:3}, we know $x_i>0$ and $S'$ satisfies $C^{x_i}_i$. Thus $k_i$ equals to $x_i$.
Assume for the sake of contradiction that the first atom among $\{ C^1_i,\dots,C^{n_i}_i \}$ that is true in $S'$ is $C^j_i$ and $j<x_i$. Since $S'$ satisfies rule \eqref{proof:lpod:4}, $S'$ satisfies $x_i = j$. Contradiction. Thus $S'$ satisfies $C^{x_i}_i$ and doesn't satisfy $C^j_i$ for $j\in \{ 1,\dots,x_i-1 \}$. By Lemma~\ref{lem:1+}, $S'$ is an answer set of $AP_{\Pi}(x_1, \dots, x_m)$ union rule \eqref{proof:lpod:0} minus rules \eqref{proof:lpod:1}, \eqref{proof:lpod:2}, \eqref{proof:lpod:3}, \eqref{proof:lpod:4}. By Lemma~\ref{lem:2}, $S$ is an answer set of $\Pi'$.

\item
If $S' \not \vDash \i{Body}_i$, $S' \not \vDash body_i$. By lemma~\ref{lem:1} (c), $S'$ is an answer set of $AP_{\Pi}(x_1, \dots, x_m)$ minus rules \eqref{proof:lpod:body}, \eqref{proof:lpod:1}, \eqref{proof:lpod:2}, \eqref{proof:lpod:3}, \eqref{proof:lpod:4}. 
By Lemma~\ref{lem:1} (b), $S = S'$ is an answer set of $\Pi'$.
\end{itemize}

So $S$ is an answer set of $\Pi'$. Consequently, $S'|_{\sigma}$ is an answer set of $\Pi(k_1,\dots,k_m)$, where $k_i = x_i$ if $x_i>0$, $k_i=1$ if $x_i=0$. In other words, $S'|_{\sigma}$ is a candidate answer set of $\Pi$.

\end{itemize}
\end{proof}

\section{Proof of Proposition \ref{prop:lpod2asp:base} } \label{sec:proof:prop:lpod2asp:base}
For any answer set program $\Pi$, let $gr(\Pi, x_1,\dots, x_m)$ be a partial grounded program obtained from $\Pi$ by replacing variables $X_1,\dots, X_m$ in $\Pi$ with $x_1,\dots, x_m$.

Let $\Pi$ be an LPOD of signature $\sigma$. In the following proofs, let ${\sf lpod2asp}(\Pi)$ be $\Pi_1 \cup \Pi_2 \cup \Pi_3$, where $\Pi_1$ consists of the rules in bullets 1 and 2 in section {\bf Generate Candidate Answer Sets}, $\Pi_2$ consists of the rules in bullet 3 in the same section, and $\Pi_3$ consists of the rules in section {\bf Find Preferred Answer Sets}. Note that ${\sf lpod2asp}(\Pi)_{base}$ is $\Pi_1 \cup \Pi_2$.

\medskip
The proof of {\bf Proposition \ref{prop:lpod2asp:base}} will use a restricted version of the splitting theorem from \cite{ferr09b}, which is reformulated as follows:

\noindent{\bf Splitting Theorem}\ \ 
{\sl
Let $\Pi_1$, $\Pi_2$ be two answer set programs,
${\bf p}$, ${\bf q}$ be disjoint tuples of distinct atoms. If
\begin{itemize}
\item each strongly connected component of the dependency graph of $\Pi_1\cup\Pi_2$ w.r.t. ${\bf p}\cup{\bf q}$ is a subset of ${\bf p}$ or a subset of ${\bf q}$,
\item no atom in ${\bf p}$ has a strictly positive occurrence in $\Pi_2$, and
\item no atom in ${\bf q}$ has a strictly positive occurrence in $\Pi_1$,
\end{itemize}
then an interpretation $I$ of $\Pi_1\cup \Pi_2$ is an answer set of $\Pi_1\cup \Pi_2$ relative to ${\bf p}\cup{\bf q}$ if and only if $I$ is an answer set of $\Pi_1$ relative to ${\bf p}$ and $I$ is an answer set of $\Pi_2$ relative to ${\bf q}$.
}

\bigskip

\noindent{\bf Proposition~\ref{prop:lpod2asp:base} \optional{prop:lpod2asp:base}}\ \ 
{\sl
The candidate answer sets of an LPOD $\Pi$ of signature $\sigma$ are exactly the candidate answer sets on $\sigma$ of ${\sf lpod2asp}(\Pi)_{base}$. In other words,
(for any set $S$, let $\phi(S)$ be $S \cup \{ body_i \mid$ $S$ satisfies the body of rule $i$ in $\Pi_{od}$ $\}$)
\begin{itemize}
\ii[{\bf (a)}]
for any candidate answer set $S$ of $\Pi$, there are $x_1,\dots,x_m$ such that $\phi(S)$ is an answer set of $AP_{\Pi}(x_1, \dots, x_m)$, and there exists an optimal answer set $S'$ of $\Pi_1 \cup \Pi_2$ such that $S' \vDash ap(x_1, \dots, x_m)$ and $S = shrink(S', x_1, \dots, x_m)$;
\ii[{\bf (b)}]
for any optimal answer set $S'$ of $\Pi_1 \cup \Pi_2$ and any $x_1,\dots,x_m$ such that $S' \vDash ap(x_1,\dots,x_m)$, $S = shrink(S', x_1, \dots, x_m)$ is a candidate answer set of $\Pi$, and $\phi(S)$ is an answer set of $AP_{\Pi}(x_1, \dots, x_m)$.
\end{itemize}
}

\begin{proof}
Let $\Pi_{1,2}$ be $\Pi_1 \cup \Pi_2$. According to the translation, the empty set is always an answer set of $\Pi_{1,2}$ (since the empty set doesn't satisfy the body of any rule in $\Pi_{1,2}$), thus there must exist at least one optimal answer set of $\Pi_{1,2}$. Furthermore, by rule (\ref{equ:lpod:wc}), the optimal answer set should contain as many $ap(*)$ as possible. 
Then $gr(\Pi_{1,2}, x_1,\dots, x_m)$ is $gr(\Pi_{1}, x_1,\dots, x_m) \cup gr(\Pi_{2}, x_1,\dots, x_m)$.
Let $\Pi_{1,2}^{gr}$ be $\bigcup_{y_i \in \{ 0,\dots,n_i \}} gr(\Pi_{1,2}, y_1,\dots, y_m)$.
Let $\sigma^{\Pi_{1,2}^{gr}}$ be the signature of $\Pi_{1,2}^{gr}$, let $\sigma^{gr(\Pi_{1,2}, x_1,\dots, x_m)}$ be the signature of $gr(\Pi_{1,2}, x_1,\dots, x_m)$, let $\sigma^{gr(\Pi_{1}, x_1,\dots, x_m)}$ be the signature of $gr(\Pi_{1}, x_1,\dots, x_m)$, and let $\sigma^{gr(\Pi_{2}, x_1,\dots, x_m)}$ be $\sigma^{gr(\Pi_{1,2}, x_1,\dots, x_m)} \setminus \sigma^{gr(\Pi_{1}, x_1,\dots, x_m)}$.
We then prove bullets {\bf (a)} and {\bf (b)} as follows.
\begin{itemize}
\ii[{\bf (a)}]
Let $S$ be a candidate answer set of $\Pi$. 
By Proposition \ref{prop:lpod2asp:hardsplit}, $\phi(S)$ must be an answer set of some $AP_{\Pi}(x_1, \dots, x_m)$ of $\Pi$. Let $\psi(S)$ be 
$$
\ba l
\{ a({\bf v}, x_1, \dots, x_m) \mid a({\bf v}) \in S \}
\cup \{ body_i(x_1, \dots, x_m) \mid S \text{ satisfies the body of rule $i$ in $\Pi_{od}$}\} \\
\cup \{ ap(x_1, \dots, x_m), degree(ap(x_1, \dots, x_m), d_1, \dots, d_m) \},
\ea
$$ where $d_i = 1$ if $x_i = 0$, $d_i = x_i$ if $x_i > 0$. 
Our target is to construct an $S'$ from $\psi(S)$ and prove $S'$ is an optimal answer set of $\Pi_{1,2}$ such that $S' \vDash ap(x_1, \dots, x_m)$, and $S = shrink(S', x_1, \dots, x_m)$.

First, we prove $\psi(S)$ is an optimal answer set of $gr(\Pi_{1,2}, x_1,\dots, x_m)$. 
\begin{enumerate}
\item
By the construction of $\psi(S)$, $\psi(S)$ satisfies the reduct of $gr(\Pi_{2}, x_1,\dots, x_m)$ relative to $\psi(S)$, and is minimal with respect to $\sigma^{gr(\Pi_{2}, x_1,\dots, x_m)}$. So $\psi(S)$ is an answer set of $gr(\Pi_{2}, x_1,\dots, x_m)$ with respect to $\sigma^{gr(\Pi_{2}, x_1,\dots, x_m)}$.
\item
Since $\phi(S)$ is a minimal model of the reduct of $AP_{\Pi}(x_1, \dots, x_m)$ relative to $\phi(S)$, and $\psi(S) \vDash ap(x_1, \dots, x_m)$, it's easy to check that $\psi(S)$ is a minimal model of the reduct of $gr(\Pi_{1}, x_1,\dots, x_m)$ relative to $\psi(S)$ with respect to $\sigma^{gr(\Pi_{1}, x_1,\dots, x_m)}$. So $\psi(S)$ is an answer set of $gr(\Pi_{1}, x_1,\dots, x_m)$ with respect to $\sigma^{gr(\Pi_{1}, x_1,\dots, x_m)}$. 
\end{enumerate}
By the splitting theorem, $\psi(S)$ is an answer set of $gr(\Pi_{1,2}, x_1,\dots, x_m)$. 
Since $\psi(S)$ satisfies $ap(x_1, \dots, x_m)$, which is the only $ap(*)$ occurring in $gr(\Pi_{1,2}, x_1,\dots, x_m)$, $\psi(S)$ must be an optimal answer set of $gr(\Pi_{1,2}, x_1,\dots, x_m)$. 


Then, we construct an optimal answer set $S'$ of $\Pi_{1,2}$ from any optimal answer set $S''$ of $\Pi_{1,2}$ such that $S' \vDash ap(x_1, \dots, x_m)$ and $S = shrink(S', x_1, \dots, x_m)$. 

We first show that $S''$ must satisfy $ap(x_1,\dots,x_m)$. 
Assume for the sake of contradiction that $S''$ does not satisfy $ap(x_1,\dots,x_m)$. 
Since each partial grounded program of $\Pi_{1,2}$ is disjoint from each other, by the splitting theorem, $S''|_{\sigma^{gr(\Pi_{1,2}, x_1,\dots, x_m)}}$ is an answer set of $gr(\Pi_{1,2}, x_1,\dots, x_m)$ and $S'' \setminus S''|_{\sigma^{gr(\Pi_{1,2}, x_1,\dots, x_m)}}$ is an answer set of $\Pi_{1,2}^{gr} \setminus gr(\Pi_{1,2}, x_1,\dots, x_m)$. 
Let $S'$ be the union of $\psi(S)$ and $S'' \setminus S''|_{\sigma^{gr(\Pi_{1,2}, x_1,\dots, x_m)}}$, since $\psi(S)$ is an answer set of $gr(\Pi_{1,2}, x_1,\dots, x_m)$, by the splitting theorem, $S'$ is an answer set of $\Pi_{1,2}$. Since $S'$ has a lower penalty than $S''$, $S''$ is not an optimal answer set of $\Pi_{1,2}$, which contradicts with our initial assumption. So $S''$ must satisfy $ap(x_1,\dots,x_m)$. Indeed, if there exists an answer set of $AP_{\Pi}(x_1,\dots,x_m)$,
\beq
\text{
any optimal answer set of $\Pi_{1,2}$ must satisfy $ap(x_1,\dots,x_m)$.
}
\eeq {statement:1}
Consequently, $S'$ has the same penalty as $S''$ in $\Pi_{1,2}$, which means that $S'$ is an optimal answer set of $\Pi_{1,2}$. Besides, $S$ equals to $shrink(\psi(S), x_1,\dots,x_m)$, which equals to $shrink(S', x_1, \dots, x_m)$.

\ii[{\bf (b)}]
Let $S'$ be an optimal answer set of $\Pi_{1,2}$ and $x_1,\dots,x_m$ a list of integers such that 
$S' \vDash ap(x_1,\dots,x_m)$. Our target is to prove $S = shrink(S', x_1, \dots, x_m)$ is a candidate answer set of $\Pi$. By Proposition \ref{prop:lpod2asp:hardsplit}, it is sufficient to prove that $\phi(S)$ is an answer set of $AP_{\Pi}(x_1,\dots,x_m)$.


We first split $\Pi_{1,2}^{gr}$ into $gr(\Pi_{1}, x_1,\dots, x_m)$ and the remaining part $\Pi_{1,2}^{gr} \setminus gr(\Pi_{1}, x_1,\dots, x_m)$.
Since
\begin{enumerate}
\item
no atom in $\sigma^{gr(\Pi_{1}, x_1,\dots, x_m)}$ has a strictly positive occurrence in $\Pi_{1,2}^{gr} \setminus gr(\Pi_{1}, x_1,\dots, x_m)$, 
\item
no atom in $\sigma^{\Pi_{1,2}^{gr}} \setminus \sigma^{gr(\Pi_{1}, x_1,\dots, x_m)}$ has a strictly positive occurrence in $gr(\Pi_{1}, x_1,\dots, x_m)$, and 
\item
each strongly connected component of the dependency graph of $\Pi_{1,2}$ w.r.t. $\sigma^{\Pi_{1,2}^{gr}}$ is a subset of $\sigma^{gr(\Pi_{1}, x_1,\dots, x_m)}$ or $\sigma^{\Pi_{1,2}^{gr}} \setminus \sigma^{gr(\Pi_{1}, x_1,\dots, x_m)}$, 
\end{enumerate}
by the splitting theorem, $S'$ is an answer set of $gr(\Pi_{1}, x_1,\dots, x_m)$ with respect to $\sigma^{gr(\Pi_{1}, x_1,\dots, x_m)}$.
So $S'|_{\sigma^{gr(\Pi_{1}, x_1,\dots, x_m)}}$ is a minimal model of the reduct of $gr(\Pi_{1}, x_1,\dots, x_m)$ relative to $S'|_{\sigma^{gr(\Pi_{1}, x_1,\dots, x_m)}}$. 
Since $S'|_{\sigma^{gr(\Pi_{1}, x_1,\dots, x_m)}} \vDash ap(x_1,\dots,x_m)$, it's easy to check that $\phi(S)$ is a minimal model of the reduct of $AP_{\Pi}(x_1, \dots, x_m)$ relative to $\phi(S)$, where the reduct can be obtained from the reduct of $gr(\Pi_{1}, x_1,\dots, x_m)$ relative to $S'|_{\sigma^{gr(\Pi_{1}, x_1,\dots, x_m)}}$ by replacing each occurrence of $ap(x_1, \dots, x_m)$ with $\top$, and replacing each occurrence of $a({\bf v}, x_1,\dots,x_m)$ by $a({\bf v})$ where $a({\bf v}) \in \sigma$.
Thus $\phi(S)$ is an answer set of $AP_{\Pi}(x_1,\dots,x_m)$. By Proposition \ref{prop:lpod2asp:hardsplit}, $S$ is a candidate answer set of $\Pi$.

\end{itemize}
\end{proof}

\section{Proof of Theorem \ref{thm:lpod2asp} } \label{sec:proof:thm:lpod2asp}

Let $\Pi$ be an LPOD of signature $\sigma$. Recall that we let ${\sf lpod2asp}(\Pi)$ be $\Pi_1 \cup \Pi_2 \cup \Pi_3$, where $\Pi_1$ consists of the rules in bullets 1 and 2 in section {\bf Generate Candidate Answer Sets}, $\Pi_2$ consists of the rules in bullet 3 in the same section, and $\Pi_3$ consists of the rules in section {\bf Find Preferred Answer Sets}. 

\begin{lemma} \label{prop:lpod2asp:base-3}
Let $\Pi$ be an LPOD. 
There is a 1-1 correspondence between the answer sets of ${\sf lpod2asp}(\Pi)$ and the answer sets of $\Pi_1 \cup \Pi_2$, and any answer set of ${\sf lpod2asp}(\Pi)$ agrees with the corresponding answer set of $\Pi_1 \cup \Pi_2$ on the signature of $\Pi_1 \cup \Pi_2$.
\end{lemma}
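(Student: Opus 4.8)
The plan is to prove that $\Pi_3$ is a \emph{conservative extension} of $\Pi_1\cup\Pi_2$: the predicates it introduces are fresh and are ``defined'' by $\Pi_3$ non-recursively in terms of atoms already present, so adding $\Pi_3$ neither destroys nor merges nor splits answer sets and leaves the old atoms untouched. The engine will be Lemma~\ref{lem:2} (Proposition~8 in \cite{ferraris11logic}), applied one layer at a time. Write $P_0$ for the predicates occurring in $\Pi_1\cup\Pi_2$ (the ``old'' predicates $a/m$ for $a$ in $\sigma$, $body_i/m$, $ap/m$, and $\i{degree}/(m+1)$). Every predicate occurring in $\Pi_3$ but not in $\Pi_1\cup\Pi_2$ is among $card$, $equ2degree$, $prf2degree$, $even$, $equ$, $sum$, $prf$, $pAS$ (which of these occur depends on the preference criterion); call these the \emph{new} predicates and $Q$ the set of their ground instances. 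The two observations that make the argument run are: no new predicate occurs anywhere in $\Pi_1\cup\Pi_2$, and the head predicate of every rule of $\Pi_3$ is a new predicate.

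First I would fix a layering of the new predicates. Inspecting the rule shapes \eqref{equ:lpod:card:card}--\eqref{equ:lpod:pena:pAS} criterion by criterion, one checks that the new predicates can be partitioned into levels $1,\dots,t$ so that for every rule of $\Pi_3$, if its head predicate is at level $s$ then all predicates in its body lie in $P_0$ or at a level $<s$: for cardinality, $1=\{card\}$, $2=\{equ2degree,prf2degree\}$, $3=\{prf\}$, $4=\{pAS\}$; for inclusion, $even$ and $equ2degree$ come below $prf2degree$, then $prf$, then $pAS$; for Pareto, $equ$ below $prf$ below $pAS$; for penalty-sum, $sum$ below $prf$ below $pAS$. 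In particular no rule of $\Pi_3$ is self-referential, and the negations occurring in $\Pi_3$ (in \eqref{equ:lpod:incl:prf2}, in \eqref{equ:lpod:pare:prf}, and inside the bound-$0$ aggregates $\{prf(\cdot)\}0$) reach only strictly lower levels. Equivalently, in the dependency graph of ${\sf lpod2asp}(\Pi)$ every strongly connected component is a singleton new atom or lies entirely within $P_0$-atoms, so the Splitting Theorem already separates $\Pi_3$ from $\Pi_1\cup\Pi_2$; the layering is what additionally pins down the values of the new atoms.

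Next, let $\Pi_3^{(s)}$ be the rules of $\Pi_3$ whose head predicate is at level $s$, and peel the levels off in order $s=1,\dots,t$. Fix $s$, put $B_s:=\Pi_1\cup\Pi_2\cup\Pi_3^{(1)}\cup\dots\cup\Pi_3^{(s-1)}$ and let $Q_s$ be the ground atoms over level-$s$ predicates. By the layering, $Q_s$ does not occur in $B_s$, and within $\Pi_3^{(s)}$ atoms of $Q_s$ occur only in rule heads; hence, after grounding, $\Pi_3^{(s)}$ is strongly equivalent to a set of explicit definitions $\{\,Def(q)\rightarrow q : q\in Q_s\,\}$ in which $Def(q)$ — the disjunction of the grounded bodies of the $\Pi_3^{(s)}$-rules with head $q$ — is a formula over $P_0$ together with the levels $<s$ and contains no atom of $Q_s$. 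Lemma~\ref{lem:2}, applied with $\Pi:=B_s$ and $Q:=Q_s$, then gives that $X\mapsto X\setminus Q_s$ is a one-to-one correspondence between the answer sets of $B_s\cup\Pi_3^{(s)}$ and those of $B_s$, with corresponding answer sets agreeing off $Q_s$. Composing these $t$ correspondences yields a one-to-one correspondence between the answer sets of ${\sf lpod2asp}(\Pi)=\Pi_1\cup\Pi_2\cup\Pi_3$ and those of $\Pi_1\cup\Pi_2$ under which corresponding answer sets agree on every old atom, i.e. on the signature of $\Pi_1\cup\Pi_2$ — which is the statement of the lemma. One also reads off that each answer set of ${\sf lpod2asp}(\Pi)$ is its $\Pi_1\cup\Pi_2$-counterpart extended by the uniquely determined $Q$-atoms.

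The part deserving care is the claim that a grounded rule set whose head predicates are fresh and non-recursive is strongly equivalent to a set of explicit definitions of the precise shape Lemma~\ref{lem:2} requires: one has to treat the counting and comparison constructs used in $\Pi_3$ — the functional aggregates $N=\{D_1=X;\dots\}$, the lower-bound aggregate $X\{equ2degree(\cdot):\cdot\}$, the bound-$0$ aggregates $\{prf(\cdot)\}0$, and the arithmetic conditions — within the propositional-formula semantics of \cite{ferraris11logic} to which Proposition~8 there applies, and verify that after grounding each rule body is genuinely a formula over $P_0$ and the lower levels only, with no occurrence of the level being added. Everything else — reading the layering off the explicit rule forms and checking that it is uniform across the four preference criteria — is routine bookkeeping.
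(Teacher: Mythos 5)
Your proposal is correct and takes essentially the same route as the paper's own proof: the paper likewise establishes the lemma by adding the rules of $\Pi_3$ to $\Pi_1\cup\Pi_2$ incrementally (rule by rule, in the order given) and invoking Lemma~\ref{lem:2} at each step, since every rule of $\Pi_3$ defines a fresh predicate whose body mentions only atoms already present. Your explicit stratification into levels and your remark about checking the aggregate constructs within the semantics of \cite{ferraris11logic} simply spell out details the paper treats as immediate.
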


\begin{proof}
Let $\Pi_{1,2}$ be $\Pi_1 \cup \Pi_2$. 
Let's take $\Pi_{1,2}$ as our current program, $\Pi_{cur}$, and consider including the translation rules in $\Pi_3$ (rules \eqref{equ:lpod:card:card} --- \eqref{equ:lpod:pena:prf} under each preference criterion) into $\Pi_{cur}$. 
For each criterion, let's include the first rule, e.g., rule (\ref{equ:lpod:card:card}), into $\Pi_{cur}$, it's easy to see that this rule satisfies the condition of Lemma \ref{lem:2}. By Lemma \ref{lem:2}, there is a 1-1 correspondence between the answer sets of $\Pi_{cur}$ and the answer sets of $\Pi_{1,2}$. Similarly, if we further include the second rule, e.g., rule (\ref{equ:lpod:card:equ2}), into $\Pi_{cur}$, there is still a 1-1 correspondence between the answer sets of $\Pi_{cur}$ and the answer sets of $\Pi_{1,2}$. Similarly, we can include more rules from $\Pi_3$ into the current program $\Pi_{cur}$ in order, and consequently, there is a 1-1 correspondence between the answer sets of $\Pi_{1,2} \cup \Pi_3$ and the answer sets of $\Pi_{1,2}$. Since all the atoms introduced by $\Pi_3$ are not in the signature of $\Pi_{1,2}$, any answer set of ${\sf lpod2asp}(\Pi)$ agrees with the corresponding answer set of $\Pi_{1,2}$ on the signature of $\Pi_{1,2}$.
\end{proof}

\begin{lemma} \label{prop:lpod2asp:hardsplit:degree}
Let $S$ be a candidate answer set of an LPOD $\Pi$. 
If $\phi(S) = S \cup \{ body_i \mid$ $S$ satisfies the body of rule $i$ in $\Pi_{od}\}$ is an answer set of $AP_{\Pi}(x_1,\dots,x_m)$ for some $x_1,\dots,x_m$, then for $1\leq i \leq m$, $S$ satisfies rule $i$ of $\Pi_{od}$ to degree 1 if $x_i=0$, to degree $x_i$ if $x_i>0$.
\footnote{
This lemma won't hold if $AP_{\Pi}(x_1, \dots, x_m)$ is replaced by $\Pi(k_1, \dots, k_m)$.
}
\end{lemma}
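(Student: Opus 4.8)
The plan is a direct case analysis, for a fixed rule index $i$, on the value of $x_i$. The only structural fact I need is that an answer set of $AP_\Pi(x_1,\dots,x_m)$ is in particular a classical model of that program, hence it satisfies every rule and every constraint occurring in it; in particular it satisfies the constraints \eqref{proof:lpod:1}, \eqref{proof:lpod:2} and \eqref{proof:lpod:4} and the rules \eqref{proof:lpod:body} and \eqref{proof:lpod:3} of the $x_i$-th assumption $O_i(x_i)$ (this is the elementary fact underlying Lemma~\ref{lem:1}). The remaining work is bookkeeping based on two definitional observations: by the definition of $\phi(S)$ we have $body_i \in \phi(S)$ iff $S \vDash \i{Body}_i$; and since each $C^k_i$ belongs to the original signature $\sigma$, $C^k_i \in \phi(S)$ iff $C^k_i \in S$.

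First I would handle the case $x_i = 0$. Since $x_i = 0$ is a true ground equality, the instance of constraint \eqref{proof:lpod:1} in $AP_\Pi(x_1,\dots,x_m)$ is $\bot \leftarrow body_i$; as $\phi(S)$ is a model of it, $body_i \notin \phi(S)$, so $S \not\vDash \i{Body}_i$, and therefore $S$ satisfies rule $i$ to degree $1$.

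For the case $x_i > 0$: now \eqref{proof:lpod:2} is $\bot \leftarrow \no\ body_i$, which forces $body_i \in \phi(S)$ and hence $S \vDash \i{Body}_i$, so I must show $\min\{k \mid C^k_i \in S\} = x_i$. Instantiating rule \eqref{proof:lpod:3} at $j = x_i$ gives $C^{x_i}_i \leftarrow body_i$, so $C^{x_i}_i \in \phi(S)$, i.e. $C^{x_i}_i \in S$, which gives $\min\{k \mid C^k_i \in S\} \leq x_i$. For the reverse inequality I would argue by contradiction: let $j$ be the least index with $C^j_i \in S$ and suppose $j < x_i$; then $C^1_i,\dots,C^{j-1}_i \notin S$, and $x_i \ne j$, so the body of the instance of constraint \eqref{proof:lpod:4} for this $j$, namely $body_i,\ x_i \ne j,\ \no\ C^1_i,\dots,\no\ C^{j-1}_i,\ C^j_i$, is satisfied by $\phi(S)$ --- contradicting that $\phi(S)$ is a model of that constraint. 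Hence $\min\{k \mid C^k_i \in S\} = x_i$ and $S$ satisfies rule $i$ to degree $x_i$.

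I do not anticipate a genuine obstacle; the only points to keep straight are (i) which atoms lie in $\sigma$, so that membership in $S$ and in $\phi(S)$ coincides, as opposed to the auxiliary atoms $body_i$, and (ii) that when $x_i = j$ the corresponding instance of \eqref{proof:lpod:4} is vacuous (its body contains $x_i \ne j$), so only the instances with $j \ne x_i$ carry content, which are exactly the ones used above. It is also worth recording, as the footnote to the lemma suggests, why the statement fails if $AP_\Pi(x_1,\dots,x_m)$ is replaced by a split program $\Pi(k_1,\dots,k_m)$: a single candidate answer set can be an answer set of several split programs with different option indices (e.g.\ $\{b\}$ in Example~\ref{ex:abcd}), so $k_i$ need not equal the satisfaction degree, whereas the assumption-program encoding --- via Proposition~\ref{prop:lpod2asp:hardsplit}(a) --- selects the unique list $(x_1,\dots,x_m)$ that matches the degrees.
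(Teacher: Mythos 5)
Your proof is correct and follows essentially the same route as the paper's: a case split on $x_i$ using the constraints \eqref{proof:lpod:1}--\eqref{proof:lpod:2} to tie $body_i$ (and hence $\i{Body}_i$) to whether $x_i=0$, and then rules \eqref{proof:lpod:3} and \eqref{proof:lpod:4} to pin the first true head atom to $C^{x_i}_i$; you merely spell out more explicitly than the paper the two directions $\min\{k \mid C^k_i \in S\}\leq x_i$ (via \eqref{proof:lpod:3}) and $\geq x_i$ (via \eqref{proof:lpod:4}).
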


\begin{proof}
Since $\phi(S)$ is an answer set of $AP_{\Pi}(x_1, \dots, x_m)$, for $1\leq i \leq m$, $S$ satisfies rules \eqref{proof:lpod:1}, \eqref{proof:lpod:2}, which are equivalent to:
\[
\ba {l}
x_i = 0 \leftrightarrow \neg body_i \\
x_i > 0 \leftrightarrow body_i
\ea
\]
If $x_i=0$, $\phi(S) \not \vDash body_i$. So the body of rule $i$ is not satisfied by $S$, which means rule $i$ is satisfied at (i.e., satisfied to) degree 1.
If $x_i > 0$, $\phi(S) \vDash body_i$. By rule (\ref{proof:lpod:4}), the first atom in the head of rule $i$ that is true in $\phi(S)$, and also $S$, is $C^{x_i}$, which means that rule $i$ is satisfied by $S$ at degree $x_i$.
\end{proof}

\begin{lemma} \label{lem:lpod:twoAS}
Let $\Pi$ be an LPOD (\ref{program:lpod}). Let $AP_{\Pi}(x_1,\dots,x_m)$ and $AP_{\Pi}(y_1, \dots, y_m)$ be two programs that are consistent, where the list $x_1,\dots,x_m$ is different from $y_1, \dots, y_m$. Let $S_1$ be an answer set of $AP_{\Pi}(x_1,\dots,x_m)$, $S_2$ be an answer set of $AP_{\Pi}(y_1, \dots, y_m)$.
Then 
\begin{itemize}
\ii[{\bf (a)}]
there exists an optimal answer set $K$ of ${\sf lpod2asp}(\Pi)$ such that $K \vDash ap(x_1, \dots, x_m)$,  $K \vDash ap(y_1, \dots, y_m)$, $S_1|_{\sigma} = shrink(K, x_1, \dots, x_m)$, and $S_2|_{\sigma} = shrink(K, y_1, \dots, y_m)$;
\ii[{\bf (b)}]
any optimal answer set $K$ of ${\sf lpod2asp}(\Pi)$ must satisfy $ap(x_1, \dots, x_m)$ and $ap(y_1, \dots, y_m)$.
\end{itemize}
\end{lemma}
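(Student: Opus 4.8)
The plan is to deduce both parts from Proposition~\ref{prop:lpod2asp:base} (and the facts established inside its proof) together with Lemma~\ref{prop:lpod2asp:base-3}, exploiting that the partial grounded programs $gr(\Pi_{1,2},z_1,\dots,z_m)$ attached to distinct degree lists share no atoms, so the splitting theorem lets us take optimal answer sets of $\Pi_{1,2}:=\Pi_1\cup\Pi_2={\sf lpod2asp}(\Pi)_{base}$ apart and glue them back together component by component. From the proof of Proposition~\ref{prop:lpod2asp:base} I would borrow: $\Pi_{1,2}$ always has an answer set (the empty set), hence it has optimal answer sets; and statement~\eqref{statement:1}, namely that if $AP_{\Pi}(z_1,\dots,z_m)$ is consistent then every optimal answer set of $\Pi_{1,2}$ satisfies $ap(z_1,\dots,z_m)$ (a dual argument gives that it falsifies $ap(z_1,\dots,z_m)$ when that assumption program is inconsistent, so all optimal answer sets of $\Pi_{1,2}$ carry the same set of $ap(\cdot)$ atoms and thus the same penalty). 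I would also record once that, since every atom introduced by $\Pi_3$ is outside the signature of $\Pi_{1,2}$ and the weak constraint~\eqref{equ:lpod:wc} scores only $ap(\cdot)$ atoms, the $1$--$1$ correspondence of Lemma~\ref{prop:lpod2asp:base-3} is penalty-preserving and hence restricts to a penalty-preserving bijection between the optimal answer sets of ${\sf lpod2asp}(\Pi)$ and those of $\Pi_{1,2}$ that still agrees on the signature of $\Pi_{1,2}$ --- in particular on all $ap(\cdot)$ atoms and on every $a({\bf v},z_1,\dots,z_m)$, which is everything $shrink$ inspects.

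Given this, part (b) is immediate: if $K$ is an optimal answer set of ${\sf lpod2asp}(\Pi)$, its image $S'$ under the above bijection is an optimal answer set of $\Pi_{1,2}$; since $AP_{\Pi}(x_1,\dots,x_m)$ and $AP_{\Pi}(y_1,\dots,y_m)$ are consistent (witnessed by $S_1$ and $S_2$), statement~\eqref{statement:1} yields $S'\vDash ap(x_1,\dots,x_m)$ and $S'\vDash ap(y_1,\dots,y_m)$, hence the same holds for $K$.

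For part (a) I would first note that $S_1=\phi(S_1|_{\sigma})$ and $S_2=\phi(S_2|_{\sigma})$, because in $AP_{\Pi}(\cdot)$ the atom $body_i$ is derivable only through rule~\eqref{proof:lpod:body}, so $body_i\in S_1$ iff $S_1\vDash\i{Body}_i$; and that rules \eqref{proof:lpod:1}--\eqref{proof:lpod:4} make the degree list of any assumption program uniquely recoverable from its answer set, so the degree list that Proposition~\ref{prop:lpod2asp:base}(a) attaches to the candidate answer set $S_1|_{\sigma}$ (a candidate answer set by Proposition~\ref{prop:lpod2asp:hardsplit}(b)) is exactly the given $(x_1,\dots,x_m)$, and likewise for $S_2|_{\sigma}$. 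Therefore, by the argument inside the proof of Proposition~\ref{prop:lpod2asp:base}(a), the set $\psi(S_1|_{\sigma})$ defined there is an answer set of $gr(\Pi_{1,2},x_1,\dots,x_m)$ and $\psi(S_2|_{\sigma})$ is an answer set of $gr(\Pi_{1,2},y_1,\dots,y_m)$. Now fix any optimal answer set $T_0$ of $\Pi_{1,2}$, let $\sigma_x$ and $\sigma_y$ be the (disjoint) signatures of those two partial grounded programs, and set
$$
T \;=\; \psi(S_1|_{\sigma}) \,\cup\, \psi(S_2|_{\sigma}) \,\cup\, \bigl(T_0\setminus(T_0|_{\sigma_x}\cup T_0|_{\sigma_y})\bigr).
$$
Since distinct partial grounded programs of $\Pi_{1,2}$ share no atoms, the splitting theorem (iterated, with the blocks $\sigma_x$, $\sigma_y$, and the remaining atoms) shows $T$ is an answer set of $\Pi_{1,2}$; since the only $ap(\cdot)$ atom in $\sigma_x$ is $ap(x_1,\dots,x_m)$, true in both $\psi(S_1|_{\sigma})$ and $T_0$ (the latter by~\eqref{statement:1}), and likewise on $\sigma_y$, the set $T$ carries the same $ap(\cdot)$ atoms as $T_0$, hence the same penalty, so $T$ is optimal. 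Taking $K$ to be the image of $T$ under the bijection described above gives an optimal answer set of ${\sf lpod2asp}(\Pi)$ with $K\vDash ap(x_1,\dots,x_m)$, $K\vDash ap(y_1,\dots,y_m)$, $shrink(K,x_1,\dots,x_m)=shrink(\psi(S_1|_{\sigma}),x_1,\dots,x_m)=S_1|_{\sigma}$, and $shrink(K,y_1,\dots,y_m)=S_2|_{\sigma}$, which is the claim.

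The main obstacle is the splicing step --- verifying that $T$ is again an answer set of $\Pi_{1,2}$ --- which requires invoking the splitting theorem with the three-block partition and checking its positivity and strongly-connected-component hypotheses, and, upstream of it, making sure $\psi(S_i|_{\sigma})$ is an answer set of the partial grounded program for exactly the given degree list; this is the place where the uniqueness of an assumption program's degree list (read off from rules \eqref{proof:lpod:1}--\eqref{proof:lpod:4}) is essential. Everything else is bookkeeping that is already carried out in the proofs of Proposition~\ref{prop:lpod2asp:base} and Lemma~\ref{prop:lpod2asp:base-3}.
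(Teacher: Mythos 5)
Your proposal is correct and follows essentially the same route as the paper's proof: part (b) from statement~\eqref{statement:1} transferred through the penalty-preserving correspondence of Lemma~\ref{prop:lpod2asp:base-3}, and part (a) by lifting the given answer sets to $\psi(\cdot)$ sets and splicing them into an optimal answer set of $\Pi_1\cup\Pi_2$ via the splitting theorem and the disjointness of the partial grounded programs. The only (cosmetic) difference is that you graft both the $x$- and $y$-components onto an arbitrary optimal answer set $T_0$, whereas the paper grafts only the $x$-component onto the optimal answer set $L_2$ supplied by Proposition~\ref{prop:lpod2asp:base} for $S_2$; your explicit remarks that $S_i=\phi(S_i|_{\sigma})$ and that the degree list is recoverable from the answer set are points the paper leaves implicit.
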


\begin{proof}
{\bf (a)} Let ${\sf lpod2asp}(\Pi)$ be $\Pi_1 \cup \Pi_2 \cup \Pi_3$ as defined before. By Lemma \ref{prop:lpod2asp:base-3}, it is sufficient to prove that there exists an optimal answer set $L$ of $\Pi_1 \cup \Pi_2$ such that $L \vDash ap(x_1, \dots, x_m)$,  $L \vDash ap(y_1, \dots, y_m)$, $S_1|_{\sigma} = shrink(L, x_1, \dots, x_m)$, and $S_2|_{\sigma} = shrink(L, y_1, \dots, y_m)$.

Let $\Pi_{1,2}$ be $\Pi_1 \cup \Pi_2$. By Proposition \ref{prop:lpod2asp:base}, there exists an optimal answer set $L_2$ of $\Pi_{1,2}$ such that $L_2 \vDash ap(y_1, \dots, y_m)$, and $S_2|_{\sigma} = shrink(L_2, y_1, \dots, y_m)$. 
Let $\psi(S_1)$ be 
$
\{ a({\bf v}, x_1, \dots, x_m) \mid a({\bf v}) \in S_1 \}
~~\cup~~ \{ body_i(x_1, \dots, x_m)~ \mid ~S_1 \text{ satisfies the body of rule $i$ in $\Pi_{od}$}\}
~~\cup~~ \{ ap(x_1, \dots, x_m),$  
\newline
$degree(ap(x_1, \dots, x_m), d_1, \dots, d_m) \} 
$, where $d_i = 1$ if $x_i = 0$, $d_i = x_i$ if $x_i > 0$. 
Let $L$ be the union of $\psi(S_1)$ and $L_2 \setminus L_2|_{\sigma^{gr(\Pi_{1,2}, x_1,\dots, x_m)}}$.
It's easy to see that $L \vDash ap(x_1, \dots, x_m)$,  $L \vDash ap(y_1, \dots, y_m)$, $S_1|_{\sigma} = shrink(L, x_1, \dots, x_m)$, and $S_2|_{\sigma} = shrink(L, y_1, \dots, y_m)$. Besides, $L$ has the same penalty as $L_2$.
So to prove Lemma \ref{lem:lpod:twoAS} {\bf (a)}, it is sufficient to prove that $L$ is an answer set of $\Pi_1 \cup \Pi_2$.

First, we prove $\psi(S_1)$ is an answer set of $gr(\Pi_{1,2}, x_1,\dots, x_m)$. 
\begin{enumerate}
\item
By the construction of $\psi(S_1)$, $\psi(S_1)$ satisfies the reduct of $gr(\Pi_{2}, x_1,\dots, x_m)$ relative to $\psi(S_1)$, and is minimal with respect to $\sigma^{gr(\Pi_{2}, x_1,\dots, x_m)}$. So $\psi(S_1)$ is an answer set of $gr(\Pi_{2}, x_1,\dots, x_m)$ relative to $\sigma^{gr(\Pi_{2}, x_1,\dots, x_m)}$.
\item
Since $S_1$ is a minimal model of the reduct of $AP_{\Pi}(x_1, \dots, x_m)$ relative to $S_1$, and $\psi(S_1) \vDash ap(x_1, \dots, x_m)$, it's easy to check that $\psi(S_1)$ is a minimal model of the reduct of $gr(\Pi_{1}, x_1,\dots, x_m)$ relative to $\psi(S_1)$ with respect to $\sigma^{gr(\Pi_{1}, x_1,\dots, x_m)}$. So $\psi(S_1)$ is an answer set of $gr(\Pi_{1}, x_1,\dots, x_m)$ relative to $\sigma^{gr(\Pi_{1}, x_1,\dots, x_m)}$. 
\end{enumerate}
By the splitting theorem, $\psi(S_1)$ is an answer set of $gr(\Pi_{1,2}, x_1,\dots, x_m)$. 

Second, let $\Pi_{1,2}^{gr}$ be $\bigcup_{y_i \in \{ 0,\dots,n_i \}} gr(\Pi_{1,2}, y_1,\dots, y_m)$. Since each partial grounded program of $\Pi_{1,2}$ is disjoint from each other, by the splitting theorem, $L_2|_{\sigma^{gr(\Pi_{1,2}, x_1,\dots, x_m)}}$ is an answer set of $gr(\Pi_{1,2}, x_1,\dots, x_m)$ and $L_2 \setminus L_2|_{\sigma^{gr(\Pi_{1,2}, x_1,\dots, x_m)}}$ is an answer set of $\Pi_{1,2}^{gr} \setminus gr(\Pi_{1,2}, x_1,\dots, x_m)$. 

Finally, by the splitting theorem, $L$ is an answer set of $\Pi_1 \cup \Pi_2$.

{\bf (b)} Let ${\sf lpod2asp}(\Pi)$ be $\Pi_1 \cup \Pi_2 \cup \Pi_3$ as defined before. By Lemma \ref{prop:lpod2asp:base-3}, it is sufficient to prove that any optimal answer set $L$ of $\Pi_1 \cup \Pi_2$ must satisfy $ap(x_1, \dots, x_m)$ and $ap(y_1, \dots, y_m)$. 
Since $S_1$ is an answer set of $AP_{\Pi}(x_1,\dots,x_m)$, and $S_2$ is an answer set of $AP_{\Pi}(y_1, \dots, y_m)$, 
by \eqref{statement:1}, any optimal answer set $L$ of $\Pi_1 \cup \Pi_2$ must satisfy $ap(x_1, \dots, x_m)$ and $ap(y_1, \dots, y_m)$.
\end{proof}

\begin{lemma} \label{lem:lpod:candidate}
The candidate answer sets of an LPOD $\Pi$ of signature $\sigma$ are exactly the candidate answer sets on $\sigma$ of ${\sf lpod2asp}(\Pi)$. In other words, (for any set $S$ of atoms, let $\phi(S)$ be $S \cup \{ body_i \mid$ $S$ satisfies the body of rule $i$ in $\Pi_{od}$ $\}$)
\begin{itemize}
\ii[{\bf (a)}]
for any candidate answer set $S$ of $\Pi$, there are $x_1,\dots,x_m$ such that $\phi(S)$ is an answer set of $AP_{\Pi}(x_1, \dots, x_m)$, and there exists an optimal answer set $K$ of ${\sf lpod2asp}(\Pi)$ such that $K \vDash ap(x_1, \dots, x_m)$ and $S = shrink(K, x_1, \dots, x_m)$;
\ii[{\bf (b)}]
for any optimal answer set $K$ of ${\sf lpod2asp}(\Pi)$ and any $x_1,\dots,x_m$ such that $K \vDash ap(x_1,\dots,x_m)$, $S = shrink(K, x_1, \dots, x_m)$ is a candidate answer set of $\Pi$, and $\phi(S)$ is an answer set of $AP_{\Pi}(x_1, \dots, x_m)$.
\end{itemize}
\end{lemma}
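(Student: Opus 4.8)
The plan is to reduce Lemma~\ref{lem:lpod:candidate} to Proposition~\ref{prop:lpod2asp:base}, which already proves the same statement for the base translation ${\sf lpod2asp}(\Pi)_{base}=\Pi_1\cup\Pi_2$, by using the conservativity result of Lemma~\ref{prop:lpod2asp:base-3}. The point is that the rules of $\Pi_3$ only introduce the fresh auxiliary atoms ($card(\cdot)$, $equ2degree(\cdot)$, $prf2degree(\cdot)$, $equ(\cdot)$, $sum(\cdot)$, $\i{prf}(\cdot)$, $pAS(\cdot)$, depending on the preference criterion), none of which occur in $\Pi_1\cup\Pi_2$, so they are inert as far as the candidate answer sets are concerned. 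Concretely, Lemma~\ref{prop:lpod2asp:base-3} gives a 1-1 correspondence $K\mapsto S'$ between the answer sets of ${\sf lpod2asp}(\Pi)$ and those of $\Pi_1\cup\Pi_2$ with $K$ and $S'$ agreeing on the signature of $\Pi_1\cup\Pi_2$. The first observation I would record is that this correspondence preserves optimality: the only weak constraint in ${\sf lpod2asp}(\Pi)$ is rule~\eqref{equ:lpod:wc}, whose penalty depends solely on the true instances of $ap(\cdot)$, and $ap(\cdot)$ belongs to the signature of $\Pi_1$; hence $K$ and $S'$ incur exactly the same penalty, so $K$ is optimal for ${\sf lpod2asp}(\Pi)$ iff $S'$ is optimal for $\Pi_1\cup\Pi_2$. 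Second, since for every atom $a$ of $\sigma$ the atoms $a({\bf v},x_1,\dots,x_m)$ and the atom $ap(x_1,\dots,x_m)$ all lie in the signature of $\Pi_1$, we have $shrink(K,x_1,\dots,x_m)=shrink(S',x_1,\dots,x_m)$ and $K\models ap(x_1,\dots,x_m)$ iff $S'\models ap(x_1,\dots,x_m)$.

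With these two observations, part~\textbf{(a)} is immediate: given a candidate answer set $S$ of $\Pi$, Proposition~\ref{prop:lpod2asp:base}\textbf{(a)} supplies $x_1,\dots,x_m$ with $\phi(S)$ an answer set of $AP_{\Pi}(x_1,\dots,x_m)$ together with an optimal answer set $S'$ of $\Pi_1\cup\Pi_2$ such that $S'\models ap(x_1,\dots,x_m)$ and $S=shrink(S',x_1,\dots,x_m)$; letting $K$ be the counterpart of $S'$ under Lemma~\ref{prop:lpod2asp:base-3}, the observations above show that $K$ is an optimal answer set of ${\sf lpod2asp}(\Pi)$ with $K\models ap(x_1,\dots,x_m)$ and $shrink(K,x_1,\dots,x_m)=S$. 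Part~\textbf{(b)} is the same argument run backwards: given an optimal answer set $K$ of ${\sf lpod2asp}(\Pi)$ and $x_1,\dots,x_m$ with $K\models ap(x_1,\dots,x_m)$, let $S'$ be the counterpart answer set of $\Pi_1\cup\Pi_2$; it is optimal and satisfies $ap(x_1,\dots,x_m)$, so Proposition~\ref{prop:lpod2asp:base}\textbf{(b)} yields that $S=shrink(S',x_1,\dots,x_m)=shrink(K,x_1,\dots,x_m)$ is a candidate answer set of $\Pi$ and that $\phi(S)$ is an answer set of $AP_{\Pi}(x_1,\dots,x_m)$.

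The one place that needs genuine care — and hence the main obstacle — is the justification that Lemma~\ref{prop:lpod2asp:base-3} applies and yields exactly the correspondence we need, i.e.\ that appending $\Pi_3$ is a conservative extension that leaves the base part of every answer set untouched. That is handled inside the proof of Lemma~\ref{prop:lpod2asp:base-3} by peeling off the rules of $\Pi_3$ one at a time and invoking Lemma~\ref{lem:2} (Ferraris's Proposition~8): one must check that the rules of $\Pi_3$ can be listed in an order so that each is of the form $Def(q)\rightarrow q$ for a fresh atom $q$ whose ``definition'' $Def(q)$ mentions no atom introduced later in the list (e.g.\ $card$ before $equ2degree$ and $prf2degree$, those before $\i{prf}$, and $\i{prf}$ before $pAS$). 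For the present lemma I would simply cite Lemma~\ref{prop:lpod2asp:base-3}; everything else is the bookkeeping described above. I would also remark, for use in the remainder of the proof of Theorem~\ref{thm:lpod2asp}, that by Lemma~\ref{prop:lpod2asp:hardsplit:degree} the assumption degree list $(x_1,\dots,x_m)$ attached to a candidate answer set determines its satisfaction degree list — exactly what the $\i{degree}(\cdot)$ atoms introduced in $\Pi_2$ record — which is the bridge from this lemma to the preference criteria.
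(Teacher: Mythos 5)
Your proposal is correct and follows essentially the same route as the paper's own proof: both parts are obtained by applying Proposition~\ref{prop:lpod2asp:base} to the base program $\Pi_1\cup\Pi_2$ and then transferring the result through the 1-1 correspondence of Lemma~\ref{prop:lpod2asp:base-3}, observing that optimality, the $ap(\cdot)$ atoms, and the $shrink$ operation are all preserved because they only involve the signature of $\Pi_1\cup\Pi_2$. Your explicit remark that the weak-constraint penalty depends solely on the $ap(\cdot)$ atoms is a slightly more detailed justification of the optimality transfer than the paper gives, but it is the same argument.
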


\begin{proof}
\begin{itemize}
\ii[{\bf (a)}]
Let $S$ be a candidate answer set of $\Pi$. Let ${\sf lpod2asp}(\Pi)$ be $\Pi_1 \cup \Pi_2 \cup \Pi_3$ as defined before. By Proposition \ref{prop:lpod2asp:base}, there are $x_1,\dots,x_m$ such that $\phi(S)$ is an answer set of $AP_{\Pi}(x_1, \dots, x_m)$, and there exists an optimal answer set $S'$ of $\Pi_1 \cup \Pi_2$ such that $S' \vDash ap(x_1, \dots, x_m)$ and $S = shrink(S', x_1, \dots, x_m)$. By Lemma \ref{prop:lpod2asp:base-3}, there exists an answer set $K$ of ${\sf lpod2asp}(\Pi)$ such that $K$ agrees with $S'$ on the signature of $\Pi_1 \cup \Pi_2$. Thus $K \vDash ap(x_1, \dots, x_m)$ and $S = shrink(K, x_1, \dots, x_m)$. Since the signature of $\Pi_1 \cup \Pi_2$ includes all $ap(*)$ atoms and $S'$ is an optimal answer set of $\Pi_1 \cup \Pi_2$, $K$ is an optimal answer set of ${\sf lpod2asp}(\Pi)$. 

\ii[{\bf (b)}]
Let $K$ be an optimal answer set of ${\sf lpod2asp}(\Pi)$ such that $K \vDash ap(x_1,\dots,x_m)$ for some $x_1,\dots,x_m$. 
By Lemma \ref{prop:lpod2asp:base-3}, there exists an answer set $S'$ of $\Pi_1 \cup \Pi_2$ such that $S'$ and $K$ agrees on the signature of $\Pi_1 \cup \Pi_2$, which means $shrink(S', x_1,\dots,x_m) = shrink(K, x_1,\dots,x_m)$, and $S' \vDash ap(x_1,\dots,x_m)$.
Besides, since $K$ and $S'$ satisfy the same set of $ap(*)$ atoms, and $K$ is an optimal answer set of ${\sf lpod2asp}(\Pi)$, $S'$ is an optimal answer set of $\Pi_1 \cup \Pi_2$.
By Proposition \ref{prop:lpod2asp:base}, $S = shrink(S', x_1, \dots, x_m)$ is a candidate answer set of $\Pi$, and $\phi(S)$ is an answer set of $AP_{\Pi}(x_1, \dots, x_m)$.

\end{itemize}
\end{proof}

\begin{lemma} \label{lem:lpod:preferred}
Under each of the four preference criteria, the preferred answer sets of an LPOD $\Pi$ of signature $\sigma$ are exactly the preferred answer sets on $\sigma$ of ${\sf lpod2asp}(\Pi)$. In other words,
\begin{itemize}
\ii[{\bf (a)}]
for any preferred answer set $S$ of $\Pi$, there exists an optimal answer set $K$ of ${\sf lpod2asp}(\Pi)$ and there are $x_1, \dots, x_m$ such that $K \vDash pAS(x_1, \dots, x_m)$ and $S = shrink(K, x_1, \dots, x_m)$;
\ii[{\bf (b)}]
for any optimal answer set $K$ of ${\sf lpod2asp}(\Pi)$ and any $x_1,\dots,x_m$ such that $K \vDash pAS(x_1, \dots, x_m)$, $S = shrink(K, x_1, \dots, x_m)$ is a preferred answer set of $\Pi$.
\end{itemize}
\end{lemma}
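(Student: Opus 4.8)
The plan is to derive the statement from two ingredients already established: the candidate-answer-set correspondence of Lemma~\ref{lem:lpod:candidate}, and the fact that every optimal answer set of ${\sf lpod2asp}(\Pi)$ records, via its $ap(\cdot)$ and $\i{degree}(\cdot)$ atoms, \emph{all} consistent assumption programs together with their satisfaction degree lists. Once this is in place, the only genuinely new work is to verify, criterion by criterion, that the rules defining $\i{prf}/2$ compute exactly the relation $>^k$ read off those degree lists, and that the rule defining $pAS$ then selects precisely the $>^k$-maximal ones. I would therefore state and prove a single auxiliary claim: for consistent $AP_\Pi(x_1,\dots,x_m)$ and $AP_\Pi(y_1,\dots,y_m)$ with (unique) associated degree lists $\mathbf{d}^x,\mathbf{d}^y$, and any optimal answer set $K$, we have $\i{prf}(ap(x_1,\dots,x_m),ap(y_1,\dots,y_m))\in K$ iff $\mathbf{d}^x >^k \mathbf{d}^y$; everything else is a routine assembly.

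First I would record the invariants. If $K$ is an optimal answer set of ${\sf lpod2asp}(\Pi)$ and $K\models ap(x_1,\dots,x_m)$, then $AP_\Pi(x_1,\dots,x_m)$ is consistent and $shrink(K,x_1,\dots,x_m)$ is a candidate answer set of $\Pi$ (Lemma~\ref{lem:lpod:candidate}(b)); moreover $K$ contains a unique $\i{degree}(ap(x_1,\dots,x_m),d_1,\dots,d_m)$ forced by rules \eqref{equ:lpod:onedegree}--\eqref{equ:lpod:degreebody} to satisfy $d_i=1$ when $x_i=0$ and $d_i=x_i$ when $x_i>0$, and by Lemma~\ref{prop:lpod2asp:hardsplit:degree} this $(d_1,\dots,d_m)$ is exactly the satisfaction degree list of $shrink(K,x_1,\dots,x_m)$. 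Conversely, by Lemma~\ref{lem:lpod:twoAS}(b), every optimal $K$ satisfies $ap(y_1,\dots,y_m)$ for \emph{every} consistent assumption program, so a single $K$ ``sees'' all candidate answer sets and their degree lists at once. Finally I would note that each of $>^c,>^i,>^p,>^{ps}$ is, by definition, a function of the two satisfaction degree lists alone, since $S^i(\Pi)=\{\,j\mid d_j=i\,\}$ is determined by the degree list; hence comparing candidate answer sets reduces to comparing degree lists.

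The core step is the per-criterion verification of the auxiliary claim. For cardinality one checks that $card(P,X,N)$ holds iff $P$ satisfies $N$ rules to degree $X$, so $equ2degree$/$prf2degree$ express $|S^X_1(\Pi)|=|S^X_2(\Pi)|$ and $|S^X_1(\Pi)|>|S^X_2(\Pi)|$, and that the nested bound $X\{equ2degree(P_1,P_2,Y):Y=1..X\}$, being a lower bound over a set of size at most $X$, forces equality at \emph{every} degree $1,\dots,X$; since satisfaction degrees all lie in $\{1,\dots,maxdegree\}$ no degree is skipped, so with $i=X+1$ this matches ``$|S^j_1(\Pi)|=|S^j_2(\Pi)|$ for all $j<i$'' exactly. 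For inclusion the same outer structure is used, with $even(C_j)$ on $C_j=\{D_{1j}=X;D_{2j}=X\}$ encoding ``rule $j$ is satisfied to degree $X$ by $S_1$ iff by $S_2$'' (hence $equ2degree$ means $S^X_1(\Pi)=S^X_2(\Pi)$), and $\{D_{1j}\neq X;D_{2j}=X\}1$ encoding $j\in S^X_2(\Pi)\Rightarrow j\in S^X_1(\Pi)$, which together with $\no\ equ2degree$ yields $S^X_2(\Pi)\subsetneq S^X_1(\Pi)$. For Pareto, $equ(P_1,P_2)$ is equality of degree lists, and $\no\ equ(P_1,P_2)$ together with $D_{1j}\leq D_{2j}$ for all $j$ is precisely ``some rule satisfied to a strictly lower degree in $S_1$ and none in $S_2$''. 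For penalty-sum, $sum(P,N)$ is the sum of the degree list and $N_1<N_2$ is the definition verbatim.

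With the auxiliary claim in hand, part~(a) goes: a preferred answer set $S$ of $\Pi$ is in particular a candidate answer set, so Lemma~\ref{lem:lpod:candidate}(a) supplies $x_1,\dots,x_m$ and an optimal $K$ with $K\models ap(x_1,\dots,x_m)$ and $S=shrink(K,x_1,\dots,x_m)$; if $pAS(x_1,\dots,x_m)\notin K$ then by rule \eqref{equ:lpod:card:pAS} (resp.\ its analogues) some $\i{prf}(ap(y_1,\dots,y_m),ap(x_1,\dots,x_m))\in K$, whence $K\models ap(y_1,\dots,y_m)$, $shrink(K,y_1,\dots,y_m)$ is a candidate answer set by the invariants, and the core step makes it $>^k$-preferred to $S$ --- contradicting that $S$ is $k$-preferred; hence $K\models pAS(x_1,\dots,x_m)$ and $S$ is a preferred answer set on $\sigma$. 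Part~(b) is the mirror image: if $K\models pAS(x_1,\dots,x_m)$ then $S=shrink(K,x_1,\dots,x_m)$ is a candidate answer set, and any candidate $S'$ with $S'>^k S$ has a different degree list, comes (Lemma~\ref{lem:lpod:candidate}(a)) from a consistent $AP_\Pi(y_1,\dots,y_m)$, and by Lemma~\ref{lem:lpod:twoAS}(b) its $ap$-atom is in $K$; the core step then forces $\i{prf}(ap(y_1,\dots,y_m),ap(x_1,\dots,x_m))\in K$, contradicting $pAS(x_1,\dots,x_m)\in K$ through the aggregate $\{\i{prf}(P,ap(x_1,\dots,x_m))\}0$. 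The main obstacle is purely the combinatorial bookkeeping of the aggregates in the core step --- in particular the $even$ encoding of set equality for inclusion and the nested cardinality bound realizing the lexicographic ``ties below'' condition --- while the structural content is carried entirely by Lemmas~\ref{prop:lpod2asp:hardsplit:degree}, \ref{lem:lpod:twoAS}, and~\ref{lem:lpod:candidate}.
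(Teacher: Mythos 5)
Your proposal is correct and follows essentially the same route as the paper's proof: it rests on Lemma~\ref{lem:lpod:candidate} for the candidate correspondence, Lemma~\ref{prop:lpod2asp:hardsplit:degree} to identify the $\i{degree}(\cdot)$ atoms with the satisfaction degree lists, Lemma~\ref{lem:lpod:twoAS} to ensure one optimal answer set sees all relevant $ap(\cdot)$ atoms, and the same criterion-by-criterion verification that the $\i{prf}$ rules compute exactly $>^k$, finishing by contradiction in both directions. The only difference is cosmetic: in part~(a) you extract the dominating witness directly from the aggregate in the $pAS$ rule rather than invoking Lemma~\ref{lem:lpod:twoAS}(a) to embed $S$ together with a chosen competitor $S_2$, which does not change the substance of the argument.
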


\begin{proof}
({\bf a}) 
Let $\Pi$ be an LPOD (\ref{program:lpod}) of signature $\sigma$. 
Let $S$ be a preferred answer set of $\Pi$; and let $S_2$ be any candidate answer set of $\Pi$ with different satisfaction degrees compared to $S$. 
For any set of atoms $S'$, let $\phi(S') = S' \cup \{ body_i \mid$ $S'$ satisfies the body of rule $i$ in $\Pi_{od}\}$.
By Proposition \ref{prop:lpod2asp:hardsplit}, we know $\phi(S)$ is an answer set of $AP_{\Pi}(x_1, \dots, x_m)$ for some $x_1,\dots,x_m$, and $\phi(S_2) = S_2 \cup \{ body_i \mid$ $S_2$ satisfies the body of rule $i$ for some $1\leq i \leq m$ $\}$ is an answer set of $AP_{\Pi}(y_1, \dots, y_m)$ for some $y_1, \dots, y_m$, where by Lemma \ref{prop:lpod2asp:hardsplit:degree}, the list $x_1, \dots, x_m$ is not the same as $y_1, \dots, y_m$. 

By Lemma \ref{lem:lpod:twoAS} {\bf (a)}, there exists an optimal answer set $K$ of ${\sf lpod2asp}(\Pi)$ such that $K \vDash ap(x_1, \dots, x_m)$,  $K \vDash ap(y_1, \dots, y_m)$, $S = shrink(K, x_1, \dots, x_m)$, and $S_2 = shrink(K, y_1, \dots, y_m)$.

Then it is sufficient to prove $K \vDash pAS(x_1, \dots, x_m)$, which by rules \eqref{equ:lpod:card:pAS}, \eqref{equ:lpod:incl:pAS}, \eqref{equ:lpod:pare:pAS}, \eqref{equ:lpod:pena:pAS}, suffices to proving $K \not \vDash prf(ap(y_1, \dots, y_m), ap(x_1, \dots, x_m))$ no matter what $S_2$ we are choosing. 
Assume for the sake of contradiction that $K \vDash prf(ap(y_1, \dots, y_m), ap(x_1, \dots, x_m))$, 
we will derive a contradiction for each preference criterion. 
Note that
\begin{itemize}
\item
$K \vDash degree(ap(x_1, \dots, x_m), d_1,\dots, d_m)$
\end{itemize}
iff (by rules \eqref{equ:lpod:onedegree}, \eqref{equ:lpod:degreenotbody}, \eqref{equ:lpod:degreebody}, and given $K \vDash ap(x_1, \dots, x_m)$)
\begin{itemize}
\item
for $1\leq i \leq m$, $d_i = 1$ if $x_i=0$, $d_i=x_i$ if $x_i>0$
\end{itemize}
iff (by Lemma \ref{prop:lpod2asp:hardsplit:degree}, and given $S$ is a candidate answer set of $\Pi$, and given $\phi(S)$ is an answer set of $AP_{\Pi}(x_1,\dots,x_m)$)
\begin{itemize}
\item
the satisfaction degrees of $S$ are $d_1, \dots, d_m$.
\end{itemize}
Similarly,
\begin{itemize}
\item
$K \vDash degree(ap(y_1, \dots, y_m), e_1, \dots, e_m)$
\end{itemize}
iff
\begin{itemize}
\item
the satisfaction degrees of $S_2$ are $e_1, \dots, e_m$.
\end{itemize}

\begin{enumerate}
\item
{\bf Cardinality-preferred: }
\begin{itemize}
\item
$K \vDash prf(ap(y_1, \dots, y_m), ap(x_1, \dots, x_m))$
\end{itemize}
iff (by rule (\ref{equ:lpod:card:prf}))
\begin{itemize}
\item
there exists a number $d$ such that $0\leq d \leq maxdegree-1$ and 
  \begin{itemize}
  \item
  $K \vDash prf2degree(ap(y_1, \dots, y_m), ap(x_1, \dots, x_m), d+1)$
  \item
  $K \vDash equ2degree(ap(y_1, \dots, y_m), ap(x_1, \dots, x_m),Y)$ for $1\leq Y \leq d$
  \end{itemize}
\end{itemize}
iff (by rules \eqref{equ:lpod:card:equ2}, \eqref{equ:lpod:card:prf2})
\begin{itemize}
\item
there exists a number $d$ such that $0\leq d \leq maxdegree-1$ and 
  \begin{itemize}
  \item
  there exist $n_1$ and $n_2$ such that
  $K \vDash card(ap(y_1, \dots, y_m),d+1,n_1)$, \\ $K \vDash card(ap(x_1, \dots, x_m),d+1,n_2)$, and $n_1 > n_2$
  \item
  for each $1\leq Y \leq d$, there exists a number $n$ such that 
  $K \vDash card(ap(y_1, \dots, y_m),Y,n)$ and $K \vDash card(ap(x_1, \dots, x_m),Y,n)$
  \end{itemize}
\end{itemize}
iff (by rule (\ref{equ:lpod:card:card}))
\begin{itemize}
\item
there exists a number $d$ such that $0\leq d \leq maxdegree-1$ and 
  \begin{itemize}
  \item
  there exist $n_1$ and $n_2$ such that
  $S_2$ satisfies $n_1$ rules at degree $d$, $S$ satisfies $n_2$ rules at degree $d+1$, and $n_1 > n_2$ 
  \item
  for each $1\leq Y \leq d$, there exists a number $n$ such that 
  both $S_2$ and $S$ satisfy $n$ rules at degree $Y$
  \end{itemize}
\end{itemize}
iff (by the semantics of LPOD)
\begin{itemize}
\item
$S_2$ is cardinality-preferred to $S$
\end{itemize}
which violates the fact that $S$ is a preferred answer set.

\item
{\bf Inclusion-preferred: }
\begin{itemize}
\item
$K \vDash prf(ap(y_1, \dots, y_m), ap(x_1, \dots, x_m))$
\end{itemize}
iff (by rule (\ref{equ:lpod:incl:prf}))
\begin{itemize}
\item
there exists a number $d$ such that $0\leq d \leq maxdegree-1$ and 
  \begin{itemize}
  \item
  $K \vDash prf2degree(ap(y_1, \dots, y_m), ap(x_1, \dots, x_m), d+1)$
  \item
  $K \vDash equ2degree(ap(y_1, \dots, y_m), ap(x_1, \dots, x_m),Y)$ for $1\leq Y \leq d$
  \end{itemize}
\end{itemize}
iff (by rules \eqref{equ:lpod:incl:even}, \eqref{equ:lpod:incl:equ2}, \eqref{equ:lpod:incl:prf2})
\begin{itemize}
\item
there exists a number $d$ such that $0\leq d \leq maxdegree-1$ and 
  \begin{itemize}
  \item
  $K \not \vDash equ2degree(ap(y_1, \dots, y_m), ap(x_1, \dots, x_m),d+1)$ and for $1 \leq i \leq m$, whenever $S$ satisfies rule $i$ at degree $d+1$, $S_2$ must also satisfy rule $i$ at degree $d+1$;
  \footnote{
  The atom $\{D_{11}\neq X; D_{21}=X \}1$ is true in $K$ iff the number of atoms in this set that is satisfied by $K$ is smaller or equal to 1, which means that this atom is true iff $K\vDash \neg (\{D_{11}\neq X \land D_{21}=X)$ iff $K \vDash (D_{21}=X \rightarrow \{D_{11}=X)$. In the case $X=d+1$, this atom is true iff ``whenever $S_2$ satisfies rule 1 at degree $d+1$, $S$ must satisfies rule 1 at degree $d+1$''.
  }
  \item
  for each $1\leq Y \leq d$, $S$ satisfies rule $i$ at degree $Y$ iff $S_2$ satisfies rule $i$ at degree $Y$ for $1\leq i \leq m$
  \footnote{
  The atom $C_1 = \{ D_{11}=X; D_{21}=X \}$ is true in $K$ iff $C_1$ is the number of atoms in this set that is satisfied by $K$. Then $C_1=0 \lor C_1=2$ iff $D_{11}=X \leftrightarrow D_{21}=X$, which can be read as ``$S$ satisfies rule 1 at degree $X$ iff $S_2$ satisfies rule 1 at degree $X$''.
  }
  \end{itemize}
\end{itemize}
iff 
\begin{itemize}
\item
there exists a number $d$ such that $0\leq d \leq maxdegree-1$ and 
  \begin{itemize}
  \item
  the rules satisfied by $S$ is a proper subset of the rules satisfied by $S_2$ at degree $d+1$
  \item
  the rules satisfied by $S$ is exactly the rules satisfied by $S_2$ at degrees $\{1,\dots, d\}$
  \end{itemize}
\end{itemize}
iff (by the semantics of LPOD)
\begin{itemize}
\item
$S_2$ is inclusion-preferred to $S$
\end{itemize}
which violates the fact that $S$ is a preferred answer set.

\item
{\bf Pareto-preferred: }
\begin{itemize}
\item
$K \vDash prf(ap(y_1, \dots, y_m), ap(x_1, \dots, x_m))$
\end{itemize}
iff (by rule (\ref{equ:lpod:pare:prf}))
\begin{itemize}
\item
there exists 2 lists $e_1, \dots, e_m$ and $d_{1}, \dots, d_{m}$ such that
  \begin{itemize}
  \item
  $K \vDash degree(ap(y_1, \dots, y_m), e_1, \dots, e_m)$
  \item
  $K\vDash degree(ap(x_1, \dots, x_m), d_1, \dots, d_m)$
  \item
  $K\not \vDash equ(ap(y_1, \dots, y_m), ap(x_1, \dots, x_m))$, and
  \item
  $e_1 \leq d_1, \dots, e_m \leq d_m$
  \end{itemize}
\end{itemize}
iff (by rule (\ref{equ:lpod:pare:equ}))
\begin{itemize}
\item
there exists 2 lists $e_1, \dots, e_m$ and $d_1, \dots, d_m$ such that
  \begin{itemize}
  \item
  $K \vDash degree(ap(y_1, \dots, y_m), e_1, \dots, e_m)$
  \item
  $K\vDash degree(ap(x_1, \dots, x_m), d_1, \dots, d_m)$
  \item
  $e_1 \leq d_1, \dots, e_m \leq d_m$, and there exists an $i$ such that $e_i < d_i$
  \end{itemize}
\end{itemize}
iff (by the semantics of LPOD)
\begin{itemize}
\item
$S_2$ is Pareto-preferred to $S$
\end{itemize}
which violates the fact that $S$ is a preferred answer set.

\item
{\bf Penalty-Sum-preferred: }
\begin{itemize}
\item
$K \vDash prf(ap(y_1, \dots, y_m), ap(x_1, \dots, x_m))$
\end{itemize}
iff (by rule (\ref{equ:lpod:pena:prf}))
\begin{itemize}
\item
there exist $n_1$ and $n_2$ such that
  \begin{itemize}
  \item
  $K \vDash sum(ap(y_1, \dots, y_m), n_1)$
  \item
  $K\vDash sum(ap(x_1, \dots, x_m), n_2)$, and
  \item
  $n_1 < n_2$
  \end{itemize}
\end{itemize}
iff (by rule (\ref{equ:lpod:pena:sum}))
\begin{itemize}
\item
there exist $n_1$ and $n_2$ such that
  \begin{itemize}
  \item
  the sum of the satisfaction degrees of all rules for $S_2$ is $n_1$
  \item
  the sum of the satisfaction degrees of all rules for $S$ is $n_2$, and
  \item
  $n_1 < n_2$
  \end{itemize}
\end{itemize}
iff (by the semantics of LPOD)
\begin{itemize}
\item
$S_2$ is penalty-sum-preferred to $S$
\end{itemize}
which violates the fact that $S$ is a preferred answer set.

\end{enumerate}

({\bf b}) 
Let $\Pi$ be an LPOD (\ref{program:lpod}) of signature $\sigma$; let $K$ be an optimal answer set of ${\sf lpod2asp}(\Pi)$; and let $K$ satisfy $pAS(x_1,\dots, x_m)$. By rules \eqref{equ:lpod:card:pAS}, \eqref{equ:lpod:incl:pAS}, \eqref{equ:lpod:pare:pAS}, \eqref{equ:lpod:pena:pAS}, $K\vDash ap(x_1,\dots, x_m)$. By Lemma \ref{lem:lpod:candidate}, $S=shrink(K, x_1, \dots, x_m)$ is an answer set of $AP_{\Pi}(x_1, \dots, x_m)$. We will prove that $S$ is a preferred answer set of $\Pi$. 

Assume for the sake of contradiction that there exists a candidate answer set $S_2$ of $\Pi$ and $S_2$ is preferred to $S$. By Proposition \ref{prop:lpod2asp:hardsplit}, $S_2$ is also an answer set of $AP_{\Pi}(y_1, \dots, y_m)$ for some $y_1, \dots, y_m$, where by Lemma \ref{prop:lpod2asp:hardsplit:degree}, the list $y_1, \dots, y_m$ is not the same as $x_1, \dots, x_m$. 
By Lemma \ref{lem:lpod:twoAS} {\bf (b)}, $K$ must satisfy $ap(y_1, \dots, y_m)$. Since $K \vDash pAS(x_1,\dots,x_m)$, by rules \eqref{equ:lpod:card:pAS}, \eqref{equ:lpod:incl:pAS}, \eqref{equ:lpod:pare:pAS}, \eqref{equ:lpod:pena:pAS}, to prove a contradiction, it is sufficient to prove $K \vDash$ $prf(ap(y_1, \dots, y_m),$ $ap(x_1, \dots, x_m))$. 

By Lemma \ref{lem:lpod:candidate}, $shrink(K, y_1, \dots, y_m)$ is a candidate answer set of $\Pi$. By Lemma \ref{lem:lpod:candidate} and Lemma \ref{prop:lpod2asp:hardsplit:degree}, $shrink(K, y_1, \dots, y_m)$ has the same satisfaction degrees as $S_2$. So $shrink(S', y_1, \dots, y_m)$ is preferred to $S$. 
As we proved in bullet {\bf (a)}, under any of the four criterion, $shrink(S', y_1, \dots, y_m)$ is preferred to $S$ iff $K \vDash prf(ap(y_1, \dots, y_m),$ $ap(x_1, \dots, x_m))$. Since $shrink(S', y_1, \dots, y_m)$ is preferred to $S$, $K \vDash prf(ap(y_1, \dots, y_m),$ $ap(x_1, \dots, x_m))$.
\end{proof}

\bigskip

\noindent{\bf Theorem~\ref{thm:lpod2asp} \optional{thm:lpod2asp}}\ \ 
{\sl
Under any of the four preference criteria, the candidate (preferred, respectively) answer sets of an LPOD $\Pi$ of signature $\sigma$ are exactly the candidate (preferred, respectively) answer sets on $\sigma$ of ${\sf lpod2asp}(\Pi)$.
}

\begin{proof}
The proof follows from Lemma~\ref{lem:lpod:candidate} and Lemma~\ref{lem:lpod:preferred}.
\end{proof}

\section{Proof of Proposition \ref{prop:crp:SP} } \label{sec:prop:crp:SP}

Let's review the definition of $AP_{\Pi}(x_1, \dots, x_m)$. 
Let $\Pi$ be a $\crpt$ program of signature $\sigma$, where its rules are rearranged such that the cr-rules are of indices $1,\dots, k$, the ordered cr-rules are of indices $k+1,\dots, l$, and the ordered rules are of indices $l+1,\dots, m$. These 3 sets of rules are called $\Pi_{cr}$, $\Pi_{ocr}$, $\Pi_{or}$ respectively, and the remaining part in $\Pi$ is called $\Pi_{r}$. For each rule $i$ in $\Pi_{ocr} \cup \Pi_{or}$, let $n_i$ denote the number of atoms in $head(i)$. 
Let $D_i$ be the set $\{ 0,1 \}$ for $1\leq i \leq k$; $\{ 0,\dots,  n_i\}$ for $k+1 \leq i \leq l$; $\{ 1,\dots,n_i \}$ for $l+1 \leq i \leq m$.
$AP_{\Pi}(x_1, \dots, x_m)$ denotes an assumption program obtained from $\Pi$ as follows, where $x_i \in D_i$.
%
\begin{itemize}
\item
$AP_{\Pi}(x_1, \dots, x_m)$ contains $\Pi_r$

\item 
for each cr-rule
$
i: \i{Head}_i \stackrel{+}\leftarrow \i{Body}_i
$
in $\Pi_{cr}$, $AP_{\Pi}(x_1, \dots, x_m)$ contains
\beq
\i{Head}_i \leftarrow \i{Body}_i, x_i=1
\eeq {proof:crp:SP:1}

\item
for each ordered rule or ordered cr-rule
$
i: C^1_i \times \dots \times C^{n_i}_i \stackrel{(+)}\leftarrow \i{Body}_i
$
in $\Pi_{or} \cup \Pi_{ocr}$, 
for $1\leq j \leq n_i$, $AP_{\Pi}(x_1, \dots, x_m)$ contains
\beq
C^j_i \leftarrow \i{Body}_i, x_i=j 
\eeq {proof:crp:SP:2}

\item
$AP_{\Pi}(x_1, \dots, x_m)$ also contains the following rules: 
\[
\ba {l}
\i{isPreferred}(R1,R2) \leftarrow \i{prefer}(R1,R2). \\
\i{isPreferred}(R1,R3) \leftarrow \i{prefer}(R1,R2), \i{isPreferred}(R2,R3). \\
\leftarrow \i{isPreferred}(R,R). \\
\leftarrow x_{r_1}>0, x_{r_2}>0, \i{isPreferred}(r_1,r_2). \ \ \ \ \ \ (1 \leq r_1, r_2 \leq l)
\ea
\]
\end{itemize}

\bigskip

\noindent{\bf Proposition~\ref{prop:crp:SP}}\ \ 
{\sl
For any $\crpt$ program $\Pi$ of signature $\sigma$, a set $X$ of atoms is the projection of a generalized answer set of $\Pi$ onto $\sigma$ iff $X$ is the projection of an answer set of an assumption program of $\Pi$ onto $\sigma$.
In other words,
\begin{itemize}
\ii[{\bf (a)}]
for any generalized answer set $S$ of $\Pi$, there exists an assumption program $AP_{\Pi}(x_1, \dots, x_m)$ of $\Pi$ and one of its answer set $S'$ such that $S|_{\sigma} = S'|_{\sigma}$;
\ii[{\bf (b)}]
for any answer set $S'$ of any assumption program $AP_{\Pi}(x_1, \dots, x_m)$ of $\Pi$, there exists a generalized answer set $S$ of $\Pi$ such that $S'|_{\sigma} = S|_{\sigma}$.
\end{itemize}
}

\begin{proof}
Let $\Pi$ be a $\crpt$ program.
According to the semantics of $\crpt$, $S$ is a generalized answer set of $\Pi$ iff $S$ is an answer set of $H'_{\Pi}$, where $H'_{\Pi}$ is obtained from $\Pi$ as follows.
\footnote{
Note that $H'_{\Pi}$ is similar to $H_{\Pi}$ (which is defined in Section \ref{subsec:review:crp} of the paper) except that $H'_{\Pi}$ contains a choice rule $\{ A \}$ for each $A \in atoms(H_{\Pi}, \{appl\})$.
}
\begin{itemize}
\item
$H'_{\Pi}$ contains $\Pi_r$

\item 
for each cr-rule
$
i: \i{Head}_i \stackrel{+}\leftarrow \i{Body}_i
$
in $\Pi_{cr}$, $H'_{\Pi}$ contains
\beq
\i{Head}_i \leftarrow \i{Body}_i, \i{appl}(i)
\eeq {proof:crp:pi':1}

\item
for each ordered cr-rule
$
i: C^1_i \times \dots \times C^{n_i}_i \stackrel{+}\leftarrow \i{Body}_i
$
in $\Pi_{ocr}$, 
for $1\leq j \leq n_i$, $H'_{\Pi}$ contains
\begin{align} 
&C^j \leftarrow \i{Body}_i, \i{appl}(i), \i{appl}(choice(i, j))  \label{proof:crp:pi':2}\\
&\i{fired}(i) \leftarrow \i{appl}(choice(i, j))  \label{proof:crp:pi':3}\\
&\i{prefer}(choice(i, j), choice(i, j+1)) ~~~~~(1\leq j \leq n-1)  \label{proof:crp:pi':4}\\
&\leftarrow \i{Body}_i, \i{appl}(i), \no\ \i{fired}(i)  \label{proof:crp:pi':5}
\end{align}

\item
for each ordered rule
$
i: C^1_i \times \dots \times C^{n_i}_i \leftarrow \i{Body}_i
$
in $\Pi_{or}$, 
for $1\leq j \leq n$, $H'_{\Pi}$ contains
\begin{align} 
&C^j \leftarrow \i{Body}_i, \i{appl}(choice(i, j)) \label{proof:crp:pi':6}\\
&\i{fired}(i) \leftarrow \i{appl}(choice(i, j)) \label{proof:crp:pi':7}\\
&\i{prefer}(choice(i, j), choice(i, j+1)) ~~~~~(1\leq j \leq n-1) \label{proof:crp:pi':8}\\
&\leftarrow \i{Body}_i, \no\ \i{fired}(i) \label{proof:crp:pi':9}
\end{align}

\item
$H'_{\Pi}$ also contains:
\begin{align} 
&\i{isPreferred}(R1,R2) \leftarrow \i{prefer}(R1,R2). \label{proof:crp:pi':10}\\
&\i{isPreferred}(R1,R3) \leftarrow \i{prefer}(R1,R2), \i{isPreferred}(R2,R3). \label{proof:crp:pi':11}\\
&\leftarrow \i{isPreferred}(R,R). \label{proof:crp:pi':12}\\
&\leftarrow \i{appl}(R1), \i{appl}(R2), \i{isPreferred}(R1,R2). \label{proof:crp:pi':13}
\end{align}

\item
and for each $A \in atoms(H_{\Pi}, \{appl\})$, $H'_{\Pi}$ also contains
\beq
\{ A \}.
\eeq {proof:crp:pi':14}
\end{itemize}

Note that rule \eqref{proof:crp:pi':10} can be considered as two rules: (\ref{proof:crp:pi':10}r), in which each variable is grounded by an index of a cr-rule; and (\ref{proof:crp:pi':10}a), in which each variable is grounded by a term $choice(*)$. Similarly, each of the rules \eqref{proof:crp:pi':11}, \eqref{proof:crp:pi':12}, \eqref{proof:crp:pi':13} can be considered as two rules. 

The (propositional) signature of $H'_{\Pi}$ is $\sigma \cup atoms(H'_{\Pi}, \{appl, fired, prefer, isPreferred \})$, while the (propositional) signature of $AP_{\Pi}(x_1,\dots,x_m)$ is $\sigma \cup atoms(AP_{\Pi}(x_1,\dots,x_m), \{isPreferred\})$, which is a subset of the signature of $H'_{\Pi}$.

\begin{itemize}
\ii[{\bf (a)}]
Let $S$ be a generalized answer set of $\Pi$. Then $S$ is an answer set of $H'_{\Pi}$.
We obtain $x_1, \dots, x_m$ such that
\begin{itemize}
\item
for $1\leq i \leq k$:
$x_i = 0$ if $S\not \vDash \i{appl}(i)$,

\hspace{2.1cm}$x_i = 1$ if $S \vDash \i{appl}(i)$;
\item
for $k+1\leq i \leq l$:
$x_i = 0$ if $S\not \vDash \i{appl}(i)$,

\hspace{2.7cm}$x_i = j$ if $S \vDash \i{appl}(i)$ and $S \vDash \i{appl}(choice(i,j))$, 
\footnote{
Since $S$ is an answer set of $H'_{\Pi}$, by rules \eqref{proof:crp:pi':4}, \eqref{proof:crp:pi':10}, \eqref{proof:crp:pi':11}, and \eqref{proof:crp:pi':13}, $S$ cannot satisfy $\i{appl}(choice(i, j))$ for two different $j$.
}

\hspace{2.7cm}$x_i = 1$ if $S \vDash \i{appl}(i)$ and $S \not \vDash \i{appl}(choice(i,j))$ for all $j$ (in the 

\hspace{2.7cm}case when $S \not \vDash \i{Body}_i$);
\item
for $l+1\leq i \leq m$: 
$x_i = j$ if $S \vDash \i{appl}(choice(i,j))$,

\hspace{2.8cm}$x_i = 1$ if $S \not \vDash \i{appl}(choice(i,j))$ for all $j$.
\end{itemize}
Then it is sufficient to prove that the projection of $S$ onto 
$$\sigma \cup atoms(AP_{\Pi}(x_1,\dots,x_m), \{\i{isPreferred}\})$$ 
is an answer set of $AP_{\Pi}(x_1, \dots, x_m)$. This is equivalent to proving $S$ is a minimal model of the reduct of $AP_{\Pi}(x_1, \dots, x_m)$ relative to $\sigma \cup atoms(AP_{\Pi}(x_1,\dots,x_m), \{\i{isPreferred}\})$.

The assumption program $AP_{\Pi}(x_1, \dots, x_m)$ is similar to $H'_{\Pi}$ except that 
\begin{enumerate}
\item
$AP_{\Pi}(x_1, \dots, x_m)$ does not contain the constraints: \eqref{proof:crp:pi':5}, \eqref{proof:crp:pi':9}, (\ref{proof:crp:pi':12}a,) (\ref{proof:crp:pi':13}a) 
\item
$AP_{\Pi}(x_1, \dots, x_m)$ does not contain the definitions for $fired(*)$, $prefer(choice(*), choice(*))$, and $isPreferred(choice(*), choice(*))$: \eqref{proof:crp:pi':3}, \eqref{proof:crp:pi':4}, \eqref{proof:crp:pi':7}, \eqref{proof:crp:pi':8}, (\ref{proof:crp:pi':10}a), (\ref{proof:crp:pi':11}a)
\item
$AP_{\Pi}(x_1, \dots, x_m)$ uses the value assignments for $x_i$ to represent $appl(*)$ in $H'_{\Pi}$
\end{enumerate}

Let $(H'_{\Pi})_{i,\dots,j}$ denote the set of rules in $H'_{\Pi}$ translated by rules $(i),\dots,(j)$.

First, let's obtain $\Pi_1$ from $H'_{\Pi}$ by removing the constraints \eqref{proof:crp:pi':5}, \eqref{proof:crp:pi':9}, (\ref{proof:crp:pi':12}a,) (\ref{proof:crp:pi':13}a). In other words,  $\Pi_1$ is $H'_{\Pi} \setminus (H'_{\Pi})_{\ref{proof:crp:pi':5}, \ref{proof:crp:pi':9}, \ref{proof:crp:pi':12}a, \ref{proof:crp:pi':13}a}$.
By Lemma \ref{lem:1} (e), $S$ is an answer set of $\Pi_1$.

Second, let's obtain $\Pi_2$ from $\Pi_1$ by removing the definitions for $fired(*)$,\\ $prefer(choice(*), choice(*))$, and $isPreferred(choice(*), choice(*))$. In other words, $\Pi_2$ is $\Pi_1 \setminus (H'_{\Pi})_{\ref{proof:crp:pi':3}, \ref{proof:crp:pi':4}, \ref{proof:crp:pi':7}, \ref{proof:crp:pi':8}, \ref{proof:crp:pi':10}a, \ref{proof:crp:pi':11}a}$. Let $\sigma_1$ be the propositional signature of $\Pi_1$ and let $\sigma_2$ be the propositional signature of $\Pi_2$. We will use the splitting theorem to split $\Pi_1$ into $\Pi_2$ and $(H'_{\Pi})_{\ref{proof:crp:pi':3}, \ref{proof:crp:pi':4}, \ref{proof:crp:pi':7}, \ref{proof:crp:pi':8}, \ref{proof:crp:pi':10}a, \ref{proof:crp:pi':11}a}$.
Since 
\begin{enumerate}
\item
no atom in $\sigma_2$ has a strictly positive occurrence in $(H'_{\Pi})_{\ref{proof:crp:pi':3}, \ref{proof:crp:pi':4}, \ref{proof:crp:pi':7}, \ref{proof:crp:pi':8}, \ref{proof:crp:pi':10}a, \ref{proof:crp:pi':11}a}$, 
\item
no atom in $\sigma_1 \setminus \sigma_2$ has a strictly positive occurrence in $\Pi_2$, and 
\item
each strongly connected component of the dependency graph of $\Pi_1$ w.r.t. $\sigma_1$ is a subset of $\sigma_2$ or $\sigma_1 \setminus \sigma_2$, 
\end{enumerate}
by the splitting theorem, $S$ is an answer set of $\Pi_2$ relative to $\sigma_2$, where $\sigma_2$ equals to $\sigma \cup atoms(\Pi_2, \{ appl\}) \cup atoms(\Pi_2, \{\i{isPreferred}\})$.

Third, by the assignments of $x_i,\dots, x_m$, we know 
\begin{itemize}
\item
for $1\leq i \leq k$:
$S \vDash \i{appl}(i)$ iff $x_i = 1$,
\item
for $k+1\leq i \leq l$:
$S \vDash \i{Body}_i \land \i{appl}(i) \land \i{appl}(choice(i,j))$ iff $S\vDash \i{Body}_i$ and $x_i = j$
\item
for $l+1\leq i \leq m$: 
$S \vDash \i{Body}_i \land \i{appl}(choice(i,j))$ iff $S\vDash \i{Body}_i$ and $x_i = j$.
\end{itemize}
Note that we can obtain $AP_{\Pi}(x_1,\dots,x_m)$ from $\Pi_2$ by 
\begin{itemize}
\item
for $1\leq i \leq k$, replacing $\i{appl}(i)$ with $x_i = 1$ in rule \eqref{proof:crp:pi':1};

\item
for $k+1 \leq i \leq l$, replacing $\i{appl}(i) \land \i{appl}(choice(i,j))$ with $x_i = j$ in rule \eqref{proof:crp:pi':2};

\item
for $l+1 \leq i \leq m$, replacing $\i{appl}(choice(i,j))$ with $x_i = j$ in rule \eqref{proof:crp:pi':6}

\item
for $1 \leq i \leq l$, replacing $\i{appl}(i)$ with $x_i > 0$ in (grounded) rule \eqref{proof:crp:pi':13}.
\end{itemize}

Since $S$ is a minimal model of the reduct of $\Pi_2$ relative to $\sigma \cup atoms(H_{\Pi}, appl) \cup atoms(\Pi_2, \{\i{isPreferred}\})$, $S$ is a minimal model of the reduct of $AP_{\Pi}(x_1,\dots,x_m)$ relative to $\sigma \cup atoms(\Pi_2, \{\i{isPreferred}\})$. Since 
$$atoms(\Pi_2, \{\i{isPreferred}\}) = atoms(AP_{\Pi}(x_1,\dots,x_m), \{\i{isPreferred}\}),$$
$S$ is a minimal model of the reduct of $AP_{\Pi}(x_1,\dots,x_m)$ relative to $$\sigma \cup atoms(AP_{\Pi}(x_1,\dots,x_m), \{\i{isPreferred}\}). $$

\ii[{\bf (b)}]
Let $AP_{\Pi}(x_1,\dots,x_m)$ be an assumption program of $\Pi$, and $S_{sp}$ be an answer set of $AP_{\Pi}(x_1,\dots,x_m)$. 
\[
\ba {rl}
\text{Let  } S = S_{sp} & \cup \{ appl(i) \mid  1\leq i \leq k, x_i=1 \} \\
& \cup \{ appl(i), appl(choice(i,j)), fired(i) \mid  k+1\leq i \leq l, x_i =j, j>0 \} \\
& \cup \{ appl(choice(i,j)), fired(i)  \mid  l+1\leq i \leq m, x_i =j \} \\
& \cup \{ prefer(choice(i,j), choice(i,j+1)) \mid  k+1\leq i \leq m, 1\leq j\leq n_i \} \\
& \cup \{ isPreferred(choice(i,j_1), choice(i,j_2)) \mid  k+1\leq i \leq m, 1\leq j_1 < j_2\leq n_i \} 
\ea
\]
It is sufficient to prove $S$ is an answer set of $H'_{\Pi}$.

Let $\Pi_1$ be $H'_{\Pi} \setminus (H'_{\Pi})_{\ref{proof:crp:pi':5}, \ref{proof:crp:pi':9}, \ref{proof:crp:pi':12}a, \ref{proof:crp:pi':13}a}$. Let $\Pi_2$ be $\Pi_1 \setminus (H'_{\Pi})_{\ref{proof:crp:pi':3}, \ref{proof:crp:pi':4}, \ref{proof:crp:pi':7}, \ref{proof:crp:pi':8}, \ref{proof:crp:pi':10}a, \ref{proof:crp:pi':11}a}$. 

First, we prove
\[
\ba {rl}
S_{sp} & \cup \{ appl(i) \mid  1\leq i \leq k, x_i=1 \} \\
& \cup \{ appl(i), appl(choice(i,j)) \mid  k+1\leq i \leq l, x_i =j, j>0 \} \\
& \cup \{ appl(choice(i,j))  \mid  l+1\leq i \leq m, x_i =j \},
\ea
\]
denoted by $S_2$, is an answer set of $\Pi_2$. 
Let's compare the reduct of $AP_{\Pi}(x_1,\dots,x_m)$ relative to $S_{sp}$ and the reduct of $\Pi_2$ relative to $S_2$. The reduct of $\Pi_2$ relative to $S_2$ can be obtained from the reduct of $AP_{\Pi}(x_1,\dots,x_m)$ relative to $S_{sp}$ by adding the facts
\begin{enumerate}
\item
$appl(i)$ for $1\leq i \leq k$ and $x_i=1$,
\item
$appl(i)$ and $appl(choice(i,j))$ for $k+1\leq i \leq l$, and $x_i =j, j>0$, 
\item
$appl(choice(i,j))$ for $l+1\leq i \leq m$, and $x_i =j$;
\end{enumerate}
and replacing 
\begin{enumerate}
\item
$x_i=1$ by $appl(i)$ for $1\leq i \leq k$,
\item
$x_i =j$, where $j>0$, by $appl(i) \land appl(choice(i,j))$ for $k+1\leq i \leq l$, 
\item
$x_i =j$ by $appl(choice(i,j))$ for $l+1\leq i \leq m$.
\end{enumerate}
Since $S_{sp}$ is a minimal model of the reduct of $AP_{\Pi}(x_1,\dots,x_m)$ relative to $S_{sp}$, and since
\begin{enumerate}
\item
for $1\leq i \leq k$, $S_2 \vDash appl(i)$ iff $x_i=1$,
\item
for $k+1\leq i \leq l$, $S_2 \vDash appl(i) \land appl(choice(i,j))$  iff $x_i =j \land j>0$, 
\item
for $l+1\leq i \leq m$, $appl(choice(i,j))$ iff $x_i =j$;
\end{enumerate}
$S_2$ is a minimal model of the reduct of $\Pi_2$ relative to $S_2$.

Second, we prove $S$ is an answer set of $\Pi_1$. Note that $S$ equals
\[
\ba {rl}
S_2 & \cup \{ fired(i) \mid  k+1\leq i \leq l, x_i =j, j>0 \} \\
& \cup \{ fired(i)  \mid  l+1\leq i \leq m, x_i =j \} \\
& \cup \{ prefer(choice(i,j), choice(i,j+1)) \mid  k+1\leq i \leq m, 1\leq j\leq n_i \} \\
& \cup \{ isPreferred(choice(i,j_1), choice(i,j_2)) \mid  k+1\leq i \leq m, 1\leq j_1 < j_2\leq n_i \} .
\ea
\]
Let $\sigma_1$ be the propositional signature of $\Pi_1$ and let $\sigma_2$ be the propositional signature of $\Pi_2$. We will use the splitting theorem to construct $\Pi_1$ from $\Pi_2$ and $(H'_{\Pi})_{\ref{proof:crp:pi':3}, \ref{proof:crp:pi':4}, \ref{proof:crp:pi':7}, \ref{proof:crp:pi':8}, \ref{proof:crp:pi':10}a, \ref{proof:crp:pi':11}a}$.
Note that 
\begin{enumerate}
\item
no atom in $\sigma_2$ has a strictly positive occurrence in $(H'_{\Pi})_{\ref{proof:crp:pi':3}, \ref{proof:crp:pi':4}, \ref{proof:crp:pi':7}, \ref{proof:crp:pi':8}, \ref{proof:crp:pi':10}a, \ref{proof:crp:pi':11}a}$, 
\item
no atom in $\sigma_1 \setminus \sigma_2$ has a strictly positive occurrence in $\Pi_2$, and 
\item
each strongly connected component of the dependency graph of $\Pi_1$ w.r.t. $\sigma_1$ is a subset of $\sigma_2$ or $\sigma_1 \setminus \sigma_2$, 
\end{enumerate}
Since $S$ is an answer set of $\Pi_2$ relative to $\sigma_2$, and it's easy to check that $S$ is an answer set of $(H'_{\Pi})_{\ref{proof:crp:pi':3}, \ref{proof:crp:pi':4}, \ref{proof:crp:pi':7}, \ref{proof:crp:pi':8}, \ref{proof:crp:pi':10}a, \ref{proof:crp:pi':11}a}$ relative to $\sigma_1 \setminus \sigma_2$, $S$ is an answer set of $\Pi_1$.


Third, since $S$ satisfies rules \eqref{proof:crp:pi':5}, \eqref{proof:crp:pi':9}, (\ref{proof:crp:pi':12}a), (\ref{proof:crp:pi':13}a), by Lemma \ref{lem:1} (d), $S$ is an answer set of $H'_{\Pi}$.

\end{itemize}
\end{proof}

\section{Proof of Theorem \ref{thm:crp2asp} } \label{sec:proof:thm:crp2asp}

We first review some definitions. Let $\Pi$ be a $\crpt$ program.
Let $S$ be an optimal answer set of ${\sf crp2asp}(\Pi)$. 
Let $x_1, \dots, x_m$ be a list of integers such that $x_i \in D_i$.
If $S \vDash ap(x_1, \dots, x_m)$, we define the set $shrink(S, x_1, \dots, x_m)$ as a {\em generalized answer set on $\sigma$} of ${\sf crp2asp}(\Pi)$; if $S \vDash candidate(x_1, \dots, x_m)$, we define the set $shrink(S, x_1, \dots, x_m)$ as a {\em candidate answer set on $\sigma$} of ${\sf crp2asp}(\Pi)$; if $S \vDash pAS(x_1, \dots, x_m)$, we define the set $shrink(S, x_1, \dots, x_m)$ as a {\em preferred answer set on $\sigma$} of ${\sf crp2asp}(\Pi)$. 

\bigskip
\noindent{\bf Theorem~\ref{thm:crp2asp}}\ \ 
{\sl
For any $\crpt$ program $\Pi$ of signature $\sigma$,
\begin{itemize}
\item[{\bf (a)}]  The projections of the generalized answer sets of $\Pi$ onto $\sigma$ are exactly the generalized answer sets on $\sigma$ of ${\sf crp2asp}(\Pi)$.
\item[{\bf (b)}]  The projections of the candidate answer sets of $\Pi$ onto $\sigma$ are exactly the candidate answer sets on $\sigma$ of ${\sf crp2asp}(\Pi)$.
\item[{\bf (c)}]  The preferred answer sets of $\Pi$ are exactly the preferred answer sets on $\sigma$ of ${\sf crp2asp}(\Pi)$.
\end{itemize}
}

\medskip

\begin{proof}
\noindent{\bf (a): } Let $\Pi$ be a $\crpt$ program of signature $\sigma$. By Proposition \ref{prop:crp:SP}, it is sufficient to prove that the projections (onto $\sigma$) of the answer sets of all assumption programs $AP_{\Pi}(x_1,\dots,x_m)$ of $\Pi$ are exactly the generalized answer sets on $\sigma$ of ${\sf crp2asp}(\Pi)$ such that
\begin{itemize}
\item
for any answer set $S$ of any $AP_{\Pi}(x_1,\dots,x_m)$, there exists an optimal answer set $S'$ of ${\sf crp2asp}(\Pi)$ such that $S' \vDash ap(x_1,\dots,x_m)$ and $S_{\sigma} = shrink(S',x_1,\dots,x_m)$;
\item
for any generalized answer set on $\sigma$, $shrink(S',x_1,\dots,x_m)$, of ${\sf crp2asp}(\Pi)$ (where $S'$ is an optimal answer set of ${\sf crp2asp}(\Pi)$ and $S' \vDash ap(x_1,\dots,x_m)$),  there exists an answer set $S$ of $AP_{\Pi}(x_1,\dots,x_m)$ such that $S_{\sigma} = shrink(S',x_1,\dots,x_m)$.
\end{itemize}

Let ${\sf crp2asp}(\Pi) = \Pi_{base} \cup \Pi_{pref}$, where $\Pi_{pref}$ is the set of rules translated from rules \eqref{crp:4:atom}, \eqref{crp:4:rule:5}, \eqref{crp:5}, \eqref{crp:6}, \eqref{crp:7}. We use Lemma \ref{lem:2} to prove that there is a 1-1 correspondence between the answer sets of ${\sf crp2asp}(\Pi)$ and the answer sets of $\Pi_{base}$, while an answer set of ${\sf crp2asp}(\Pi)$ agrees with the corresponding answer set of $\Pi_{base}$ on the signature of $\Pi_{base}$.
Let's take $\Pi_{base}$ as our current program, $\Pi_{cur}$, and consider including the translation rules in $\Pi_{pref}$ into $\Pi_{cur}$. If we include rules \eqref{crp:4:atom} and \eqref{crp:4:rule:5}, by Lemma \ref{lem:2}, 
there is a 1-1 correspondence between the answer sets of $\Pi_{cur}$ and the answer sets of $\Pi_{base}$. Similarly, we can include rules \eqref{crp:5}, \eqref{crp:6}, \eqref{crp:7} in order into $\Pi_{cur}$, and find that there is a 1-1 correspondence between the answer sets of $\Pi_{base} \cup \Pi_{pref}$ and the answer sets of $\Pi_{base}$, while an answer set of $\Pi_{base} \cup \Pi_{pref}$ agrees with the corresponding answer set of $\Pi_{base}$ on the signature of $\Pi_{base}$.
Since the predicates introduced by $\Pi_{pref}$ are not in $\sigma$, it is sufficient to prove that the projections of the answer sets of all assumption programs $AP_{\Pi}(x_1,\dots,x_m)$ of $\Pi$ onto $\sigma$ are exactly the generalized answer sets on $\sigma$ of $\Pi_{base}$.

According to the translation, the empty set is always an answer set of $\Pi_{base}$, thus there must exist at least one optimal answer set of $\Pi_{base}$. Furthermore, by rule \eqref{crp:1:wc}, the optimal answer set should contain as many $ap(*)$ as possible. 
Let $gr(\Pi_{base}, x_1,\dots, x_m)$ be a partial grounded program obtained from $\Pi_{base}$ by replacing variables $X_1,\dots, X_m$ with $x_1,\dots, x_m$. Since each partial grounded program is disjoint from each other, by the splitting theorem, it is sufficient to prove a 1-1 correspondence $\phi$ between the answer sets of $AP_{\Pi}(x_1, \dots, x_m)$ and the optimal answer sets of $gr(\Pi_{base}, x_1,\dots, x_m)$ such that
\begin{itemize}
\item  [\bf{(a.1)}]
For any answer set $S$ of $AP_{\Pi}(x_1, \dots, x_m)$, $\phi(S) =   \{ a({\bf v}, x_1, \dots, x_m) \mid a({\bf v}) \in S \} \cup \{ ap(x_1, \dots, x_m) \} $ is an optimal answer set of $gr(\Pi_{base}, x_1,\dots, x_m)$.
\item  [\bf{(a.2)}]
For any optimal answer set $S'$ of $gr(\Pi_{base}, x_1,\dots, x_m)$, if $S' \not \vDash ap(x_1, \dots, x_m)$, then $AP_{\Pi}(x_1, \dots, x_m)$ has no answer set; if $S' \vDash ap(x_1, \dots, x_m)$, then 
$$S = \{ a({\bf v}) \mid a({\bf v}, x_1, \dots, x_m) \in S' \} \setminus \{ sp \} $$ 
is an answer set of $AP_{\Pi}(x_1, \dots, x_m)$.
\end{itemize}

To prove bullet {\bf (a.1)}, let $S$ be an answer set of $AP_{\Pi}(x_1, \dots, x_m)$, and let $\phi(S)$ be $\{ a({\bf v}, x_1, \dots, x_m) \mid a({\bf v}) \in S \} \cup \{ ap(x_1, \dots, x_m) \} $. Since $\phi(S)$ satisfies $ap(x_1, \dots, x_m)$, which is the only $ap(*)$ in $gr(\Pi_{base}, x_1,\dots, x_m)$, if we prove $\phi(S)$ is an answer set of $gr(\Pi_{base}, x_1,\dots, x_m)$, $\phi(S)$ must be an optimal answer set of $gr(\Pi_{base}, x_1,\dots, x_m)$. 
Note that, if we ignore the suffix $x_1,\dots,x_m$ in the reduct of $gr(\Pi_{base}, x_1,\dots, x_m)$ relative to $\phi(S)$, it is almost the same as the reduct of $AP_{\Pi}(x_1, \dots, x_m)$ relative to $S$ except that the former has one more atom $sp$. Since $S$ is a minimal model of the reduct of $AP_{\Pi}(x_1, \dots, x_m)$ relative to $S$, and $\phi(S) \vDash ap(x_1,\dots,x_m)$, $\phi(S)$ is a minimal model of the reduct of $gr(\Pi_{base}, x_1,\dots, x_m)$ relative to $\phi(S)$. Thus $\phi(S)$ is an answer set of $gr(\Pi_{base}, x_1,\dots, x_m)$.

To prove bullet {\bf (a.2)}, let $S'$ be an optimal answer set of $gr(\Pi_{base}, x_1,\dots, x_m)$. There are 2 cases as follows.
\begin{enumerate}
\item
$ap(x_1, \dots, x_m) \not \in S'$. We will prove $AP_{\Pi}(x_1, \dots, x_m)$ has no answer set. Assume for the sake of contradiction that there exists an answer set $S$ of $AP_{\Pi}(x_1, \dots, x_m)$, by the bullet {\bf (a.1)} that we just proved, $\phi(S)$ is an optimal answer set of $gr(\Pi_{base}, x_1,\dots, x_m)$. Since $\phi(S) \vDash ap(x_1, \dots, x_m)$, by rule (\ref{crp:1:wc}), it has lower penalty than $S'$, thus $S'$ is not an optimal answer set, which is not the case. So $AP_{\Pi}(x_1, \dots, x_m)$ has no answer set.

\item
$ap(x_1, \dots, x_m) \in S'$. 
Since $S'$ is a minimal model of the reduct of $gr(\Pi_{base}, x_1,\dots, x_m)$, if we remove all occurrence of $ap(x_1,\dots,x_m)$ and $x_1,\dots,x_m$ in both $S'$ and the reduct of $gr(\Pi_{base}, x_1,\dots, x_m)$ relative to $S'$, the set of atoms $S= \{ a({\bf v}) \mid a({\bf v}, x_1, \dots, x_m) \in S' \} \setminus \{ sp \} $ should be a minimal model of the new program, which is the reduct of $AP_{\Pi}(x_1,\dots,x_m)$.
Thus $S$ is an answer set of $AP_{\Pi}(x_1,\dots,x_m)$.
\end{enumerate}

\medskip
\noindent{\bf (b): } To prove Theorem \ref{thm:crp2asp} {\bf (b)}, it is sufficient to prove
\begin{itemize}
\item [\bf{(b.1)}]
for any candidate answer set $S$ of $\Pi$, there exist an optimal answer set $S'$ of ${\sf crp2asp}(\Pi)$ and a list $x_1, \dots, x_m$ such that $S' \vDash candidate(x_1, \dots, x_m)$, and $S_{\sigma} = shrink(S', x_1, \dots, x_m)$;
\item [\bf{(b.2)}]
for any optimal answer set $S'$ of ${\sf crp2asp}(\Pi)$, if $S' \vDash candidate(x_1, \dots, x_m)$, there exists a candidate answer set $S$ of $\Pi$ such that $S_{\sigma} = shrink(S', x_1, \dots, x_m)$.
\end{itemize}

Let $\Pi$ be a $\crpt$ program with signature $\sigma$; $\Pi'$ be its translation ${\sf crp2asp}(\Pi)$. 

To prove bullet {\bf(b.1)}, let $S$ be a candidate answer set of $\Pi$, then by the semantics of $\crpt$, $S$ must be a generalized answer set of $\Pi$. 
We obtain $x_1, \dots, x_m$ such that, 
\begin{itemize}
\item
for $1\leq i \leq k$:
$x_i = 0$ if $S\not \vDash \i{appl}(i)$,

\hspace{2.1cm}$x_i = 1$ if $S \vDash \i{appl}(i)$;
\item
for $k+1\leq i \leq l$:
$x_i = 0$ if $S\not \vDash \i{appl}(i)$,

\hspace{2.7cm}$x_i = j$ if $S \vDash \i{appl}(i)$ and $S \vDash \i{appl}(choice(i,j))$,

\hspace{2.7cm}$x_i = 1$ if $S \vDash \i{appl}(i)$ and $S \not \vDash \i{appl}(choice(i,j))$ for any $j$;
\item
for $l+1\leq i \leq m$: 
$x_i = j$ if $S \vDash \i{appl}(choice(i,j))$,

\hspace{2.8cm}$x_i = 1$ if $S \not \vDash \i{appl}(choice(i,j))$ for any $j$.
\end{itemize}
Note that the signature of $AP_{\Pi}(x_1,\dots,x_m)$ is $\sigma' = \sigma \cup atoms(AP_{\Pi}(x_1,\dots,x_m), \{\i{isPreferred}\})$. 
As we proved in the proof of Proposition \ref{prop:crp:SP}, $S$ is an answer set of $AP_{\Pi}(x_1,\dots,x_m)$ with respect to $\sigma'$. Then $S_{\sigma'}$ is an answer set of $AP_{\Pi}(x_1,\dots,x_m)$. By the first bullet in the proof for Theorem \ref{thm:crp2asp} (a), $\phi(S_{\sigma'}) = \{ a({\bf v}, x_1, \dots, x_m) \mid a({\bf v}) \in S_{\sigma'} \} \cup \{ ap(x_1, \dots, x_m) \}$ is an optimal answer set of $gr(\Pi_{base}, x_1,\dots, x_m)$. Then there exists an optimal answer set $S'$ of $\Pi'$ such that $S'\vDash ap(x_1, \dots, x_m)$ and $S_{\sigma} = shrink(S', x_1, \dots, x_m)$.

Then, it suffices to proving $S' \vDash candidate(x_1,\dots,x_m)$. Assume for the sake of contradiction that $S' \not \vDash candidate(x_1,\dots,x_m)$.

\begin{itemize}
\item
$S' \not \vDash candidate(x_1,\dots,x_m)$
\end{itemize}
iff (by rule \eqref{crp:5})
\begin{itemize}
\item
there exists an $AP$ such that
$S' \vDash dominate(AP, ap(x_1,\dots,x_m))$
\end{itemize}
iff (by rule \eqref{crp:4:atom} and \eqref{crp:4:rule:5})
\begin{itemize}
\item
there exist $i \in \{ k+1,\dots,m \}$ and a list $x_1',\dots,x_m'$  such that
$S' \vDash ap(x_1',\dots,x_m')$,  
$0<x_i'$, and $x_i' < x_i$, or 
\item
there exist $r_1,r_2 \in \{ 1,\dots,l \}$ and a list $x_1',\dots,x_m'$ such that
$S' \vDash ap(x_1',\dots,x_m')$,  $S' \vDash isPreferred(r_1,r_2, x_1',\dots,x_m')$,  $S' \vDash isPreferred(r_1,r_2, x_1,\dots,x_m)$, $x_{r_1}' >0$, and $x_{r_2} >0$
\end{itemize}
iff (by the first 2 bullets in the proof for Theorem \ref{thm:crp2asp} (a) and by the assignments of $x_i$)
\begin{itemize}
\item
there exists $i \in \{ k+1,\dots,m \}$, a generalized answer set $A$, and $x_i, x_i' \in \{ 1,\dots,n_i \}$ such that $A\vDash appl(choice(i,x_i'))$, $S\vDash appl(choice(i,x_i))$, and $x_i' < x_i$
\item
there exist $r_1,r_2 \in \{ 1,\dots,l \}$, and a generalized answer set $A$ such that $A\vDash isPreferred(r_1,r_2)$, $S\vDash isPreferred(r_1,r_2)$, $A\vDash appl(r_1)$, and $S\vDash appl(r_2)$
\end{itemize}
iff (by the definition of dominate)
\begin{itemize}
\item
there exists a generalized answer set $A$ that dominates $S$
\end{itemize}
which contradicts with the fact that $S$ is a candidate answer set. Thus $S' \vDash candidate(x_1,\dots,x_m)$ and $S_{\sigma} = shrink(S', x_1, \dots, x_m)$.

To prove bullet {\bf(b.2)}, let $S'$ be an optimal answer set of $\Pi'$ and $S' \vDash candidate(x_1,\dots,x_m)$ for some list $x_1\dots,x_m$. By rule \eqref{crp:5}, $S' \vDash ap(x_1,\dots,x_m)$. Then by bullet (a), there exists a generalized answer set $S$ of $\Pi$ such that $S_{\sigma} = shrink(S', x_1,\dots, x_m)$. Then it is sufficient to prove $S$ is a candidate answer set of $\Pi$.

Assume for the sake of contradiction that $S$ is not a candidate answer set of $\Pi$, then there must exists a generalized answer set $A$ that dominates $S$. By the ``iff'' statements above, we can derive $S' \not \vDash candidate(x_1,\dots,x_m)$, which leads to a contradiction.

\medskip
\noindent{\bf (c): } 
Let $\Pi$ be a $\crpt$ program with signature $\sigma$; $\Pi'$ be its translation ${\sf crp2asp}(\Pi)$. 
To prove Theorem \ref{thm:crp2asp} (c), it is sufficient to prove 
\begin{itemize}
\item [{\bf(c.1)}]
for any preferred answer set $S$ of $\Pi$, there exists an optimal answer set $S'$ of $\Pi'$ such that $S'\vDash pAS(x_1,\dots,x_m)$ for some $x_1,\dots,x_m$, and $S_{\sigma} = shrink(S', x_1,\dots,x_m)$
\item [{\bf(c.2)}]
for any optimal answer set $S'$ of $\Pi'$, if $S'\vDash pAS(x_1,\dots,x_m)$ for some $x_1,\dots,x_m$, there exists a preferred answer set $S$ of $\Pi$ such that $S_{\sigma} = shrink(S', x_1,\dots,x_m)$.
\end{itemize}

To prove bullet {\bf(c.1)}, let $S$ be a preferred answer set of $\Pi$, then $S$ must be a candidate answer set of $\Pi$. By Theorem \ref{thm:crp2asp} (b), there exists an optimal answer set $S'$ of $\Pi'$ and a list $x_1,\dots,x_m$ such that $S'\vDash candidate(x_1,\dots,x_m)$ and $S_{\sigma} = shrink(S', x_1,\dots,x_m)$. Then it is sufficient to prove $S' \vDash pAS(x_1,\dots,x_m)$. 

Assume for the sake of contradiction that $S' \not \vDash pAS(x_1,\dots,x_m)$.
\begin{itemize}
\item
$S' \not \vDash pAS(x_1,\dots,x_m)$
\end{itemize}
iff (since $S' \vDash candidate(x_1,\dots,x_m)$, and by rule \eqref{crp:7}) 
\begin{itemize}
\item
there exists a $AP$ such that $S' \vDash lessCrRulesApplied(AP, ap(x_1, \dots, x_m))$
\end{itemize} w
iff (by rule \eqref{crp:6})
\begin{itemize}
\item
there exist a list $x_1',\dots,x_m'$ such that $S' \vDash candidate(x_1',\dots,x_m')$, $x_i' \leq x_i$ for $1\leq i \leq m$, and there exists a $j$ such that $x_j' < x_j$
\end{itemize}
iff (since $S'\not\vDash dominate(ap(x_1',\dots,x_m'), ap(x_1,\dots,x_m))$, by rule \eqref{crp:4:atom})
\begin{itemize}
\item
there exist a list $x_1',\dots,x_m'$ such that $S' \vDash candidate(x_1',\dots,x_m')$, $x_i' \leq x_i$ for $1\leq i \leq m$, there exists a $j$ such that $x_j' < x_j$, and for any $x_i' < x_i$, $x_i'=0$
\end{itemize} 
iff (by the assignments of $x_i$)
\begin{itemize}
\item
there exist a candidate answer set $A$ such that the atoms of the form $appl(*)$ in $A$ is a proper subset of those in $S$
\end{itemize} 
which contradicts with the fact that $S$ is a preferred answer set.

To prove bullet {\bf(c.2)}, let $S'$ be an optimal answer set of $\Pi'$ and $S' \vDash pAS(x_1,\dots,x_m)$ for some list $x_1,\dots,x_m$. By rules \eqref{crp:7} and \eqref{crp:5}, $S' \vDash candidate(x_1,\dots,x_m)$ and $S' \vDash ap(x_1,\dots,x_m)$. Then by Theorem \ref{thm:crp2asp} (b), there exists a candidate answer set $S$ of $\Pi$ such that $S_{\sigma} = shrink(S', x_1,\dots, x_m)$. Then it is sufficient to prove $S$ is a preferred answer set of $\Pi$. 

Assume for the sake of contradiction that $S$ is not a preferred answer set of $\Pi$, then there must exists a candidate answer set $A$ such that the atoms of the form $appl(*)$ in $A$ is a proper subset of those in $S$.
By the ``iff'' statements above, we can derive $S' \not \vDash pAS(x_1,\dots,x_m)$, which leads to a contradiction.
\end{proof}

\end{appendix}

\end{document}